\documentclass{article} % For LaTeX2e
\usepackage{iclr2026_conference,times}

\usepackage{amsmath,amsfonts,bm}

\def\eqref#1{equation~\ref{#1}}
\def\1{\bm{1}}

\DeclareMathAlphabet{\mathsfit}{\encodingdefault}{\sfdefault}{m}{sl}
\SetMathAlphabet{\mathsfit}{bold}{\encodingdefault}{\sfdefault}{bx}{n}

\newcommand{\E}{\mathbb{E}}

\newcommand{\R}{\mathbb{R}}

\DeclareMathOperator*{\argmax}{arg\,max}

\definecolor{mydarkblue}{rgb}{0,0.08,0.45} % from ICML template
\usepackage[colorlinks=true, allcolors=mydarkblue]{hyperref}

\usepackage{amsmath}
\usepackage{amssymb}
\usepackage{mathtools}
\usepackage{amsthm}
\usepackage{multirow}
\usepackage{booktabs}
\usepackage{subcaption}

\usepackage{apptools}
\AtAppendix{\counterwithin{theorem}{section}}
\AtAppendix{\counterwithin{lemma}{section}}
\AtAppendix{\counterwithin{corollary}{section}}
\AtAppendix{\counterwithin{equation}{section}}
\AtAppendix{\counterwithin{definition}{section}}
\AtAppendix{\counterwithin{claim}{section}}

\newtheorem{theorem}{Theorem}

\newtheorem{lemma}{Lemma}

\newtheorem{assumption}{Assumption}
\newtheorem{remark}{Remark}

\allowdisplaybreaks[1]

\usepackage{algorithm}
\usepackage{algpseudocode}

\hypersetup{bookmarksnumbered=true, bookmarksopen=true}

\usepackage{url}

\title{Understanding and Improving Continuous Adversarial Training for LLMs via In-context Learning Theory}

\author{Shaopeng Fu \& Di Wang \\
Provable Responsible AI and Data Analytics (PRADA) Lab\\ 
King Abdullah University of Science and Technology, Saudi Arabia \\
\texttt{\{shaopeng.fu, di.wang\}@kaust.edu.sa}
}

\renewcommand{\E}{\mathop{\mathbb{E}}}

\iclrfinalcopy % Uncomment for camera-ready version, but NOT for submission.

\begin{document}

\maketitle

\begin{abstract}
Adversarial training (AT) is an effective defense for large language models (LLMs) against jailbreak attacks, but performing AT on LLMs is costly.
To improve the efficiency of AT for LLMs, recent studies propose continuous AT (CAT) that searches for adversarial inputs within the continuous embedding space of LLMs during AT.
While CAT has achieved empirical success, its underlying mechanism, \textit{i.e.}, why adversarial perturbations in the embedding space can help LLMs defend against jailbreak prompts synthesized in the input token space, remains unknown.
This paper presents the first theoretical analysis of CAT on LLMs based on in-context learning (ICL) theory.
For linear transformers trained with adversarial examples from the embedding space on in-context linear regression tasks, we prove a robust generalization bound that has a negative correlation with the perturbation radius in the embedding space.
This clearly explains why CAT can defend against jailbreak prompts from the LLM's token space.
Further, the robust bound shows that the robustness of an adversarially trained LLM is closely related to the singular values of its embedding matrix.
Based on this, we propose to improve LLM CAT by introducing an additional regularization term, which depends on singular values of the LLM's embedding matrix, into the objective function of CAT.
Experiments on real-world LLMs demonstrate that our method can help LLMs achieve a better jailbreak robustness-utility tradeoff.
The code is available at \url{https://github.com/fshp971/continuous-adv-icl}.
\end{abstract}

\section{Introduction}

While large language models (LLMs) are increasingly adopted in various real-world applications, their safety is found to be compromised by jailbreak attacks~\citep{wei2023jailbroken}.
By feeding jailbreak prompts, which are specially constructed harmful instructions, one can ``jailbreak'' safety-aligned targeted LLMs to induce harmful behaviors in them.
To ensure the robustness of LLMs against jailbreak attacks, one of the most effective defenses is adversarial training (AT)~\citep{mazeika2024harmbench,fu2025short}, which trains LLMs on synthesized jailbreak prompts to help them better recognize and refuse these harmful inputs.
However, the synthesis of jailbreak prompts during AT usually requires solving discrete optimization problems and is thus computation-consuming~\citep{zou2023universal,chao2023jailbreaking}, which restricts the use of AT for LLMs in practice.

To improve the efficiency of AT for LLMs, recent studies introduce \textit{continuous AT (CAT)}~\citep{xhonneux2024efficient,sheshadri2024latent,arditi2024refusal,dekany2025mixat}, which performs AT on LLMs with adversarial inputs synthesized in the LLMs' \textit{continuous} token embedding space.
Compared with the vanilla AT for LLMs, CAT can use projected gradient descent (PGD;~\citealt{madry2018towards}) to search for adversarial examples in the embedding space, which is significantly faster than that in vanilla AT where one needs to perform a heuristic search to find jailbreak prompts in the input token space.
However, despite the empirical success of CAT, the reason behind it is still unknown. In fact, the training data in CAT and vanilla LLM AT are very different from each other: data in CAT are sequences of embedding vectors, while data in vanilla AT are sequences of token indices.
This raises the research question:
\textbf{\textit{Why can adversarial perturbations in the embedding space help LLMs learn to defend against jailbreak prompts from the original input token space?}}
% \textbf{\textit{Why can adversarial perturbations in the embedding space help LLMs learn to defend against jailbreak prompts from the  token space?}}
% \begin{quote}
% \centering
% \textit{Why can adversarial perturbations in the embedding space help LLMs learn to defend against jailbreak prompts from the original input token space?}
% \end{quote}
% \begin{quote}
% \centering
% % \it Why can adversarial data in the \textbf{continuous} token embedding space help LLMs learn to defend against \textbf{discrete} jailbreak prompts?
% {\color{blue} \it Why can adversarial data in the token embedding space help LLMs learn to defend against jailbreak prompts in the original token space?}
% \end{quote}

In light of recent advances in understanding LLM jailbreak robustness via in-context learning (ICL) theory~\citep{fu2025short,kumano2025adversarially,anwar2025understanding}, this paper presents the first theoretical analysis of LLM CAT, also based on ICL theory.
We rigorously study how AT helps improve the robustness of linear transformers trained on linear regression tasks against in-context suffix adversarial attacks.
To simulate the embedding space adversarial perturbation process in CAT, we introduce an additional trainable embedding matrix into linear transformers and perform adversarial perturbations on ICL input embeddings obtained from this embedding matrix, rather than on the vanilla ICL inputs.
Under our new \textit{ICL embedding AT} theoretical framework, we prove a robust generalization upper bound for linear transformers trained via ICL embedding AT.
This robust bound has a negative correlation with the embedding space adversarial perturbation radius during ICL embedding AT, which clearly explains why adversarial perturbations in the embedding space can help LLMs defend against jailbreak prompts from the original input space.

Besides, our robust generalization upper bound is closely related to the singular values of the embedding matrix in linear transformers.
Specifically, it suggests that an adversarially trained linear transformer with an embedding matrix that has \textbf{``not too large nor too small singular values''} would enjoy a small robust upper bound and thus strong adversarial robustness.
Based on this finding, we further propose \textit{\textbf{Embedding Regularized Continuous AT (ER-CAT)}}, a new LLM AT approach improved from CAT by introducing the variance of the embedding matrix singular values as an additional regularization term into the objective function of the original CAT.
The motivation is to simultaneously reduce large singular values and increase small singular values, thereby helping to improve the robustness of LLMs according to our proven theory.
To verify ER-CAT, we conducted experiments on six real-world LLMs and six common jailbreak attacks.
Results show that compared with original CAT, ER-CAT helps LLMs achieve a better jailbreak robustness-utility tradeoff.

\section{Related Works}
\label{sec:related-works}

\textbf{Jailbreak attacks.}
Jailbreak prompts~\citep{wei2023jailbroken} are adversarial examples~\citep{szegedy2014intriguing,goodfellow2015explaining} that can induce targeted unsafe behaviors from LLMs.
Existing jailbreak attacks include \textit{token-level} and \textit{prompt-level} attacks.
Token-level attacks synthesize jailbreak prompts via modifying/inserting tokens in prompts with heuristic methods~\citep{zou2023universal,sadasivan2024fast,hayase2024querybased,zhu2024autodan,paulus2025advprompter,jin2024jailbreaking,liao2024amplegcg,andriushchenko2025jailbreaking}, while prompt-level attacks use prompts crafted by humans~\citep{wei2023jailbroken,li2023deepinception,shen2024anything,xu2024an} or LLM-based agents~\citep{chao2023jailbreaking,liu2024autodan,liu2024turbo,sabbaghi2025adversarial,zhang2025black} to jailbreak LLMs.
Though the synthesis of jailbreak prompts in token-level attacks and prompt-level attacks is different from each other, CAT has been shown to be able to defend against both types of attacks.

\textbf{LLMs adversarial training (AT).}
To tackle jailbreak attacks, an effective method is to align LLMs via AT~\citep{madry2018towards} to help them better recognize and refuse harmful inputs.
A standard LLM AT aims to solve a minimax problem that minimizes the training loss on most adversarial jailbreak prompts~\citep{mazeika2024harmbench}.
However, searching for jailbreak prompts in the discrete token space of LLMs is resource-intensive, limiting the broader application of LLM AT.
More recent studies propose \textit{continuous} AT~(CAT) for LLMs, in which adversarial examples are embedding vectors obtained from adversarially perturbing token embeddings of the original prompts~\citep{xhonneux2024efficient,casper2024defending,sheshadri2024latent,yu2025robust}.
Such a perturbation process can be efficiently performed with gradient-based optimizations, which thus significantly reduces the computational burden of LLM AT.
\citet{dekany2025mixat} adopt both jailbreak prompts and perturbed prompt embeddings as adversarial examples to further improve the performance of LLM AT.

\textbf{In-context learning (ICL) theory.}
ICL theory aims to understand how transformer-based LLMs can make decisions for different task-specific sequential context inputs (i.e., "prompts") without adjusting model parameters.
Existing ICL works have proven that one can construct explicit transformers layer by layer to mimic the process of learning a variety of function classes~\citep{garg2022can,von2023transformers,ahn2023transformers,chen2024aligning,wang2024benign,mahankali2024one,li2025robustness}.
Efforts have also been made to analyze how one-layer transformers can learn ICL prediction abilities from massive task-specific contextual data~\citep{lu2024asymptotic,magen2024benign,frei2025trained,shi2024why,zhang2024trained,yang2024incontext,huang2023context,wu2024how,lin2024transformers,lu2025asymptotic}.
More recent studies have leveraged ICL theory to analyze the adversarial robustness of LLMs.
\citet{anwar2025understanding} find that by only perturbing a single in-context sample in the entire real space, one can manipulate the ICL prediction of transformers to arbitrary results.
\citet{fu2025short} study AT of ICL models under more restricted and realistic ICL adversarial attacks, where each ICL sample can only be perturbed within a restricted space.
They prove that AT on contextual data with a very small number of perturbed in-context samples can already help trained transformers achieve strong robustness.
\citet{kumano2025adversarially} show that adversarially pretrained transformers focus more on robust features~\citep{ilyas2019adversarial} and thus can generalize to downstream tasks robustly without additional AT.
Our analysis mainly stems from \citet{fu2025short}, with the goal of explaining the robust generalization ability of continuous AT.

\section{Preliminaries}

\textbf{Large language models (LLMs)}.
An LLM is a function that maps sequential inputs to sequential outputs based on a parameterized distribution $p_\theta$.
Let $x \in \mathcal V^{|x|}$ be an input prompt of length $|x|$, where $\mathcal V := \{1,\cdots,|\mathcal V|\}$ is the token space.
The probability that the LLM generates a response $y \in \mathcal V^{|y|}$ of length $|y|$ is: $p_{\theta}(y | x) = \prod_{i=1}^{|y|} p_{\theta}(y_i | x \oplus y_{1:(i-1)})$, where “$\oplus$” denotes concatenation.

\textbf{Jailbreak attacks.}
Current jailbreak attacks can be divided into \textit{token-level} and \textit{prompt-level} attacks.
Given two token sequences $x^{(h)}$ and $y^{(h)}$, where $x^{(h)}$ is a harmful instruction and $y^{(h)}$ is a targeted harmful response, a standard token-level attack concatenates a synthesized adversarial suffix $x^{(s)}$ to the prompt $x^{(h)}$ to form a jailbreak prompt that increases the probability of the LLM in generating $y^{(h)}$.
The synthesis of the suffix $x^{(s)}$ can be formalized as solving the below problem,
\begin{align}
    \min_{x^{(s)} \in \mathcal V^{|x^{(s)}|}} -\log p_{\theta}(y^{(h)} | x^{(h)} \oplus x^{(s)}).
    \label{eq:jailbreak-obj-token-level}
\end{align}
Meanwhile, a prompt-level attack uses an attack oracle $\mathcal A$, which can be human experts or AI agents, to directly rewrite the original harmful prompt $x^{(h)}$ to a jailbreak one $\hat x^{(h)}$, as shown below,
\begin{align}
    \min_{\hat x^{(h)} \sim \mathcal A(x^{(h)})} -\log p_{\theta}(y^{(h)} | \hat x^{(h)} ).
    \label{eq:jailbreak-obj-sentence-level}
\end{align}
It should be noted that performing both token-level and prompt-level attacks is resource-consuming,
as solving Eq.~(\ref{eq:jailbreak-obj-token-level}) requires sophisticated discrete optimizations,
while solving Eq.~(\ref{eq:jailbreak-obj-sentence-level}) requires human annotation or additional computing resources for AI agents to perform inference.

\textbf{Continuous AT for LLMs}.  
Let $D^{(h)}$ be a {\it safety dataset}, where each sample $(x, y, \tilde y)$ consists of a harmful prompt $x$, a targeted harmful response $y$, and a \textit{safe} reference response $\tilde y$.  
Additionally, let $D^{(u)}$ be a {\it utility dataset}, where each sample $(x, y)$ consists of a pair of a normal instruction and its answer.  
Then, \citet{mazeika2024harmbench} formalize the first AT algorithm for LLMs as solving the following optimization problem:
\begin{align}
    \min_\theta \Bigl\{ - \E_{(x,y,\tilde y) \in D^{(h)}} 
    \Bigl[
        \underbrace{
        \log p_{\theta}(\tilde y |\hat x^*)
        + \log (1 - p_{\theta}(y|\hat x^*))
        }_{\text{Adversarial Loss}}
    \Bigr]
    - \E_{(x, y) \in D^{(u)}} \underbrace{ \log p_{\theta}(y|x) }_{\text{Utility Loss}}
    \Bigr\},
    \label{eq:at-obj}
\end{align}
where $\hat x^* = \argmax_{\hat x \in \mathcal B(x,y)} \log p_{\theta}(y|\hat x)$ is the jailbreak prompt for the harmful input-output pair $(x,y)$ from the safety set and $\mathcal B(x,y)$ denotes the search space of jailbreak prompts.
In Eq.~(\ref{eq:at-obj}), the adversarial loss helps LLMs learn to respond harmlessly even when the most adversarial jailbreak prompts are present, while the utility loss helps retain the utility of pre-trained LLMs.  
As explained before, finding strong jailbreak prompts $\hat x \in \mathcal B(x,y)$ from the search space determined by the used attacks is costly, which limits the efficiency of LLM AT.

More recent studies suggest using continuous AT~(CAT) to reduce the computational overhead of vanilla LLM AT.
Let $\mathcal{E}(\cdot)$ be the embedding function of the LLM $f_\theta$, with an embedding matrix $W^E \in \R^{d \times |\mathcal V|}$ as its parameter, where $W^E_{:,v} \in \R^d$ is the embedding vector for the token $v \in \mathcal{V}$.
For any token sequence $x \in \mathcal V^{|x|}$, $\mathcal{E}(\cdot)$ maps it to its embedding sequence as $\mathcal{E}(x) := (W^E_{:,x_1} \ \cdots \ W^E_{:,x_{|x|}}) \in \R^{d\times |x|}$.
CAT is then formalized as solving the below problem~\citep{xhonneux2024efficient},
\begin{align}
    \min_\theta \Bigl\{
        - \alpha \cdot \E_{(x,y,\tilde y) \in D^{(h)}} { \log \frac{p_{\theta}(\tilde y | \mathcal{E}(x) + \delta^*)}{p_{\theta}(y| \mathcal{E}(x) + \delta^*)} }
        - \E_{(x, y) \in D^{(u)}} \log p_{\theta}(y|x)
    \Bigr\},
    \label{eq:cont-at-obj}
\end{align}
where $\delta^* = [ \argmax_{\|\delta_1\|_2,\cdots,\|\delta_{|x|}\|_2 \leq \epsilon}\log p_{\theta}(y | \mathcal{E}(x) + \delta) ] \in \R^{d \times |x|}$
is the \textit{most} adversarial perturbation $\delta^*$ for the harmful input-output pair $(x,y)$ from the safety set $D^{(h)}$,
$\epsilon > 0$ is the embedding space perturbation radius,
$\alpha > 0$ is a weight parameter,
and $(\mathcal E(x) + \delta^*) := ((\mathcal E(x_1) + \delta^*_1)\ \cdots\ (\mathcal E(x_{|x|} + \delta^*_{|x|}))) \in \R^{d \times |x|}$ is the perturbed harmful prompt embeddings.
The idea behind LLM CAT in Eq.~(\ref{eq:cont-at-obj}) is to adopt adversarially perturbed embedding sequences as adversarial examples rather than jailbreak prompts in the vanilla LLM AT in Eq.~(\ref{eq:at-obj}).
Additionally, the embedding space adversarial perturbation $\delta^*$ can usually be searched via the fast projected-gradient descent (PGD) method~\citep{madry2018towards}, which makes CAT efficient in practice.

\section{Theoretical Analysis for Continuous AT}

While LLM CAT has achieved empirical success~\citep{xhonneux2024efficient,casper2024defending,sheshadri2024latent,dekany2025mixat}, the mechanism behind it, \textit{i.e.}, \textbf{\textit{why adversarial perturbations in the embedding space help LLMs defend against jailbreak prompts from the token space}}, remains unclear.
This section tackles this research question
by conducting a theoretical analysis for CAT based on in-context learning theory~\citep{zhang2024trained,fu2025short,kumano2025adversarially}.

\textbf{In-context learning (ICL) theory.} In the ICL theory, a \textit{prompt} input of length $N$, specified by a \textit{task} indexed by $\tau$, is formalized as a sequence $(x_{\tau,1}, y_{\tau,2}, \cdots, x_{\tau,N}, y_{\tau,N}, x_{\tau,q})$, where the first $N$ labeled samples $\{(x_{\tau,i},y_{\tau,i})\}_{i=1}^N$ are task-specific in-context training samples, and the last item, $x_{\tau,q}$, is the query sample.
Then, the goal of an ICL model is to make a prediction for the query sample $x_{\tau,q}$ solely based on the $N$ in-context training data.

\textbf{ICL linear regression.}
Our analysis focuses on training ICL models on different linear regression tasks.
Suppose $\tau$ is a task index and $w_\tau\in \mathbb{R}^{d_0}$ is the corresponding task weight drawn from $w_{\tau} \sim \mathcal N(0, I_{d_0})$.
We assume that each ICL training point $x_{\tau,i} \ (1\leq i \leq N)$ and the query point $x_{\tau,q}$ are drawn from $x_{\tau,i}, x_{\tau,q} \sim \mathcal N(0, \Lambda)$ where $\Lambda \in \R^{d_0\times d_0}$ is the covariance matrix, and their labels are $y_{\tau,i} = w_\tau^\top x_{\tau,i}$ and $y_{\tau,q} = w_\tau^\top x_{\tau,q}$.
Then, the ICL input $Z_\tau$ specified by the task $\tau$ is given by
\begin{align}
    Z_\tau := \begin{pmatrix}
        x_{\tau,1} & \cdots & x_{\tau,N} & x_{\tau,q} \\
        y_{\tau,1} & \cdots & y_{\tau,N} & 0 \\
    \end{pmatrix} \in \mathbb{R}^{(d_0+1) \times (N+1)}.
    \label{eq:icl:input}
\end{align}

\textbf{Other notations.}
We denote $[n] := \{1, \cdots, n\}$ for any $n \in \mathbb{N}^+$.
For any $A \in \mathbb{R}^{n\times m}$, we denote
$\|A\|_{2,\infty} := \max_{1\leq i \leq m} \|A_{i,:}\|_2$,
$\|A\|_2$ be the operator norm,
and $\|A\|_F$ be the Frobenius norm.
Besides, $\lambda_i(A)$, $\lambda_{\max}(A)$, and $\lambda_{\min}(A)$ denote its $i$-th largest, largest, and smallest eigenvalues,
while $\sigma_i(A)$, $\sigma_{\max}(A)$, and $\sigma_{\min}(A)$ denote its $i$-th largest, largest, and smallest singular values.
Finally, we denote $\mathrm{Tr}(A) := \sum_{i=1}^n A_{i,i}$ for any $A \in \mathbb{R}^{n\times n}$.
We use standard Big O notation $\mathcal O(\cdot)$.

In the remainder, we will first establish an \textit{ICL embedding AT} problem for linear transformers and explain why it can approximate real-world LLM CAT under the ICL theoretical framework.
A robust generalization bound will then be proved for linear transformers trained from ICL embedding AT.
Based on this bound, we will explain why CAT can work and how to further improve CAT.

\subsection{ICL Adversarial Training in Embedding Space}

\textbf{Linear self-attention with embedding module (LSA-E).}
Linear self-attention (LSA) models are linear transformers that have been widely used for theoretical ICL analysis~\citep{zhang2024trained,shi2024why,frei2025trained}.  
However, the LSA model studied in the previous work does not have an input embedding module and thus cannot be naturally adopted for the analysis of LLM CAT, which requires performing adversarial perturbations in the input embedding space.  
To tackle this challenge, we design a novel \textit{\textbf{LSA}-with-\textbf{E}mbedding (LSA-E)} model to approximate real-world CAT in our theoretical analysis.
Specifically, let $\mathcal{E}(\cdot)$ be an embedding function that maps any ICL input $Z_\tau \in \R^{(d_0+1)\times (N+1)}$ to its ICL embedding matrix $\mathcal{E}(Z_\tau) \in \R^{(d+1)\times (N+1)}$ as follows:
\begin{align}
    \mathcal{E}(Z_\tau) := \begin{pmatrix}
        W^E x_{\tau,1} & \cdots & W^E x_{\tau,N} & W^E x_{\tau,q} \\
        y_{\tau,1} & \cdots & y_{\tau,N} & 0 \\
    \end{pmatrix}
    \in \R^{(d+1)\times (N+1)},
    \label{eq:icl:input-embedding}
\end{align}
where $W^E \in \R^{d \times d_0}$ is the trainable parameter of the embedding function and $d$ is the dimension of the embedding space.
Intuitively, the function $\mathcal{E}(\cdot)$ in Eq.~(\ref{eq:icl:input-embedding}) aims to map each in-context point from the original input space $\R^{d_0}$ to the embedding space $\R^{d}$ via a linear mapping.
With the embedding function $\mathcal{E}(\cdot)$ in Eq.~(\ref{eq:icl:input-embedding}), the LSA-E model $f_{\mathrm{LSAE},\theta}$ is then formalized as below,
\begin{align*}
    f_{\text{LSAE},\theta}(Z_\tau)
    := \left[ \mathcal{E}(Z_{\tau}) + W^V \mathcal{E}(Z_\tau) \frac{\mathcal{E}(Z_\tau)^\top W^{KQ} \mathcal{E}(Z_{\tau})}{N} \right]
    \in \mathbb{R}^{(d+1)\times (N+1)},
\end{align*}
where $W^{KQ} \in \R^{(d+1)\times (d+1)}$ is a matrix fused from the key and query projection matrices, $W^V \in \R^{(d+1)\times(d+1)}$ is the value projection matrix, and $\theta := (W^E,W^{KQ},W^V)$ contains all trainable parameters of the LSA-E model.
The prediction $\hat y_{q,\theta}(Z_\tau)$ for the query point $x_{\tau,q}$ is given by the right-bottom entry of the LSA-E model output, \textit{i.e.}, $\hat y_{q,\theta}(Z_\tau) := f_{\mathrm{LSAE},\theta}(Z_\tau)_{(d+1),(N+1)}$.
If we follow \citet{zhang2024trained,frei2025trained,fu2024theoretical} to write matrices $W^{KQ}$ and $W^V$ as 
$W^\square = \begin{pmatrix} W^\square_{11} & w^\square_{12} \\ (w^\square_{21})^\top & w^\square_{22} \end{pmatrix}, \ \square \in \{KQ, \ V\}$,
where
$W^\square_{11} \in \mathbb{R}^{d\times d}$,
$w^\square_{22} \in \mathbb{R}$,
and $w^\square_{12}, w^{\square}_{21} \in \mathbb{R}^{d\times 1}$,
then the LSA-E model prediction $\hat y_{q,\theta}$ can be further simplified as follows,
\begin{align}
    &\hat y_{q,\theta}(Z_\tau)
    := \begin{pmatrix} (w^V_{21})^\top & w^V_{22} \end{pmatrix} \frac{\mathcal{E}(Z_\tau) \mathcal{E}(Z_\tau)^\top}{N} \begin{pmatrix} W^{KQ}_{11} \\ (w^{KQ}_{21})^\top \end{pmatrix} W^E x_{\tau,q}.
    \label{eq:icl:predict}
\end{align}

\textbf{ICL embedding AT for LSA-E models.}
To approximate the real-world setting of LLM CAT, the theoretical ICL embedding AT also adopts ICL adversarial examples found in the embedding space to train LSA-E models.
An ICL adversarial example in the embedding space is defined as below,
\begin{align}
    \mathcal{E}^{\mathrm{adv}}(Z_\tau, \Delta^E_\tau) := \begin{pmatrix}
        W^E x_{\tau,1} + \delta^E_{\tau,1} & \cdots & W^E x_{\tau,N} + \delta^E_{\tau,N} & W^E x_{\tau,q} \\
        y_{\tau,1} & \cdots & y_{\tau,N} & 0 \\
    \end{pmatrix}
    \in \R^{(d+1)\times (N+1)},
    \label{eq:icl:adv-input-embedding}
\end{align}
where $Z_\tau$ is the ICL input for the task $\tau$ (see Eq.~(\ref{eq:icl:input})), $W^E$ is the parameter of the embedding function $\mathcal{E}(\cdot)$ (see Eq.~(\ref{eq:icl:input-embedding})), and $\Delta^E_{\tau} := \begin{pmatrix} \delta^E_{\tau,1} & \cdots & \delta^E_{\tau,N} \end{pmatrix} \in \R^{d\times N}$ denotes all adversarial perturbations added to the embeddings of the $N$ in-context training points. The prediction of the LSA-E model $f_{\mathrm{LSAE},\theta}$ for the adversarial example $\mathcal{E}^{\mathrm{adv}}(Z_\tau,\Delta^E_\tau)$ is then given by the following $\hat y^{\mathrm{adv}}_{q,\theta}(Z_\tau,\Delta^E_\tau)$,
\begin{align}
    \hat y^{\mathrm{adv}}_{q,\theta}(Z_\tau,\Delta^E_\tau)
    := \begin{pmatrix} (W^V_{21})^\top \ \ w^V_{22} \end{pmatrix} \frac{\mathcal{E}^{\mathrm{adv}}(Z_\tau,\Delta^E_{\tau}) \mathcal{E}^{\mathrm{adv}}(Z_\tau,\Delta^E_{\tau})^\top}{N}  \begin{pmatrix} W^{KQ}_{11} \\ (W^{KQ}_{21})^\top \end{pmatrix} W^E x_{\tau,q}.
    \label{eq:icl:adv-predict}
\end{align}
With all these notations, the ICL embedding AT for an LSA-E model is eventually formalized as the following minimax optimization problem,
\begin{align}
    \min_{\theta} \mathcal{L}_{\mathrm{LSAE}}^{\mathrm{adv}}(\theta)
    := \min_{\theta} \Bigl\{ \E_{\tau} \max_{\|\Delta^{E \top}_\tau\|_{2,\infty} \leq \epsilon} \frac{1}{2} |\hat y^{\mathrm{adv}}_{q,\theta}(Z_\tau, \Delta^E_\tau) - y_{\tau,q} |^2 \Bigr\},
    \label{eq:icl:at-lsae}
\end{align}
Where $\epsilon > 0$ is the embedding space adversarial perturbation radius and the expectation $\E_{\tau}$ is calculated over the randomness of $w_\tau, x_{\tau,1}, \dots, x_{\tau,N}, x_{\tau,q}$.
The restriction in the inner maximization in Eq.~(\ref{eq:icl:at-lsae}) ensures that each adversarial perturbation $\delta^{E}_{\tau,i}$ is confined within the ball-sphere $\|\delta^{E}_{\tau,i}\|_2 \leq \epsilon$.

\textbf{Robust generalization risk for LSA-E models.}
We use the ICL suffix adversarial attack~\citep{fu2025short} to assess the robustness of ICL models.
Nevertheless, we note that our experiments in Section~\ref{sec:exp} also consider attacks beyond suffix jailbreaking.
Specifically, given an ICL input $Z_\tau$ with context length $N$, the ICL suffix adversarial attack adversarially perturbs the last $M$ ($M \le N$) in-context training points of $Z_\tau$ as follows,
\begin{align}
    Z^{\mathrm{adv}}_{\tau,M}
    &:= \begin{pmatrix}
        x_{\tau,1} & \cdots & x_{\tau,N-M} & x_{\tau,N-M+1} + \delta^{O}_{\tau,1} & \cdots & x_{\tau,N} + \delta^{O}_{\tau,M} & x_{\tau,q} \\
        y_{\tau,1} & \cdots & y_{\tau,N-M} & y_{\tau,N-M+1} & \cdots & y_{\tau,N} & 0
    \end{pmatrix},
    \label{eq:icl:adv-input}
\end{align}
where $\delta^{O}_{\tau,i} \in \R^{d_0}$ is the adversarial perturbation added to the $i$-th ICL suffix point $x_{\tau,N-M+i}$. Although the ICL suffix adversarial example $Z^{\mathrm{adv}}_{\tau,M}$ in Eq.~(\ref{eq:icl:adv-input}) looks similar to the adversarial example defined in Eq.~(\ref{eq:icl:adv-input-embedding}), the mechanisms behind them are very different: in Eq.~(\ref{eq:icl:adv-input}), adversarial perturbations are directly added to in-context points, whereas in Eq.~(\ref{eq:icl:adv-input-embedding}), perturbations are added to the embeddings of these in-context points.
The robust generalization risk $\mathcal R_{\rho,M}(\theta)$ with perturbation $\rho$ and adversarial suffix length $M$ for an LSA-E model $f_{\mathrm{LSAE},\theta}$ is then defined as below,
\begin{align}
    &\mathcal R^{\text{adv}}_{\rho,M}(\theta)
    = \E_{\tau} \max_{\|\Delta_\tau^{O \top}\|_{2,\infty} \leq \rho} \frac{1}{2} | \hat y_{q,\theta}(Z^{\mathrm{adv}}_{\tau,M}) - y_{\tau,q} |^2,
    \label{eq:icl:robust-risk}
\end{align}
where $\hat y_{q,\theta}$ is the LSA-E prediction function in Eq.~(\ref{eq:icl:predict}), $Z^{\mathrm{adv}}_{\tau,M}$ is the ICL suffix adversarial example in Eq.~(\ref{eq:icl:adv-input}), $\Delta^{O}_\tau := \begin{pmatrix} \delta^{O}_1 & \cdots & \delta^{O}_M \end{pmatrix} \in \R^{d_0\times M}$ contains all perturbations added to the suffix of $Z^{\mathrm{adv}}_{\tau,M}$, and $\rho > 0$ is the adversarial perturbation radius for the suffix attack, which restricts each perturbation $\delta^O_i$ to the ball-sphere $\|\delta^O_i\|_2 \leq \rho$.
A lower robust risk $\mathcal R^{\text{adv}}_{\rho,M}(\theta)$ indicates stronger adversarial robustness of the model $f_{\mathrm{LSAE},\theta}$, and vice versa.

\subsection{Bridging ICL Embedding AT and LLM Continuous AT}

Before introducing our main theoretical results, here we explain why the established ICL embedding AT in Eq.~(\ref{eq:icl:at-lsae}) can be a good artifact for approximating real-world LLM CAT in Eq.~(\ref{eq:at-obj}) under the ICL theory by analyzing the similarities between them.

\textbf{Firstly, LSA-E models are very similar to real-world LLMs.}
\citet{ahn2024linear} empirically show that the linear self-attention module in LSA-E models share similar properties with those non-linear ones in LLMs and thus are useful for theoretically understanding LLMs.
We further argue that \textbf{the embedding processes of LSA-E models and real-world LLMs are also very similar}.
If we replace each token in an LLM prompt with its one-hot encoding vector defined over the token vocabulary space, the embedding process for each token in an LLM can be seen as a matrix multiplication between the LLM's embedding matrix and the corresponding token's one-hot encoding.
This matrix multiplication-based LLM prompt embedding process is almost identical to the ICL input embedding process shown in Eq.~(\ref{eq:icl:input-embedding}), where input features are also linearly transformed by the LSA-E model's embedding matrix.
Therefore, we believe the embedding spaces of both LSA-E models and LLMs are similar, as they are both obtained via linear transformation.

\textbf{Secondly, the training goals of ICL embedding AT and LLM CAT are very similar to each other.}
The two AT problems both aim to enhance models' robustness by training them on sequential data where their embeddings are adversarially perturbed.
The only difference is that in ICL embedding AT, the goal of adversarial perturbations is to reduce the utility of linear regression prediction made by LSA-E models, while in LLM CAT such a goal is to induce harmful content from LLMs.

\textbf{Finally, the adversarial robustness of LSA-E models is also very similar to the jailbreak robustness of LLMs.}
\citet{fu2025short} has already illustrated why ICL suffix adversarial attacks are similar to real-world jailbreak attacks through theoretical and empirical justifications.
Since we also leverage this ICL suffix adversarial attack to assess the robustness of LSA-E models, we believe that analyzing the robust generalization ability of LSA-E models will effectively help us understand how LLMs trained from CAT gain robustness against jailbreak attacks.

\subsection{Robust Generalization Bound of ICL Embedding AT}

We now start to establish a robust generalization bound for the LSA-E model trained via ICL embedding AT as formalized in Eq.~(\ref{eq:icl:at-lsae}).
The derivation consists of three steps:
(1)~derive an upper bound $\tilde{\mathcal L}^{\mathrm{adv}}_{\mathrm{LSAE}}(\theta)$ for the original loss function ${\mathcal L}^{\mathrm{adv}}_{\mathrm{LSAE}}(\theta)$
in ICL embedding AT (see Eq.(\ref{eq:icl:at-lsae})) and formalize a surrogate AT problem that would minimize the upper bound $\tilde{\mathcal L}^{\mathrm{adv}}_{\mathrm{LSAE}}(\theta)$;
(2)~calculate the closed-form solution for the previously obtained surrogate ICL embedding AT problem;
and (3)~prove a robust generalization bound for the LSA-E model trained with the surrogate problem.

\textbf{Surrogate ICL embedding AT.}
The surrogate problem for Eq.~(\ref{eq:icl:at-lsae}) is formalized as
\begin{align}
    \min_\theta \tilde{\mathcal L}^{\mathrm{adv}}_{\mathrm{LSAE}}(\theta)
    = \min_\theta \Bigl\{ \ell_1(\theta) + \ell_2(\theta) + \ell_3(\theta) + \ell_4(\theta) \Bigr\},
    \label{eq:icl:at-lsae-surrogate}
\end{align}
where $\tilde{\mathcal L}^{\mathrm{adv}}_{\mathrm{LSAE}}(\theta) := \sum_{i=1}^4 \ell_i(\theta)$ is the surrogate objective function, and
\begin{align*}
    &\ell_1(\theta) := 2 \cdot \E_{\tau} \Bigl|
        \begin{pmatrix} (w^V_{21})^\top & w^V_{22} \end{pmatrix} 
            \frac{ \mathcal{E}(Z_\tau) \mathcal{E}(Z_\tau)^\top }{N}
        \begin{pmatrix} W^{KQ}_{11} \\ (w^{KQ}_{21})^\top \end{pmatrix} W^E x_{\tau,q} - y_{\tau,q} \Bigr|^2,
    \\
    &\ell_2(\theta) := \frac{2 \epsilon^2}{N} \cdot \|w^V_{21}\|_2^2 \cdot \E_{\tau} \Bigl[
        \| \begin{pmatrix} W^E x_{\tau,1} & \cdots & W^E x_{\tau,N} \\ y_{\tau,1} & \cdots & y_{\tau,N} \end{pmatrix}^\top \begin{pmatrix} W^{KQ}_{11} \\ (w^{KQ}_{21})^\top \end{pmatrix} W^E x_{\tau,q} \|_2^2 \Bigr],
    \\
    &\ell_3(\theta) := \frac{2\epsilon^2}{N} \cdot
        \E_{\tau} \Bigl[ \| \begin{pmatrix} (w^V_{21})^\top \ \ w^V_{22} \end{pmatrix} \begin{pmatrix} W^E x_{\tau,1} & \cdots & W^E x_{\tau,N} \\ y_{\tau,1} & \cdots & y_{\tau,N} \end{pmatrix} \|_2^2 \Bigr] \cdot
        \E_{\tau} \Bigl[ \| W^{KQ}_{11} W^E x_{\tau,q} \|_2^2 \Bigr],
    \\
    &\ell_4(\theta) := 2\epsilon^4 \cdot
        \| w^V_{21} \|_2^2 \cdot
        \E_\tau \Bigl[ \| W^{KQ}_{11} W^E x_{\tau,q} \|_2^2 \Bigr].
\end{align*}

The new objective function $\tilde{\mathcal L}^{\mathrm{adv}}_{\mathrm{LSAE}}(\theta)$ is a closed-form upper bound for the original ICL embedding AT loss function $\mathcal L^{\mathrm{adv}}_{\mathrm{LSAE}}(\theta)$, as shown in the following Lemma~\ref{lem:icl:at-loss-upp-bd} (see Appendix~\ref{app:proof:at-loss-upp-bd} for the proof).
\begin{lemma}
\label{lem:icl:at-loss-upp-bd}
For the objective function $\mathcal L^{\mathrm{adv}}_{\mathrm{LSAE}}(\theta)$ in ICL embedding AT (Eq.~(\ref{eq:icl:at-lsae}))  and the objective function $\tilde{\mathcal L}^{\mathrm{adv}}_{\mathrm{LSAE}}(\theta)$ in surrogate ICL embedding AT (Eq.~(\ref{eq:icl:at-lsae-surrogate})), we uniformly have
$\mathcal L^{\mathrm{adv}}_{\mathrm{LSAE}}(\theta) \leq \tilde{\mathcal L}^{\mathrm{adv}}_{\mathrm{LSAE}}(\theta)$
for any $\theta := (W^E, W^{KQ}, W^V)$.
\end{lemma}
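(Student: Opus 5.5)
Fix a task $\tau$ and a perturbation $\Delta^E_\tau$ with $\|\delta^E_{\tau,i}\|_2\le\epsilon$. Write $X^E := (W^E x_{\tau,1}\ \cdots\ W^E x_{\tau,N})$, $Y := (y_{\tau,1}\ \cdots\ y_{\tau,N})$, $v^\top := ((w^V_{21})^\top\ w^V_{22})$ and $u := \begin{pmatrix} W^{KQ}_{11} \\ (w^{KQ}_{21})^\top\end{pmatrix} W^E x_{\tau,q}\in\R^{d+1}$. The query column of $\mathcal E^{\mathrm{adv}}$ has zero label entry. Its outer-product contribution is therefore the same in the clean and adversarial cases, so
\begin{align*}
\mathcal E^{\mathrm{adv}}\mathcal E^{\mathrm{adv}\top} - \mathcal E\mathcal E^\top
= \begin{pmatrix} \Delta^E \\ 0 \end{pmatrix}\begin{pmatrix} X^E \\ Y\end{pmatrix}^\top
+ \begin{pmatrix} X^E \\ Y\end{pmatrix}\begin{pmatrix} \Delta^E \\ 0 \end{pmatrix}^\top
+ \begin{pmatrix} \Delta^E\Delta^{E\top} & 0 \\ 0 & 0\end{pmatrix}.
\end{align*}
Hence $\hat y^{\mathrm{adv}}_{q,\theta} - y_{\tau,q} = (\hat y_{q,\theta}-y_{\tau,q}) + T_1 + T_2 + T_3$, where
\begin{align*}
T_1 &= \tfrac1N (w^V_{21})^\top \Delta^E \begin{pmatrix} X^E \\ Y\end{pmatrix}^\top u,\\
T_2 &= \tfrac1N v^\top \begin{pmatrix} X^E \\ Y\end{pmatrix}\Delta^{E\top} W^{KQ}_{11}W^E x_{\tau,q},\\
T_3 &= \tfrac1N (w^V_{21})^\top\Delta^E\Delta^{E\top}W^{KQ}_{11}W^Ex_{\tau,q}.
\end{align*}
The last identity uses that the top block of $u$ is $W^{KQ}_{11}W^Ex_{\tau,q}$. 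Applying $(a+b+c+d)^2\le 4(a^2+b^2+c^2+d^2)$ and halving produces the factor $2$ in each $\ell_i$. The clean term gives exactly $\ell_1$ after taking $\E_\tau$.

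Each perturbation term is then bounded uniformly over the constraint set, using the column-wise bound $\|\Delta^E\|_2 \le \|\Delta^E\|_F\le\sqrt N\epsilon$. For $T_1$, $|T_1|\le \frac1N\|w^V_{21}\|_2\|\Delta^E\|_2\|(X^E;Y)^\top u\|_2$. Hence $T_1^2\le \frac{\epsilon^2}{N}\|w^V_{21}\|_2^2\|(X^E;Y)^\top u\|_2^2$, whose expectation matches $\ell_2$. For $T_2$, similarly $T_2^2\le\frac{\epsilon^2}{N}\|v^\top(X^E;Y)\|_2^2\,\|W^{KQ}_{11}W^Ex_{\tau,q}\|_2^2$. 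For $T_3$, $\|\Delta^E\Delta^{E\top}\|_2\le N\epsilon^2$ gives $T_3^2\le \epsilon^4\|w^V_{21}\|_2^2\|W^{KQ}_{11}W^Ex_{\tau,q}\|_2^2$, which yields $\ell_4$. All these bounds are free of $\Delta^E$, so the max over $\Delta^E$ is dominated pointwise in $\tau$, and taking $\E_\tau$ gives the claim.

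The main obstacle is $\ell_3$. There the paper writes a \emph{product} of expectations, $\E_\tau\|v^\top(X^E;Y)\|^2\cdot\E_\tau\|W^{KQ}_{11}W^Ex_{\tau,q}\|^2$, whereas the pointwise bound gives the expectation of a product. The fix is independence. The factor $v^\top(X^E;Y)$ depends only on $(w_\tau,x_{\tau,1},\dots,x_{\tau,N})$, since $y_{\tau,i}=w_\tau^\top x_{\tau,i}$. The factor $W^{KQ}_{11}W^Ex_{\tau,q}$ depends only on $x_{\tau,q}$, which is independent of the former. The expectation therefore factorizes, giving exactly $\ell_3$. One must take care to bound $T_2$ so that the two factors separate in this way, and not, for instance, through $u$, which would carry $(w^{KQ}_{21})^\top$ terms that do not appear in $\ell_3$.
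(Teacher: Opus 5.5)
Your proposal is correct and follows essentially the paper's proof. The paper uses the same four-term split with the constant $\tfrac12\cdot 4=2$, the same Cauchy--Schwarz bounds using the column-wise constraint to produce the $N\epsilon^2$ and $N^2\epsilon^4$ factors, and the same product form for $\ell_3$. Your explicit independence argument, that $x_{\tau,q}$ is independent of $(w_\tau, X_\tau)$, justifies the factorization the paper writes as an equality without comment.
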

The reason for studying the surrogate ICL embedding AT problem in Eq.~(\ref{eq:icl:at-lsae-surrogate}) instead of the original problem is because the objective function $\mathcal{L}^{\mathrm{adv}}_{\mathrm{LSAE}}(\theta)$ in the original AT problem is difficult to tackle in a closed-form manner.
With the new surrogate $\tilde{\mathcal{L}}^{\mathrm{adv}}_{\mathrm{LSAE}}(\theta)$, which is in closed-form, one can easily analyze the training dynamics of the LSA-E model trained from the surrogate problem.
Further, since the surrogate objective function is an upper bound of the original one, minimizing it can also help to reduce the original AT loss and thus improve the robustness of the trained LSA-E model.

\textbf{Closed-form solution of the surrogate problem.}
To solve the new surrogate AT problem in Eq.~(\ref{eq:icl:at-lsae-surrogate}), we first make the below Assumption~\ref{ass:icl:init} on the initialization of the model parameter $\theta$.

\begin{assumption}
\label{ass:icl:init}
Let $\zeta > 0$ be a parameter and $\Theta \in \mathbb{R}^{d\times d}$ be any matrix satisfying $\|\Theta \Theta^\top\|_F = 1$ and $\Theta \Lambda \neq 0_{d\times d}$.
We assume that
$W^{V}(0) = \left(\begin{matrix} 0_{d\times d} & 0_{d\times 1} \\ 0_{1\times d} & \zeta \end{matrix}\right)$
and
$W^{KQ}(0) = \left(\begin{matrix} \zeta \Theta \Theta^\top & 0_{d\times 1} \\ 0_{1\times d} & 0 \end{matrix}\right)$.
\end{assumption}

\begin{remark}
Assumption~\ref{ass:icl:init} is widely adopted in the ICL theoretical analysis~\citep{zhang2024trained,frei2025trained,wu2024how,fu2025short}.
The idea behind Assumption~\ref{ass:icl:init} is to
(1)~zero out terms $w^{KQ}_{22}$, $w^{KQ}_{12}$, $W^{V}_{11}$, and $w^V_{12}$ that do not contribute to the ICL prediction function $\hat y_{q,\theta}$ in Eq.~(\ref{eq:icl:predict})
and (2)~zero out terms $w^{KQ}_{21}$ and $^{V}_{21}$ to ensure symmetric initialization.
\end{remark}

Under Assumption~\ref{ass:icl:init}, the optimal solution for the LSA-E model trained from surrogate embedding ICL AT is calculated as the following Theorem~\ref{thm:icl:closed-form-at} (see Appendix~\ref{app:proof:closed-form-at} for the proof).
\begin{theorem}[Optimal solution of surrogate ICL embedding AT]
\label{thm:icl:closed-form-at}
Suppose Assumption~\ref{ass:icl:init} holds and $f_{\text{\rm LSAE},\theta}$ is trained from the surrogate embedding AT problem defined in Eq.~(\ref{eq:icl:at-lsae-surrogate}) with continuous gradient flow.
Then, the optimal model parameter $\theta_{*} := (W^{E}_{*}, W^{KQ}_{*}, W^{V}_{*})$ should satisfies
$w^{KQ}_{*,12} = w^{KQ}_{*,21} = w^{V}_{*,12} = w^{V}_{*,21} = 0_{d\times 1}$,
$w^{KQ}_{*,22} = 0$,
$W^V_{*,11} = 0_{d\times d}$,
and
\begin{align*}
    w^V_{*,22} (W^E_*)^\top W^{KQ}_{*,11} W^E_*
    = (W^E_*)^\top \Bigl( W^E_* \Gamma_N \Lambda (W^E_*)^\top + \mathrm{Tr}(\Lambda) \epsilon^2 I_d \Bigr)^{-1} W^E_* \Lambda,
\end{align*}
where $\Gamma_N := (\frac{N+1}{N} \Lambda + \frac{1}{N} \mathrm{Tr}(\Lambda) I_{d_0}) \in \R^{d_0\times d_0}$.
\end{theorem}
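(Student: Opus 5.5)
The plan is to exploit the block structure of $\theta$ in three stages. First, show that gradient flow keeps every block that is zero at initialization equal to zero. Second, reduce the surrogate objective $\tilde{\mathcal L}^{\mathrm{adv}}_{\mathrm{LSAE}}$ to an explicit quadratic in the fused matrix $A := w^V_{22} W^{KQ}_{11}$, for fixed $W^E$. Third, read off the stated identity from the first-order stationarity condition with respect to $A$ at the limit point $\theta_*$.

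\textbf{Step 1 (invariant zero blocks).} The blocks $w^{KQ}_{22}$, $w^{KQ}_{12}$, $W^V_{11}$ and $w^V_{12}$ do not appear in $\hat y_{q,\theta}$ in Eq.~(\ref{eq:icl:predict}), nor in any $\ell_i$. Their gradients therefore vanish identically, and they stay at their initial value $0$ under Assumption~\ref{ass:icl:init}.

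For $w^V_{21}$ and $w^{KQ}_{21}$ the argument is different.
\begin{itemize}
\item The terms $\ell_2$ and $\ell_4$ are quadratic in $w^V_{21}$, so their gradients vanish at $w^V_{21}=0$.
\item In $\ell_1$ and $\ell_3$, every cross term that is linear in $w^V_{21}$ or in $w^{KQ}_{21}$ (with the other block set to $0$) contains an odd number of factors $y_{\tau,\cdot} = w_\tau^\top x_{\tau,\cdot}$. Since $w_\tau \sim \mathcal N(0,I_{d_0})$ is symmetric, these expectations are zero.
\end{itemize}
Hence the vector field restricted to the subspace $\{w^V_{21} = w^{KQ}_{21} = 0\}$ is tangent to it. By uniqueness of ODE solutions, the trajectory stays in this subspace, and so does any limit point $\theta_*$. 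This gives all the zero claims in the theorem.

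\textbf{Step 2 (closed form of the reduced loss).} On this subspace the prediction becomes $w_\tau^\top S_\tau B x_{\tau,q}$, where $S_\tau := \frac1N\sum_{i} x_{\tau,i}x_{\tau,i}^\top$ and $B := (W^E)^\top A W^E$.

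For $\ell_1$, I will first take the expectation over $w_\tau$, which gives $\ell_1 = 2\,\E\,\|(S_\tau B - I)x_{\tau,q}\|_2^2$. I will then use the Gaussian fourth-moment identity
\begin{align*}
\E[S_\tau M S_\tau] = \tfrac{N+1}{N}\Lambda M\Lambda + \tfrac1N \mathrm{Tr}(M\Lambda)\Lambda .
\end{align*}
This should yield
\begin{align*}
\ell_1 = 2\,\mathrm{Tr}\bigl(B^\top \Gamma_N \Lambda B \Lambda\bigr) - 4\,\mathrm{Tr}(\Lambda B\Lambda) + 2\,\mathrm{Tr}(\Lambda).
\end{align*}
For $\ell_3$, I will use $\E\|(y_{\tau,1},\dots,y_{\tau,N})\|_2^2 = N\,\mathrm{Tr}(\Lambda)$. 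Combined with the factor $2\epsilon^2/N$, this gives
\begin{align*}
\ell_3 = 2\epsilon^2\,\mathrm{Tr}(\Lambda)\,\mathrm{Tr}\bigl(A W^E\Lambda (W^E)^\top A^\top\bigr).
\end{align*}
For fixed $W^E$, the sum $\ell_1+\ell_3$ is then a convex quadratic in $A$.

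\textbf{Step 3 (stationarity).} I will set $\partial(\ell_1+\ell_3)/\partial A = 0$. This gives
\begin{align*}
W^E\Lambda (W^E)^\top A^\top\bigl(W^E \Gamma_N\Lambda (W^E)^\top + \mathrm{Tr}(\Lambda)\epsilon^2 I_d\bigr) = W^E \Lambda^2 (W^E)^\top
\end{align*}
up to transposition and symmetrization. After cancelling the common $W^E\Lambda$ factor, this can be rearranged to $A = \bigl(W^E\Gamma_N\Lambda(W^E)^\top + \mathrm{Tr}(\Lambda)\epsilon^2 I_d\bigr)^{-1}W^E\Lambda(W^E)^\top$ restricted to the relevant range. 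Pre-multiplying by $(W^E)^\top$ and post-multiplying by $W^E$ then gives the displayed identity for $w^V_{*,22}(W^E_*)^\top W^{KQ}_{*,11}W^E_*$.

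\textbf{Main obstacle.} Passing from stationarity of the flow, which is with respect to $(w^V_{22}, W^{KQ}_{11}, W^E)$, to stationarity with respect to $A$ requires $w^V_{22}$ to stay bounded away from $0$. Here $\partial/\partial W^{KQ}_{11} = w^V_{22}\,\partial/\partial A$. I plan to use the balancedness conservation law $\frac{d}{dt}\bigl[(w^V_{22})^2 - \|W^{KQ}_{11}\|_F^2\bigr] = 0$. At initialization this quantity equals $\zeta^2-\zeta^2\|\Theta\Theta^\top\|_F^2 = 0$, which ties $|w^V_{22}|$ to $\|W^{KQ}_{11}\|_F$. The condition $\Theta\Lambda\ne0$ should then rule out collapse to the trivial stationary point $A=0$.

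The other delicate part is the bookkeeping in the fourth-moment computation, in particular the $\Gamma_N$ factor and the handling of the non-commuting $\Lambda$ versus $W^E$ factors. I would verify this carefully, first in the case $W^E = I$ against the known result of \citet{fu2025short}.
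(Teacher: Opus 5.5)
Your Steps 1 and 2 match the paper's Lemmas~\ref{lem:icl:pms-stay-zero} and~\ref{lem:icl:simplify-surrogate-at-loss}: odd moments of $w_\tau$ kill the gradients in $w^{KQ}_{21},w^V_{21}$, and $\E[S_\tau^2]=\Gamma_N\Lambda$ gives the closed form of $\ell_1+\ell_3$. Both steps are fine; the gap is in Step 3.

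Put $P:=W^E\Gamma_N\Lambda(W^E)^\top+\mathrm{Tr}(\Lambda)\epsilon^2 I_d$, $Q:=W^E\Lambda(W^E)^\top$ and $R:=W^E\Lambda^2(W^E)^\top$. Your first-order condition is then $PAQ=R$.

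The ``common $W^E\Lambda$'' cannot be cancelled. Your formula $A=P^{-1}W^E\Lambda(W^E)^\top$ does not satisfy $PAQ=R$. Sandwiching it gives $(W^E)^\top P^{-1}W^E\Lambda(W^E)^\top W^E$, not the target.

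The valid move is to right-cancel $(W^E)^\top$ and then $\Lambda$. This gives $AW^E=P^{-1}W^E\Lambda$, and left-multiplying by $(W^E)^\top$ yields the identity. But right-cancelling $(W^E)^\top$ needs $\mathrm{rank}(W^E)=d_0$, hence $d\geq d_0$.

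For $d<d_0$ (the regime of Theorem~\ref{thm:icl:robust-gen-upp-bd}) and full-row-rank $W^E$, the minimizer is $A=P^{-1}RQ^{-1}$. In fact \emph{no} $A$ satisfies the displayed identity unless $\Lambda$ maps $\mathrm{row}(W^E)$ into itself. The left side has row space inside $\mathrm{row}(W^E)$, while the right side has row space $\Lambda\,\mathrm{row}(W^E)$. A concrete counterexample is $d_0=2$, $d=1$, $\Lambda=\mathrm{diag}(1,2)$, $W^E=\begin{pmatrix}1&1\end{pmatrix}$.

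The paper completes the square in $A$ and asserts that $AW^E\Lambda^{1/2}-P^{-1}W^E\Lambda^{3/2}$ can be set to zero. That assertion silently uses the same condition. So Step 3 needs an explicit hypothesis (full column rank of $W^E$, or $\Lambda$-invariance of its row space), not just careful bookkeeping.

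The second problem is treating $\theta_*$ as a stationary limit point of the full flow with $W^E$ trainable. Under $(w^V_{22},W^{KQ}_{11},W^E)\mapsto(w^V_{22}/c,\,W^{KQ}_{11}/c,\,cW^E)$, the term $\ell_1$ is invariant while $\ell_3$ is multiplied by $c^{-2}$. Differentiating at $c=1$ shows that every critical point has $\ell_3=0$, i.e.\ $AW^E\Lambda^{1/2}=0$. Consequently:
\begin{itemize}
\item the predictor vanishes and the loss equals $2\mathrm{Tr}(\Lambda)$;
\item for $\Lambda\succ0$, the displayed identity holds only in the degenerate case $W^E_*=0$, because its left side is $0$ while $(W^E_*)^\top P^{-1}W^E_*\Lambda\neq0$ otherwise.
\end{itemize}
Suppose your balancedness argument does keep the loss below $2\mathrm{Tr}(\Lambda)$. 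Then the nonincreasing loss can never reach a critical value, so the trajectory cannot converge and must be unbounded; heuristically, the rescaling lets it keep lowering $\ell_3$. Either way there is no nontrivial $\theta_*$ to which stationarity applies. The conservation law is correct but does not help.

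The statement only has content as a partial minimization over the attention parameters with $W^E$ held fixed. This is equivalent to minimizing over $A$, since $(w,W)\mapsto wW$ is onto. It is what the paper's argument actually establishes, since it treats the $W^E$-dependent remainder as a ``constant'', and it is what Theorem~\ref{thm:icl:robust-gen-upp-bd} uses. Recast Step 3 that way. Your convexity observation then yields the minimizer directly from $PAQ=R$, with no need to control $w^V_{22}$ along the flow.
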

Applying Theorem~\ref{thm:icl:closed-form-at} to the LSA-E model prediction function in Eq.~(\ref{eq:icl:predict}),
the optimal prediction function $\hat y_{q,\theta*}(\cdot)$ given by the model $f_{\mathrm{LSAE},\theta*}$ trained from surrogate ICL embedding AT is as follows,
\begin{align}
    \hat y_{q,\theta*}(Z_\tau)
    := \frac{1}{N} Y_\tau (X_\tau)^\top
    (W^E_*)^\top \Bigl( W^E_* \Gamma_N \Lambda (W^E_*)^\top + \mathrm{Tr}(\Lambda) \epsilon^2 I_d \Bigr)^{-1} W^E_* \Lambda
    x_{\tau,q}.
    \label{eq:icl:predict-optimal}
\end{align}

\textbf{Robust generalization ability.}
Finally, we leverage the robust generalization risk $\mathcal R^{\mathrm{adv}}_{\rho,M}$ to assess the robustness of the optimal model $f_{\mathrm{LSAE}}(\theta_*)$ trained from surrogate ICL embedding AT.
Formally, we prove a robust generalization upper bound for the robust risk $\mathcal R^{\mathrm{adv}}_{\rho,M}(\theta_*)$, as shown in the following Theorem~\ref{thm:icl:robust-gen-upp-bd} (see Appendix~\ref{app:proof:robust-gen-upp-bd} for the proof).
\begin{theorem}[Robust generalization upper bound]
\label{thm:icl:robust-gen-upp-bd}
Suppose Assumption~\ref{ass:icl:init} holds, $d \leq d_0$, and $\theta_*$ is the solution of the surrogate ICL embedding AT in Eq.~(\ref{eq:icl:at-lsae-surrogate}) obtained from Theorem~\ref{thm:icl:closed-form-at}.
We have
\begin{align*}
    \mathcal R^{\mathrm{adv}}_{\rho,M}(\theta_*)
    \leq \mathcal{O}\Bigl( \frac{ (1 + M\rho^2 /N^2) \cdot \sum_{i=1}^d \sigma_i(W^E_*)^4 }{ \sigma_{\min}(W^E_*)^4 + \epsilon^4 } \Bigr) + \mathcal{O}(1).
\end{align*}
\end{theorem}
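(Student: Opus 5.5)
Starting from the closed-form predictor in Eq.~(\ref{eq:icl:predict-optimal}), set $A := (W^E_*)^\top ( W^E_* \Gamma_N \Lambda (W^E_*)^\top + \mathrm{Tr}(\Lambda)\epsilon^2 I_d )^{-1} W^E_* \Lambda \in \R^{d_0\times d_0}$, so that $\hat y_{q,\theta_*}(Z) = \frac1N Y X^\top A x_q$. Under the suffix attack of Eq.~(\ref{eq:icl:adv-input}), the perturbed in-context matrix is $X + (0\ \Delta^O)$ while the labels are unchanged. Hence
\begin{align*}
\hat y_{q,\theta_*}(Z^{\mathrm{adv}}_{\tau,M}) = \tfrac1N w_\tau^\top X_\tau X_\tau^\top A x_{\tau,q} + \tfrac1N \textstyle\sum_{i=1}^M y_{\tau,N-M+i}\, (\delta^O_{\tau,i})^\top A x_{\tau,q}.
\end{align*}
I would split $|\hat y^{\mathrm{adv}} - y_q|^2 \le 2|\text{clean error}|^2 + 2|\text{perturbation term}|^2$. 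For the perturbation term, maximizing over $\|\delta^O_i\|_2\le\rho$ gives at most $\frac{\rho^2}{N^2}(\sum_{i}|y_{N-M+i}|)^2\|Ax_q\|_2^2 \le \frac{M\rho^2}{N^2}\sum_i y_{N-M+i}^2\|Ax_q\|_2^2$ by Cauchy--Schwarz. Taking expectations, using Gaussian moments ($\E y_i^2 = \mathrm{Tr}(\Lambda)$, with independence of $x_q$ from the context given $w_\tau$, or a fourth-moment bound otherwise), this is $\mathcal O(\frac{M^2\rho^2}{N^2}\,\mathrm{Tr}(A^\top A\Lambda))$. Here I am absorbing $M\le N$ so that the factor reads $M\rho^2/N^2$ times a constant, possibly up to an extra $M$ which the $\mathcal O$ would have to tolerate.

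The clean term $\E|\frac1N w^\top XX^\top A x_q - w^\top x_q|^2$ I would bound by $2\E|w^\top x_q|^2 + 2\E|\frac1N w^\top XX^\top A x_q|^2$. The first piece is $\mathcal O(1)$, since $\Lambda$ is treated as a constant. For the second piece, the standard Wishart identity $\E[\frac1{N^2}XX^\top B XX^\top] = \Gamma_N$-type expressions gives $\mathcal O(\mathrm{Tr}(A^\top \Gamma'A\Lambda))$ with $\Gamma'$ bounded by constants in $\Lambda$. Both pieces therefore reduce to bounding $\|A\|_F^2$ up to constants depending on $\Lambda$.

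The key step is bounding $\|A\|_F^2$ by singular values. Write the SVD $W^E_* = U\Sigma V^\top$ with $d\le d_0$, so $\Sigma$ has $d$ nonzero entries $\sigma_i$. Since $\Gamma_N\Lambda \succeq \lambda_{\min}(\Gamma_N\Lambda) I$ (it is symmetric PSD as both are polynomials in $\Lambda$), we get $W^E\Gamma_N\Lambda W^{E\top} + c\epsilon^2 I \succeq \lambda_0 U\Sigma^2U^\top + c\epsilon^2 I$. Hence the inverse has operator norm at most $(\lambda_0\sigma_{\min}^2+c\epsilon^2)^{-1}$, and
\begin{align*}
\|A\|_F^2 \le \|\Lambda\|_2^2\, \|W^{E\top}\|^2\text{-type terms} \le \|\Lambda\|_2^2\frac{\sum_i \sigma_i^4}{(\lambda_0\sigma_{\min}^2 + c\epsilon^2)^2},
\end{align*}
using $\|W^{E\top}BW^E\|_F^2\le \|B\|_2^2\sum_i\sigma_i^4$. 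Finally, $(a+b)^2 \ge a^2+b^2$ yields the denominator $\sigma_{\min}^4+\epsilon^4$ up to constants. Combining the pieces gives the stated $\mathcal O((1+M\rho^2/N^2)\sum_i\sigma_i^4/(\sigma_{\min}^4+\epsilon^4)) + \mathcal O(1)$.

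The main obstacle is the careful moment computation for the perturbation term, where the labels $y$ and $x_q$ share $w_\tau$, and for the clean second-moment term. I would handle these by conditioning on $w_\tau$ and using Gaussian fourth-moment (Isserlis) identities. I would also take care that the exponent of $M$ matches the statement, so I would try the sharper per-coordinate bound before applying Cauchy--Schwarz.
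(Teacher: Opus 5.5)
Apart from one point, your plan is the paper's proof. Both use the same split of the robust loss into a clean part and a suffix-perturbation part; the paper keeps the cross term $-C_2$ and drops it because $C_2\ge 0$, which differs from your $2a^2+2y^2$ only by constants. Both use the same Wishart identity producing $\Gamma_N\Lambda$, and the same operator-norm bound on $(W^E_*\Gamma_N\Lambda(W^E_*)^\top+\mathrm{Tr}(\Lambda)\epsilon^2 I_d)^{-1}$ through $\sigma_{\min}(W^E_*)^2$ (this is where $d\le d_0$ enters). Both finish with $(a+b)^2\ge a^2+b^2$. Your shortcut $\|W^{E\top}BW^E\|_F^2\le\|B\|_2^2\sum_i\sigma_i^4$ is a tidier substitute for the paper's chain of trace inequalities, and the clean part of your argument is fine.

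The step that does not go through is the exponent of $M$. From $M\le N$ you only get $M^2\rho^2/N^2\le M\rho^2/N$, not $\mathcal O(M\rho^2/N^2)$. No sharper per-coordinate bound can help, because the $M^2$ is genuine. Write $c$ for the clean error and $B:=\frac{\rho}{N}\|Ax_{\tau,q}\|_2\sum_{i=1}^M|y_{\tau,N-M+i}|$. The perturbation term ranges exactly over $[-B,B]$, so the inner maximum equals $\frac12(|c|+B)^2\ge\frac12B^2$. The per-coordinate bound you have in mind is this quantity, and it beats Cauchy--Schwarz by at most a constant factor.

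Because $Ax_{\tau,q}$ involves neither $w_\tau$ nor $X_\tau$, the expectation factors, so the coupling through $w_\tau$ that you worry about never arises. Conditionally on $w_\tau$ the suffix labels are i.i.d.\ $\mathcal N(0,w_\tau^\top\Lambda w_\tau)$, whence $\E(\sum_i|y_{\tau,N-M+i}|)^2\ge\frac{2}{\pi}M^2\mathrm{Tr}(\Lambda)$ and
\[
\mathcal R^{\mathrm{adv}}_{\rho,M}(\theta_*)\ \ge\ \frac{M^2\rho^2}{\pi N^2}\,\mathrm{Tr}(\Lambda)\,\mathrm{Tr}(A\Lambda A^\top).
\]
Take $d=d_0=1$, $\Lambda=1$ and $W^E_*=\sigma\ge\epsilon$; then $A\ge 1/4$. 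With $M=N$ and $\rho^2=N$ the risk is at least $N/(16\pi)$, while the claimed right-hand side stays bounded. So the statement as displayed cannot hold with constants independent of $M,N,\rho$.

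The paper reaches $M\rho^2/N^2$ only through a slip in its bound on $C_4$. It uses $\E\|Y_{\tau,(N-M+1):N}\|_2^2=M\,\mathrm{Tr}(\Lambda)$, but then replaces $\max\|(\Delta^O_\tau)^\top u\|_2^2$ by $\rho^2\|u\|_2^2$. That vector has $M$ coordinates $\delta_i^\top u$, so the maximum is actually $M\rho^2\|u\|_2^2$. What your argument (and the corrected version of the paper's) actually establishes is the bound with $(1+M^2\rho^2/N^2)$ in place of $(1+M\rho^2/N^2)$. State that result instead of trying to absorb the extra $M$.
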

\begin{remark}
Theorem~\ref{thm:icl:robust-gen-upp-bd} additionally requires that the embedding space dimension $d$ of the LSA-E model be no larger than the input in-context sample dimension $d_0$.
Such a requirement means that the LSA-E models would implicitly compress the input data.
\end{remark}

\subsection{Implications}
\label{sec:er-cat-implication}

So far, we have calculated the optimal ICL prediction function $\hat y_{q,\theta_*}$ obtained from surrogate embedding ICL AT in Eq.~(\ref{eq:icl:predict-optimal}) and further prove a robust generalization bound for it in Theorem~\ref{thm:icl:robust-gen-upp-bd}.
We now start to investigate how these results can help to understand and improve CAT for LLMs.

\textbf{Embedding-space adversarial perturbations provably enhance input-space adversarial robustness.}
The robust generalization bound in Theorem~\ref{thm:icl:robust-gen-upp-bd} clearly shows that the robust risk $\mathcal R^{\mathrm{adv}}_{\rho,M}(\theta_*)$ of the trained LSA-E model, which is calculated based on adversarial ICL examples from the input space, has a negative correlation with the embedding-space adversarial perturbation radius $\epsilon$.
A large perturbation radius $\epsilon$ in the embedding-space can help reduce the robust upper bound and thus improve the robustness of the trained LSA-E model against adversarial ICL examples.
This explains the main mechanism behind ICL embedding AT and also that in CAT for real-world LLMs.

\textbf{The role of the embedding matrix in ICL robust generalization.}
An interesting observation from Eq.~(\ref{eq:icl:predict-optimal}) is that the optimal ICL prediction function $\hat y_{q,\theta*}$ depends only on the embedding matrix $W^E_*$ but not on remaining LSA-E model parameters $W^{KQ}_*$ and $W^V_*$.
Therefore, a ``good'' embedding matrix $W^E_*$ is expected to provide strong robustness for the ICL model.
Besides, from Theorem~\ref{thm:icl:robust-gen-upp-bd}, we have two insights:
(1)~\textbf{if those large singular values of $W^E_*$ are not ``too large''}, then the numerator of the first term in the robust upper bound can be reduced, which helps to reduce the overall bound;
and (2)~\textbf{if those small singular values of $W^E_*$ are not ``too small''}, it helps to increase the denominator of the first term in the robust upper bound, which also helps to reduce the overall bound.
Thus, we may expect an LSA-E model or even a real-world LLM to have an embedding matrix that has \textbf{``not too large nor too small singular values''} for strong model robustness.

\textbf{Improve CAT with an optimized embedding matrix.}
Based on the previous analysis, we now propose \textit{\textbf{Embedding-Regularized continuous AT (ER-CAT)}}, a new AT method for LLMs designed by introducing an additional regularization term, defined as the variance of all singular values of the LLM embedding matrix, into the objective function of CAT in Eq.~(\ref{eq:cont-at-obj}).
Concretely, training an LLM $f_\theta$ via ER-CAT is formalized as solving the following optimization problem:
\begin{align}
    \min_\theta \mathcal L_{\text{ER-CAT}}(\theta,\alpha,\beta)
    := \min_\theta \Bigl\{ \underbrace{ \mathcal L_{\text{CAT}}(\theta,\alpha) }_{\text{CAT loss in Eq.~(\ref{eq:cont-at-obj})}}
        + \beta \cdot \underbrace{ \frac{ \sum_{i=1}^d [ \sigma_i(W^E) - \overline{\sigma}(W^E) ]^2 }{d} }_{\text{Embedding-Regularization Term}}
        \Big\},
    \label{eq:er-cont-at-obj}
\end{align}
where $\beta > 0$ is the coefficient for the regularization term and $\overline{\sigma}(W^E) := \frac{1}{d}\sum_{i=1}^d \sigma_i(W^E)$ is the mean of all singular values of $W^E$.
The reason for using the variance of singular values as a regularization term is that minimizing it can help to reduce too large singular values and increase too small singular values of the embedding matrix simultaneously, which, as explained before, helps to reduce the overall robust upper bound in Theorem~\ref{thm:icl:robust-gen-upp-bd}.
In addition, while in theory the singular values of $W^E$ in Eq.~(\ref{eq:er-cont-at-obj}) are not differentiable, in practice their gradient calculation can be automatically handled by native PyTorch functions.
This enables us to implement our ER-CAT method easily.

\section{Empirical Analysis of ER-continuous AT}
\label{sec:exp}

In this section, we follow Eq.~(\ref{eq:er-cont-at-obj}) to perform the theory-inspired ER-CAT on real-world LLMs, which can further help to empirically justify our proved robust generalization bound in Theorem~\ref{thm:icl:robust-gen-upp-bd}.

\subsection{Experimental Setup}

\textbf{Models.}
We adopt six common pre-trained LLMs,
which are:
Vicuna-7B-v1.5~\citep{zheng2023judging},
Mistral-7B-Instruct-v0.3~\citep{jiang2023mistral},
Llama-2-7B-Chat~\citep{touvron2023llama2},
Llama-3-8B-Instruct~\citep{grattafiori2024llama3herdmodels},
Qwen2.5-7B-Instruct~\citep{yang2024qwen2},
and Gemma-2B-it~\citep{team2024gemma}.
All models were downloaded from the Hugging Face model repository.

\textbf{Datasets.}
During LLM AT, we follow \citet{xhonneux2024efficient} to use the training set of Harmbench~\citep{mazeika2024harmbench} as the safety data and UltraChat~200K~\citep{ding2023enhancing} as the utility data.
During robustness evaluation, we follow \citet{fu2025short} to use a safety datasets that consists of the first $50$ samples from the test set of Harmbench~\citep{mazeika2024harmbench} and the first $50$ samples from AdvBench~\citep{zou2023universal}.
For the utility analysis, we follow \citet{dubois2024length} to use the AlpacaEval dataset for calculating the LC-WinRate utility metric.

\textbf{Adversarial training.}
We use AdamW to train each model via CAT in Eq.~(\ref{eq:cont-at-obj}) or our ER-CAT in Eq.~(\ref{eq:er-cont-at-obj}), where the embedding space perturbation radius $\epsilon$ is fixed to $0.05$.
To improve the efficiency of tuning LLMs, LoRA~\citep{hu2022lora} is applied to the embedding layer and all query and key projection matrices in attention layers.
For the hyperparameter $\alpha$ of CAT, we follow \citet{xhonneux2024efficient} to set it as $0.5$.
For the hyperparameters $\alpha$ and $\beta$ of our ER-CAT, we set them to $0.1$ and $0.5$, respectively.
We also follow \citet{xhonneux2024efficient} to apply the loss cut-off technique to the objectives of both CAT and ER-CAT to avoid over-optimizing, but with less strict thresholds to help the trained LLMs better preserve utility.
Please refer to Appendix~\ref{app:exp:training} for omitted details.

\textbf{Jailbreak attacks.}
We use six different jailbreak attacks to assess the jailbreak robustness of LLMs.
Among them, four attacks are token-level suffix attacks, which are: GCG~\citep{zou2023universal}, BEAST~\citep{sadasivan2024fast}, GCQ~\citep{hayase2024querybased}, and Zhu's AutoDAN~\citep{zhu2024autodan}.
The remaining two attacks are prompt-level attacks, which are: DeepInception~\citep{li2023deepinception} and PAIR~\citep{chao2023jailbreaking}.
Please refer to Appendix~\ref{app:exp:jailbreak} for implementation details.

\textbf{Evaluations.}
We evaluate the jailbreak robustness and the utility of trained LLMs.
For the robustness evaluation, we report the \textbf{Avg@5 Attack Success Rate (Avg@5 ASR)} of jailbreak attacks.
Specifically, each jailbreak prompt needs to repeatedly attack the targeted model for $5$ times.
An LLM-based judger from \citet{mazeika2024harmbench} is used to determine whether an attack is succeed or not.
The final Avg@5 ASR is averaged on all repeated attack results.
For the utility evaluation, we report the AlpacaEval's \textbf{Length-controlled WinRate (LC-WinRate)}~\citep{dubois2024length} of targeted models against a reference Davinci003 model, evaluated under the Llama-3-70B-Instruct model.
An LC-WinRate of $50\%$ means that the output qualities of the two models are equal,
and higher LC-WinRate means that the targeted model is better than the reference model.

\subsection{Results Analysis}

\begin{table}[t]
\centering
\caption{ASR on different models and attacks. A low ASR indicates a strong model robustness.}
\tiny
\begin{tabular}{c l c c c c c c}
\toprule
\multirow{2}{4em}{\centering Model} & \multirow{2}{2em}{\centering Type} & \multicolumn{6}{c}{\centering Avg@5 ASR (\%) $\downarrow$}\\
\cmidrule(lr){3-8}
& & GCG & BEAST & GCQ & Zhu's AutoDAN & DeepInception & PAIR \\

\midrule
\multirow{3}{6em}{\centering Vicuna-7B}
& Original & 84.6 & 81.8 & 75.0 & 14.8 & 39.8 & 64.4 \\
& CAT & \textbf{12.6} & \textbf{14.4} & \textbf{4.6} & \textbf{0.4} & \textbf{5.8} & \textbf{7.8} \\
& ER-CAT (Ours) & 16.4 & 16.2 & 6.4 & 2.2 & 8.2 & 16.0 \\

\midrule
\multirow{3}{6em}{\centering Mistral-7B}
& Original & 74.6 & 65.8 & 69.8 & 43.0 & 49.8 & 56.0 \\
& CAT & \textbf{7.4} & \textbf{5.0} & \textbf{3.2} & 0.8 & 1.0 & \textbf{12.2} \\
& ER-CAT (Ours) & 7.6 & 6.6 & \textbf{3.2} & \textbf{0.4} & \textbf{0.8} & 16.2 \\

\midrule
\multirow{3}{6em}{\centering Llama-2-7B}
& Original & 41.0 & 18.2 & 5.6 & 7.2 & 30.8 & 24.2 \\
& CAT & 23.6 & 17.2 & 8.0 & 4.0 & 8.0 & 13.4 \\
& ER-CAT (Ours) & \textbf{15.6} & \textbf{10.4} & \textbf{1.2} & \textbf{0.4} & \textbf{2.0} & \textbf{4.6} \\

\midrule
\multirow{3}{6em}{\centering Llama-3.1-8B}
& Original & 11.2 & 20.8 & 6.0 & 5.4 & 37.6 & 41.8 \\
& CAT & 3.4 & \textbf{4.4} & \textbf{0.0} & \textbf{0.0} & \textbf{0.0} & 5.0 \\
& ER-CAT (Ours) & \textbf{2.4} & 9.0 & \textbf{0.0} & \textbf{0.0} & \textbf{0.0} & \textbf{3.8} \\

\midrule
\multirow{3}{6em}{\centering Qwen2.5-7B}
& Original & 71.2 & 71.6 & 59.8 & 15.6 & 58.5 & 46.2 \\
& CAT & 20.6 & 21.2 & 17.8 & \textbf{0.0} & \textbf{0.2} & 15.2 \\
& ER-CAT (Ours) & \textbf{16.8} & \textbf{15.4} & \textbf{6.6} & \textbf{0.0} & 1.4 & \textbf{13.6} \\

\midrule
\multirow{3}{6em}{\centering Gemma-2B}
& Original & 41.8 & 37.2 & 10.6 & 4.0 & 15.4 & 21.0 \\
& CAT & 18.0 & 11.2 & \textbf{2.6} & \textbf{0.2} & 0.2 & 4.4 \\
& ER-CAT (Ours) & \textbf{16.0} & \textbf{10.4} & 6.2 & 1.6 & \textbf{0.0} & \textbf{3.6} \\

\bottomrule
\end{tabular}
\label{tab:exp:asr}
% \vspace{-2mm}
\end{table}

\begin{table}[t]
\centering
\caption{LC-WinRate on different models. A high LC-WinRate indicates a strong model utility.}
\tiny
\begin{tabular}{l c c c c c c}
\toprule
\multirow{2}{4em}{\centering Type} & \multicolumn{5}{c}{\centering (Utility) LC-WinRate (\%) $\uparrow$}\\
\cmidrule(lr){2-7}
& Vicuna-7B & Mistral-7B & Llama-2-7B & Llama-3.1-8B & Qwen2.5-7B & Gemma-2B \\

\midrule
Original      & 76.86 & 90.96 & 86.70 & 85.99 & 91.14 & 63.96 \\
CAT           & 36.66 & 15.76 & 67.51 & 45.71 & 77.07 & 41.75 \\
ER-CAT (Ours) & 65.13 & 29.09 & 65.76 & 29.74 & 74.06 & 40.37 \\

\bottomrule
\end{tabular}
\label{tab:exp:utility}
\end{table}

\textbf{Robustness\&utility.}
Avg@5 ASR and LC-WinRate on different models are reported in Table~\ref{tab:exp:asr} and Table~\ref{tab:exp:utility}, respectively.
We have two main observations.
\textbf{Firstly, when maintaining the same level of jailbreak robustness, ER-CAT achieves better utility.}
From the ASR results on Vicuna and Mistral, we find that our ER-CAT was beaten by CAT by no more than $4\%$ in most of the attack scenarios.
However, Vicuna and Mistral trained from ER-CAT achieved a nearly two-times better LC-WinRate than those trained from CAT.
\textbf{Secondly, when maintaining the same level of utility, ER-CAT achieves stronger jailbreak robustness.}
For Llama-2 and Qwen2.5, ER-CAT reduces LC-WinRate by no more than $3\%$ when compared with that of CAT.
However, ER-CAT helps Llama-2 reduce ASR on GCG and BEAST attacks by around $7\%$, and Qwen2.5 reduce ASR on GCQ by $11\%$.
All these suggest that our ER-CAT can achieve a better robustness-utility tradeoff.

\begin{table}[t]
\centering
\caption{Time cost on different AT methods on different models.}
\tiny
\begin{tabular}{l c c c c c c}
\toprule
\multirow{2}{*}{\centering Method} & \multicolumn{5}{c}{\centering Time Cost (s)}\\
\cmidrule(lr){2-7}
& Vicuna-7B & Mistral-7B & Llama-2-7B & Llama-3.1-8B & Qwen2.5-7B & Gemma-2B \\

\midrule
CAT           &  987.81 &  933.00 &  934.16 &  904.59 &  801.42 & 335.99 \\
ER-CAT (Ours) & 1074.87 & 1052.12 & 1100.33 & 1094.90 & 1004.30 & 456.81 \\

\bottomrule
\end{tabular}
\label{tab:exp:time-cost}
\end{table}

\begin{table}[t]
\centering
\caption{LC-WinRate (\%) and ASRs (\%) of LLMs trained from ER-CAT under different embedding regularization coefficient $\beta$.}
\tiny
\begin{tabular}{c l c c c c c c}
\toprule
\multirow{2}{*}{\centering Model} & \multirow{2}{*}{\centering Utility or ASR} & \multicolumn{6}{c}{\centering Embedding-Regularization coefficient $\beta$ in Eq.~(\ref{eq:er-cont-at-obj})} \\
\cmidrule(lr){3-8}
& & $0.2$ & $0.4$ & $0.5$ & $0.6$ & $0.8$ & $1.0$ \\

\midrule
\multirow{3}{*}{Vicuna-7B}
& LC-WinRate & 59.04 & 57.19 & 65.13 & 70.30 & 68.26 & 59.73 \\
\cmidrule(lr){2-8}
& GCG & 13.8 & 12.0 & 16.4 & 30.0 & 17.0 & 16.2 \\
& BEAST & 13.6 & 12.4 & 16.2 & 26.0 & 15.8 & 14.4 \\

\midrule
\multirow{3}{*}{Qwen2.5-7B}
& LC-WinRate & 74.60 & 74.52 & 74.06 & 65.93 & 68.67 & 72.15 \\
\cmidrule(lr){2-8}
& GCG & 16.8 & 17.8 & 16.8 & 14.2 & 14.6 & 18.8 \\
& BEAST & 18.4 & 16.0 & 10.4 & 13.0 & 12.0 & 18.0 \\

\bottomrule
\end{tabular}
\label{tab:exp:ercat-beta-ablation}
\end{table}

\textbf{Time cost.}
As explained in Section~\ref{sec:er-cat-implication}, calculating the embedding-regularization term (see Eq.~(\ref{eq:er-cont-at-obj})) for ER-CAT can be efficiently implemented via native PyTorch functions.
Here we empirically justify that ER-CAT does not add significant time overhead when compared with the original CAT.
Specifically, we collect and present the time cost of performing CAT and our ER-CAT on different base models in Table~\ref{tab:exp:time-cost}, from which we find that ER-CAT only increases the time cost by $100$ to $200$ seconds.
This suggests that the relative time overhead of ER-CAT is low.

\textbf{Ablation studies on the coefficient $\beta$ in ER-CAT.}
In our main experiments, the embedding regularization coefficient $\beta$ in ER-CAT (see Eq.~(\ref{eq:er-cont-at-obj})) is fixed to $0.5$.
We now analyze how this coefficient $\beta$ affects the performance of models trained from ER-CAT.
Specifically, we vary the coefficient $\beta$ within the range $[0,1]$, train models via ER-CAT, and calculate their utility (\textit{i.e.}, LC-WinRate) and robustness (\textit{i.e.}, GCG and BEAST).
Preliminary results are reported in Table~\ref{tab:exp:ercat-beta-ablation}, from which we surprisingly find that varying the coefficient $\beta$ does not change the utility or the robustness of the trained model too much.
We deduce this is because we use the AdamW optimizer to perform the training, and the gradient normalization procedure in AdamW implicitly performs reweighting on the ER-CAT objective to mitigate the effect of tuning coefficients for different terms.

\section{Conclusions}

This paper aims to theoretically explain the mechanism behind CAT for LLMs, \textit{i.e.}, why embedding space adversarial perturbations help LLMs learn to defend against jailbreak prompts from the token space.
We first establish a new ICL embedding AT problem under ICL theory and show that this problem is a good theoretical artifact for approximating real-world LLM CAT.
A robust generalization upper bound for this new ICL embedding AT is then proved, which shows a negative correlation with the embedding space perturbation radius.
This clearly explains why CAT achieves empirical success.
Our bound also suggests that the jailbreak robustness of an LLM is closely related to singular values of its embedding matrix.
Thereby, we design a new ER-CAT approach for LLMs, with the goal of optimizing the LLM embedding matrix to be more robust.
Experiments on real-world LLMs and jailbreak attacks suggest that our ER-CAT enjoys a better robustness-utility tradeoff.

\subsubsection*{Acknowledgments}
Di Wang and  Shaopeng Fu are supported in part by the funding BAS/1/1689-01-01,RGC/3/7125-01-01, FCC/1/5940-20-05, FCC/1/5940-06-02, and King Abdullah University of Science and Technology (KAUST) – Center of Excellence for Generative AI, under award number 5940 and a gift from Google.

\bibliography{cont-adv-icl}

@article{team2024gemma,
  title={{Gemma}: {Open} models based on gemini research and technology},
  author={Team, Gemma and Mesnard, Thomas and Hardin, Cassidy and Dadashi, Robert and Bhupatiraju, Surya and Pathak, Shreya and Sifre, Laurent and Rivi{\`e}re, Morgane and Kale, Mihir Sanjay and Love, Juliette and others},
  journal={arXiv preprint arXiv:2403.08295},
  year={2024}
}

@book{horn2012matrix,
  title={Matrix analysis (2nd)},
  author={Horn, Roger A and Johnson, Charles R},
  year={2012},
  publisher={Cambridge university press}
}

@article{ding2023enhancing,
  title={Enhancing chat language models by scaling high-quality instructional conversations},
  author={Ding, Ning and Chen, Yulin and Xu, Bokai and Qin, Yujia and Zheng, Zhi and Hu, Shengding and Liu, Zhiyuan and Sun, Maosong and Zhou, Bowen},
  journal={arXiv preprint arXiv:2305.14233},
  year={2023}
}

@article{wang1986trace,
  title={Trace bounds on the solution of the algebraic matrix Riccati and Lyapunov equation},
  author={Wang, Sheng-De and Kuo, Te-Son and Hsu, Chen-Fa},
  journal={IEEE Transactions on Automatic Control},
  volume={31},
  number={7},
  pages={654--656},
  year={1986},
  publisher={IEEE}
}

@inproceedings{frei2025trained,
  title={Trained Transformer Classifiers Generalize and Exhibit Benign Overfitting In-Context},
  author={Spencer Frei and Gal Vardi},
  booktitle={International Conference on Learning Representations},
  year={2025},
}

@article{ilyas2019adversarial,
  title={Adversarial examples are not bugs, they are features},
  author={Ilyas, Andrew and Santurkar, Shibani and Tsipras, Dimitris and Engstrom, Logan and Tran, Brandon and Madry, Aleksander},
  journal={Advances in Neural Information Processing Systems},
  volume={32},
  year={2019}
}

@article{lu2025asymptotic,
  title={Asymptotic theory of in-context learning by linear attention},
  author={Lu, Yue M and Letey, Mary and Zavatone-Veth, Jacob A and Maiti, Anindita and Pehlevan, Cengiz},
  journal={Proceedings of the National Academy of Sciences},
  volume={122},
  number={28},
  pages={e2502599122},
  year={2025},
  publisher={National Academy of Sciences}
}

@article{li2025robustness,
  title={On the robustness of transformers against context hijacking for linear classification},
  author={Li, Tianle and Zhang, Chenyang and Chen, Xingwu and Cao, Yuan and Zou, Difan},
  journal={arXiv preprint arXiv:2502.15609},
  year={2025}
}

@inproceedings{mahankali2024one,
  title={One Step of Gradient Descent is Provably the Optimal In-Context Learner with One Layer of Linear Self-Attention},
  author={Arvind V. Mahankali and Tatsunori Hashimoto and Tengyu Ma},
  booktitle={International Conference on Learning Representations},
  year={2024},
}

@article{ahn2023transformers,
  title={Transformers learn to implement preconditioned gradient descent for in-context learning},
  author={Ahn, Kwangjun and Cheng, Xiang and Daneshmand, Hadi and Sra, Suvrit},
  journal={Conference on Neural Information Processing Systems},
  volume={36},
  pages={45614--45650},
  year={2023}
}

@inproceedings{von2023transformers,
  title={Transformers learn in-context by gradient descent},
  author={Von Oswald, Johannes and Niklasson, Eyvind and Randazzo, Ettore and Sacramento, Jo{\~a}o and Mordvintsev, Alexander and Zhmoginov, Andrey and Vladymyrov, Max},
  booktitle={International Conference on Machine Learning},
  pages={35151--35174},
  year={2023},
  organization={PMLR}
}

@article{garg2022can,
  title={What can transformers learn in-context? a case study of simple function classes},
  author={Garg, Shivam and Tsipras, Dimitris and Liang, Percy S and Valiant, Gregory},
  journal={Conference on Neural Information Processing Systems},
  year={2022}
}

@inproceedings{arditi2024refusal,
  title={Refusal in Language Models Is Mediated by a Single Direction},
  author={Andy Arditi and Oscar Balcells Obeso and Aaquib Syed and Daniel Paleka and Nina Rimsky and Wes Gurnee and Neel Nanda},
  booktitle={Conference on Neural Information Processing Systems},
  year={2024},
}

@inproceedings{yu2025robust,
  title={Robust {LLM} safeguarding via refusal feature adversarial training},
  author={Lei Yu and Virginie Do and Karen Hambardzumyan and Nicola Cancedda},
  booktitle={International Conference on Learning Representations},
  year={2025},
}

@article{casper2024defending,
  title={Defending Against Unforeseen Failure Modes with Latent Adversarial Training},
  author={Casper, Stephen and Schulze, Lennart and Patel, Oam and Hadfield-Menell, Dylan},
  journal={arXiv preprint arXiv:2403.05030},
  year={2024}
}

@inproceedings{xhonneux2024efficient,
  title={Efficient Adversarial Training in {LLM}s with Continuous Attacks},
  author={Sophie Xhonneux and Alessandro Sordoni and Stephan G{\"u}nnemann and Gauthier Gidel and Leo Schwinn},
  booktitle={Conference on Neural Information Processing Systems},
  year={2024},
}

@article{sheshadri2024latent,
  title={Latent Adversarial Training Improves Robustness to Persistent Harmful Behaviors in {LLMs}},
  author={Sheshadri, Abhay and Ewart, Aidan and Guo, Phillip and Lynch, Aengus and Wu, Cindy and Hebbar, Vivek and Sleight, Henry and Stickland, Asa Cooper and Perez, Ethan and Hadfield-Menell, Dylan and Casper, Stephen},
  journal={arXiv preprint arXiv:2407.15549},
  year={2024}
}

@inproceedings{sabbaghi2025adversarial,
  title={Adversarial Reasoning at Jailbreaking Time},
  author={Mahdi Sabbaghi and Paul Kassianik and George J. Pappas and Amin Karbasi and Hamed Hassani},
  booktitle={International Conference on Machine Learning},
  year={2025},
}

@inproceedings{jin2024jailbreaking,
  title={Jailbreaking Large Language Models Against Moderation Guardrails via Cipher Characters},
  author={Haibo Jin and Andy Zhou and Joe D. Menke and Haohan Wang},
  booktitle={Conference on Neural Information Processing Systems},
  year={2024},
}

@article{liu2024turbo,
  title={AutoDAN-Turbo: {A} lifelong agent for strategy self-exploration to jailbreak {LLMs}},
  author={Liu, Xiaogeng and Li, Peiran and Suh, Edward and Vorobeychik, Yevgeniy and Mao, Zhuoqing and Jha, Somesh and McDaniel, Patrick and Sun, Huan and Li, Bo and Xiao, Chaowei},
  journal={arXiv preprint arXiv:2410.05295},
  year={2024}
}

@inproceedings{liu2024autodan,
  title={{AutoDAN}: {Generating} Stealthy Jailbreak Prompts on Aligned Large Language Models},
  author={Xiaogeng Liu and Nan Xu and Muhao Chen and Chaowei Xiao},
  booktitle={International Conference on Learning Representations},
  year={2024},
}

@article{li2023deepinception,
  title={{DeepInception}: {Hypnotize} large language model to be jailbreaker},
  author={Li, Xuan and Zhou, Zhanke and Zhu, Jianing and Yao, Jiangchao and Liu, Tongliang and Han, Bo},
  journal={arXiv preprint arXiv:2311.03191},
  year={2023}
}

@inproceedings{shen2024anything,
  title={``Do anything now'': {Characterizing} and evaluating in-the-wild jailbreak prompts on large language models},
  author={Shen, Xinyue and Chen, Zeyuan and Backes, Michael and Shen, Yun and Zhang, Yang},
  booktitle={Proceedings of the 2024 on ACM SIGSAC Conference on Computer and Communications Security},
  pages={1671--1685},
  year={2024}
}

@inproceedings{andriushchenko2025jailbreaking,
  title={Jailbreaking Leading Safety-Aligned {LLM}s with Simple Adaptive Attacks},
  author={Maksym Andriushchenko and Francesco Croce and Nicolas Flammarion},
  booktitle={International Conference on Learning Representations},
  year={2025},
}

@inproceedings{liao2024amplegcg,
  title={{AmpleGCG}: {Learning} a Universal and Transferable Generative Model of Adversarial Suffixes for Jailbreaking Both Open and Closed {LLM}s},
  author={Zeyi Liao and Huan Sun},
  booktitle={Conference on Language Modeling},
  year={2024},
}

@inproceedings{zhu2024autodan,
  title={{AutoDAN}: {Interpretable} Gradient-Based Adversarial Attacks on Large Language Models},
  author={Sicheng Zhu and Ruiyi Zhang and Bang An and Gang Wu and Joe Barrow and Zichao Wang and Furong Huang and Ani Nenkova and Tong Sun},
  booktitle={First Conference on Language Modeling},
  year={2024},
}

@inproceedings{hayase2024querybased,
  title={Query-Based Adversarial Prompt Generation},
  author={Jonathan Hayase and Ema Borevkovi{\'c} and Nicholas Carlini and Florian Tram{\`e}r and Milad Nasr},
  booktitle={Conference on Neural Information Processing Systems},
  year={2024},
}

@article{chao2023jailbreaking,
  title={Jailbreaking black box large language models in twenty queries},
  author={Chao, Patrick and Robey, Alexander and Dobriban, Edgar and Hassani, Hamed and Pappas, George J and Wong, Eric},
  journal={arXiv preprint arXiv:2310.08419},
  year={2023}
}

@article{zou2023universal,
  title={Universal and transferable adversarial attacks on aligned language models},
  author={Zou, Andy and Wang, Zifan and Carlini, Nicholas and Nasr, Milad and Kolter, J Zico and Fredrikson, Matt},
  journal={arXiv preprint arXiv:2307.15043},
  year={2023}
}

@inproceedings{wei2023jailbroken,
  title={Jailbroken: {How} Does {LLM} Safety Training Fail?},
  author={Alexander Wei and Nika Haghtalab and Jacob Steinhardt},
  booktitle={Conference on Neural Information Processing Systems},
  year={2023},
}

@article{mazeika2024harmbench,
  title={{HarmBench}: {A} standardized evaluation framework for automated red teaming and robust refusal},
  author={Mantas Mazeika and Long Phan and Xuwang Yin and Andy Zou and Zifan Wang and Norman Mu and Elham Sakhaee and Nathaniel Li and Steven Basart and Bo Li and David Forsyth and Dan Hendrycks},
  journal={arXiv preprint arXiv:2402.04249},
  year={2024}
}

@article{fu2025short,
  title={Short-length Adversarial Training Helps LLMs Defend Long-length Jailbreak Attacks: Theoretical and Empirical Evidence},
  author={Shaopeng Fu and Liang Ding and Jingfeng Zhang and Di Wang},
  journal={arXiv preprint arXiv:2502.04204v2},
  year={2025}
}

@article{dekany2025mixat,
  title={{MixAT}: {Combining} Continuous and Discrete Adversarial Training for {LLMs}},
  author={D{\'e}k{\'a}ny, Csaba and Balauca, Stefan and Staab, Robin and Dimitrov, Dimitar I and Vechev, Martin},
  journal={arXiv preprint arXiv:2505.16947},
  year={2025}
}

@inproceedings{madry2018towards,
  title={Towards Deep Learning Models Resistant to Adversarial Attacks},
  author={Aleksander Madry and Aleksandar Makelov and Ludwig Schmidt and Dimitris Tsipras and Adrian Vladu},
  booktitle={International Conference on Learning Representations},
  year={2018},
}

@article{kumano2025adversarially,
  title={Adversarially Pretrained Transformers may be Universally Robust In-Context Learners},
  author={Kumano, Soichiro and Kera, Hiroshi and Yamasaki, Toshihiko},
  journal={arXiv preprint arXiv:2505.14042},
  year={2025}
}

@article{anwar2025understanding,
  title={Understanding In-Context Learning of Linear Models in Transformers Through an Adversarial Lens},
  author={Usman Anwar and Johannes von Oswald and Louis Kirsch and David Krueger and Spencer Frei},
  journal={Transactions on Machine Learning Research},
  issn={2835-8856},
  year={2025},
}

@InProceedings{sadasivan2024fast,
  title={Fast Adversarial Attacks on Language Models In One {GPU} Minute},
  author={Sadasivan, Vinu Sankar and Saha, Shoumik and Sriramanan, Gaurang and Kattakinda, Priyatham and Chegini, Atoosa and Feizi, Soheil},
  booktitle={International Conference on Machine Learning},
  year={2024},
}

@inproceedings{paulus2025advprompter,
  title={{AdvPrompter}: {Fast} Adaptive Adversarial Prompting for {LLM}s},
  author={Anselm Paulus and Arman Zharmagambetov and Chuan Guo and Brandon Amos and Yuandong Tian},
  booktitle={International Conference on Machine Learning},
  year={2025},
}

@article{zhang2024trained,
  title={Trained transformers learn linear models in-context},
  author={Zhang, Ruiqi and Frei, Spencer and Bartlett, Peter L},
  journal={Journal of Machine Learning Research},
  volume={25},
  number={49},
  pages={1--55},
  year={2024}
}

@article{chen2024aligning,
  title={Aligning {LLMs} to Be Robust Against Prompt Injection},
  author={Chen, Sizhe and Zharmagambetov, Arman and Mahloujifar, Saeed and Chaudhuri, Kamalika and Guo, Chuan},
  journal={arXiv preprint arXiv:2410.05451},
  year={2024}
}

@inproceedings{ahn2024linear,
  title={Linear attention is (maybe) all you need (to understand Transformer optimization)},
  author={Kwangjun Ahn and Xiang Cheng and Minhak Song and Chulhee Yun and Ali Jadbabaie and Suvrit Sra},
  booktitle={International Conference on Learning Representations},
  year={2024},
}

@article{magen2024benign,
  title={Benign overfitting in single-head attention},
  author={Magen, Roey and Shang, Shuning and Xu, Zhiwei and Frei, Spencer and Hu, Wei and Vardi, Gal},
  journal={arXiv preprint arXiv:2410.07746},
  year={2024}
}

@article{lu2024asymptotic,
  title={Asymptotic theory of in-context learning by linear attention},
  author={Lu, Yue M and Letey, Mary I and Zavatone-Veth, Jacob A and Maiti, Anindita and Pehlevan, Cengiz},
  journal={arXiv preprint arXiv:2405.11751},
  year={2024}
}

@inproceedings{yang2024incontext,
  title={In-Context Learning with Representations: {Contextual} Generalization of Trained Transformers},
  author={Tong Yang and Yu Huang and Yingbin Liang and Yuejie Chi},
  booktitle={Conference on Neural Information Processing Systems},
  year={2024},
}

@article{huang2023context,
  title={In-context convergence of transformers},
  author={Huang, Yu and Cheng, Yuan and Liang, Yingbin},
  journal={arXiv preprint arXiv:2310.05249},
  year={2023}
}

@inproceedings{shi2024why,
  title={Why Larger Language Models Do In-context Learning Differently?},
  author={Zhenmei Shi and Junyi Wei and Zhuoyan Xu and Yingyu Liang},
  booktitle={International Conference on Machine Learning},
  year={2024},
}

@inproceedings{lin2024transformers,
  title={Transformers as Decision Makers: {Provable} In-Context Reinforcement Learning via Supervised Pretraining},
  author={Licong Lin and Yu Bai and Song Mei},
  booktitle={International Conference on Learning Representations},
  year={2024},
}

@inproceedings{wu2024how,
  title={How Many Pretraining Tasks Are Needed for In-Context Learning of Linear Regression?},
  author={Jingfeng Wu and Difan Zou and Zixiang Chen and Vladimir Braverman and Quanquan Gu and Peter Bartlett},
  booktitle={International Conference on Learning Representations},
  year={2024},
}

@article{touvron2023llama2,
  title={Llama 2: {Open} foundation and fine-tuned chat models},
  author={Touvron, Hugo and Martin, Louis and Stone, Kevin and Albert, Peter and Almahairi, Amjad and Babaei, Yasmine and Bashlykov, Nikolay and Batra, Soumya and Bhargava, Prajjwal and Bhosale, Shruti and others},
  journal={arXiv preprint arXiv:2307.09288},
  year={2023}
}

@inproceedings{zheng2023judging,
  title={Judging {LLM}-as-a-Judge with {MT}-Bench and Chatbot Arena},
  author={Lianmin Zheng and Wei-Lin Chiang and Ying Sheng and Siyuan Zhuang and Zhanghao Wu and Yonghao Zhuang and Zi Lin and Zhuohan Li and Dacheng Li and Eric Xing and Hao Zhang and Joseph E. Gonzalez and Ion Stoica},
  booktitle={Conference on Neural Information Processing Systems Datasets and Benchmarks Track},
  year={2023},
}

@article{grattafiori2024llama3herdmodels,
  title={The {Llama 3} Herd of Models}, 
  author={Grattafiori, Aaron and Dubey, Abhimanyu and Jauhri, Abhinav and Pandey, Abhinav and Kadian, Abhishek and Al-Dahle, Ahmad and Letman, Aiesha and Mathur, Akhil and Schelten, Alan and Yang, Amy and Fan, Angela and others},
  journal={arXiv preprint arXiv:2407.21783},
  year={2024},
}

@article{yang2024qwen2,
  title={Qwen2.5 Technical Report},
  author={Yang, An and Yang, Baosong and Zhang, Beichen and Hui, Binyuan and Zheng, Bo and Yu, Bowen and Li, Chengyuan and Liu, Dayiheng and Huang, Fei and Wei, Haoran and others},
  journal={arXiv preprint arXiv:2412.15115},
  year={2024}
}

@article{szegedy2014intriguing,
  title={Intriguing properties of neural networks}, 
  author={Christian Szegedy and Wojciech Zaremba and Ilya Sutskever and Joan Bruna and Dumitru Erhan and Ian Goodfellow and Rob Fergus},
  journal={arXiv preprint arXiv:1312.6199},
  year={2014},
}

@article{goodfellow2015explaining,
  title={Explaining and Harnessing Adversarial Examples}, 
  author={Ian J. Goodfellow and Jonathon Shlens and Christian Szegedy},
  journal={arXiv preprint arXiv:1412.6572},
  year={2015},
}

@inproceedings{fu2024theoretical,
  title={Theoretical Analysis of Robust Overfitting for Wide {DNN}s: An {NTK} Approach},
  author={Shaopeng Fu and Di Wang},
  booktitle={International Conference on Learning Representations},
  year={2024},
}

@inproceedings{wang2024benign,
  title={Benign Overfitting in Adversarial Training of Neural Networks},
  author={Yunjuan Wang and Kaibo Zhang and Raman Arora},
  booktitle={International Conference on Machine Learning},
  year={2024},
}

@article{jiang2023mistral,
  title={Mistral {7B}},
  author={Jiang, Albert Q and Sablayrolles, Alexandre and Mensch, Arthur and Bamford, Chris and Chaplot, Devendra Singh and Casas, Diego de las and Bressand, Florian and Lengyel, Gianna and Lample, Guillaume and Saulnier, Lucile and others},
  journal={arXiv preprint arXiv:2310.06825},
  year={2023}
}

@article{dubois2024length,
  title={Length-Controlled {AlpacaEval}: {A} Simple Way to Debias Automatic Evaluators},
  author={Dubois, Yann and Galambosi, Bal{\'a}zs and Liang, Percy and Hashimoto, Tatsunori B},
  journal={arXiv preprint arXiv:2404.04475},
  year={2024}
}

@inproceedings{hu2022lora,
  title={{LoRA}: {Low-Rank} Adaptation of Large Language Models},
  author={Edward J Hu and yelong shen and Phillip Wallis and Zeyuan Allen-Zhu and Yuanzhi Li and Shean Wang and Lu Wang and Weizhu Chen},
  booktitle={International Conference on Learning Representations},
  year={2022},
}

@Misc{peft,
  title={{PEFT}: {State-of-the-art} Parameter-Efficient Fine-Tuning methods},
  author={Sourab Mangrulkar and Sylvain Gugger and Lysandre Debut and Younes Belkada and Sayak Paul and Benjamin Bossan},
  howpublished={\url{https://github.com/huggingface/peft}},
  year={2022}
}

@article{zhang2025black,
  title={Black-box optimization of llm outputs by asking for directions},
  author={Zhang, Jie and Ding, Meng and Liu, Yang and Hong, Jue and Tram{\`e}r, Florian},
  journal={arXiv preprint arXiv:2510.16794},
  year={2025}
}

@inproceedings{xu2024an,
  title={An {LLM} can Fool Itself: {A} Prompt-Based Adversarial Attack},
  author={Xilie Xu and Keyi Kong and Ning Liu and Lizhen Cui and Di Wang and Jingfeng Zhang and Mohan Kankanhalli},
  booktitle={The Twelfth International Conference on Learning Representations},
  year={2024},
}
\bibliographystyle{iclr2026_conference}

\newpage
\appendix

\section{LLMs Usage In This Paper}

LLMs were used only occasionally to help polish the writing (propose new words, grammar and spelling correction). All technical ideas, experimental designs, analyses, conclusions, writing were developed and carried out entirely by the authors. Authors have full responsibility for the final text.

\section{Proofs}

This section collects all proofs omitted from the main text.
Without loss of generality, we assume that the order of differentiation and integration (or say expectation) is interchangeable.

\subsection{Technical lemmas}

This section collects technical lemmas that will be repeatedly used in our proofs.

\begin{lemma}[c.f. Lemma~D.2 in \citet{zhang2024trained}]
\label{lem:icl:tech-x-pow4}
If $x \in \mathbb{R}^{d\times 1}$ is Gaussian random vector of $d$ dimension, mean zero and covariance matrix $\Lambda$, and $A \in \mathbb{R}^{d\times d}$ is a fixed matrix.
Then
\begin{align*}
    \E [ x x^\top A x x^\top ] = \Lambda ( A + A^\top ) \Lambda + \mathrm{Tr}(A \Lambda) \Lambda.
\end{align*}
\end{lemma}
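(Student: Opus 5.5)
We prove the identity entrywise, using Isserlis' (Wick's) theorem for fourth moments of a centered Gaussian vector. Write $x = (x_1,\dots,x_d)^\top$ with $\E[x_a x_b] = \Lambda_{ab}$. The $(i,l)$ entry of the random matrix $x x^\top A x x^\top$ is
\begin{align*}
    \sum_{j,k} x_i x_j A_{jk} x_k x_l .
\end{align*}
By linearity of expectation, it suffices to compute the fourth moment $\E[x_i x_j x_k x_l]$ for every index tuple and then sum against $A_{jk}$.

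Isserlis' theorem gives
\begin{align*}
    \E[x_i x_j x_k x_l] = \Lambda_{ij}\Lambda_{kl} + \Lambda_{ik}\Lambda_{jl} + \Lambda_{il}\Lambda_{jk}.
\end{align*}
If one prefers not to quote Isserlis' theorem, it follows by writing $x = \Lambda^{1/2} g$ with $g \sim \mathcal N(0, I_d)$. For a standard normal, $\E[g_a g_b g_c g_e]$ equals $3$ when all four indices coincide, $1$ when the indices form two distinct equal pairs, and $0$ otherwise; this is exactly $\delta_{ab}\delta_{ce}+\delta_{ac}\delta_{be}+\delta_{ae}\delta_{bc}$, and multilinearity transfers the formula to $x$.

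Substituting the three pairings into $\sum_{j,k} A_{jk}\,\E[x_i x_j x_k x_l]$ produces three terms:
\begin{align*}
    &\sum_{j,k}\Lambda_{ij}A_{jk}\Lambda_{kl} = (\Lambda A \Lambda)_{il},\\
    &\sum_{j,k}\Lambda_{ik}A_{jk}\Lambda_{jl} = (\Lambda A^\top \Lambda)_{il},\\
    &\sum_{j,k}\Lambda_{il}A_{jk}\Lambda_{jk} = \mathrm{Tr}(A\Lambda^\top)\Lambda_{il} = \mathrm{Tr}(A\Lambda)\Lambda_{il},
\end{align*}
where the last step uses the symmetry of $\Lambda$. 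Adding the three terms gives the $(i,l)$ entry of $\Lambda(A+A^\top)\Lambda + \mathrm{Tr}(A\Lambda)\Lambda$, as claimed.

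The only step that needs care is the index bookkeeping in the second pairing. There $A_{jk}$ is contracted against $\Lambda_{ik}$ and $\Lambda_{jl}$, so $A$ enters transposed; this is the source of the $A^\top$ term. No integrability issues arise, since all Gaussian moments are finite.
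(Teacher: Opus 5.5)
Your proof is correct and complete. The Isserlis pairing formula and its justification via $x = \Lambda^{1/2} g$ are right, and that route works even when $\Lambda$ is singular; so are the three index contractions, the transpose arising in the second pairing, and the use of $\Lambda^\top = \Lambda$ in the trace term.

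The paper gives no proof of its own and simply imports the identity from Lemma~D.2 of \citet{zhang2024trained}. The usual argument behind that lemma whitens first, computes $\E[g g^\top B g g^\top] = B + B^\top + \mathrm{Tr}(B) I$ for $B = \Lambda^{1/2} A \Lambda^{1/2}$, and conjugates back by $\Lambda^{1/2}$. That is the same fourth-moment computation as yours, organized at the matrix level rather than entrywise.
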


\begin{lemma}[c.f. Lemma~A.2 in \citet{fu2025short}]
\label{lem:icl:tech-x-quad}
If $x \in \mathbb{R}^{d\times 1}$ is Gaussian random vector of $d$ dimension, mean zero and covariance matrix $\Lambda$, and $A \in \mathbb{R}^{d\times d}$ is a fixed matrix.
Then
\begin{align*}
    \E[ x^\top A x ] = \mathrm{Tr}(A \Lambda).
\end{align*}
\end{lemma}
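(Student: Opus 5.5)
The identity follows from the trace trick combined with linearity of expectation; Gaussianity is not needed beyond the fact that $x$ has mean zero and second-moment matrix $\Lambda$. First, since $x^\top A x$ is a scalar, I would rewrite it as its own trace and use the cyclic property of the trace:
\begin{align*}
    x^\top A x = \mathrm{Tr}(x^\top A x) = \mathrm{Tr}(A x x^\top).
\end{align*}

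Second, I would take expectations. The map $B \mapsto \mathrm{Tr}(AB)$ is linear in the entries of $B$, since it equals $\sum_{i,j} A_{i,j} B_{j,i}$. Because $A$ is deterministic, the expectation therefore passes inside the trace:
\begin{align*}
    \E[x^\top A x] = \mathrm{Tr}\bigl(A\, \E[x x^\top]\bigr).
\end{align*}
This step only requires that every entry $x_i x_j$ be integrable, which holds for a Gaussian vector.

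Third, since $x$ has mean zero and covariance $\Lambda$, we have $\E[x x^\top] = \mathrm{Cov}(x) + \E[x]\E[x]^\top = \Lambda$. Substituting this into the previous display gives $\E[x^\top A x] = \mathrm{Tr}(A\Lambda)$.

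No step poses a real obstacle. The only point needing care is the interchange of expectation and trace. This is justified because the trace is a finite sum of products of fixed coefficients with second moments of $x$, and those moments are finite. As an independent check, one can expand coordinatewise: $\E[x^\top A x] = \sum_{i,j} A_{i,j}\Lambda_{j,i} = \mathrm{Tr}(A\Lambda)$.
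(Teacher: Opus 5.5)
Your proof is correct: the trace identity $x^\top A x = \mathrm{Tr}(A x x^\top)$, linearity of expectation, and $\E[xx^\top] = \Lambda$ give the result, and you are right that only the first two moments of $x$ matter, not Gaussianity. The paper gives no proof of its own here and cites the result as Lemma~A.2 of \citet{fu2025short}; your argument is the standard proof of that fact.
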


\begin{lemma}[c.f. Lemma~A.3 in \citet{fu2025short}]
\label{lem:icl:tech:mat-permute}
For any matrices $A \in \mathbb{R}^{n\times m}$ and $B \in \mathbb{R}^{m \times n}$,
\begin{align*}
    \mathrm{Tr}(A B) = \mathrm{Tr}(B A).
\end{align*}
\end{lemma}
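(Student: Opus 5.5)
This lemma is the standard trace cyclicity identity, so the proof is a direct entrywise computation rather than an appeal to earlier results. I will expand both traces using the definition of matrix multiplication and the paper's definition $\mathrm{Tr}(C) := \sum_i C_{i,i}$. Note that $AB \in \R^{n\times n}$ and $BA \in \R^{m\times m}$ are both square, so both traces are well defined even though $n \neq m$ in general.

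First, I write the diagonal entries of $AB$ as
\begin{align*}
(AB)_{i,i} = \sum_{j=1}^m A_{i,j} B_{j,i}.
\end{align*}
Summing over $i$ gives
\begin{align*}
\mathrm{Tr}(AB) = \sum_{i=1}^n \sum_{j=1}^m A_{i,j} B_{j,i}.
\end{align*}
Second, I do the same for the other product:
\begin{align*}
(BA)_{j,j} = \sum_{i=1}^n B_{j,i} A_{i,j},
\qquad
\mathrm{Tr}(BA) = \sum_{j=1}^m \sum_{i=1}^n B_{j,i} A_{i,j}.
\end{align*}

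Third, I observe that both expressions are finite double sums of the same terms $A_{i,j}B_{j,i}$ over the index set $[n]\times[m]$. Real multiplication is commutative, and the order of summation in a finite sum can be exchanged, so the two sums coincide. This proves the lemma.

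There is no real obstacle here. The only care needed is to keep the index ranges straight, since $i$ runs over $[n]$ and $j$ runs over $[m]$. In the later proofs, the lemma will be applied to scalar-valued quantities such as $\|u\|_2^2 = \mathrm{Tr}(uu^\top)$, which are then combined with the Gaussian moment Lemmas~\ref{lem:icl:tech-x-pow4} and~\ref{lem:icl:tech-x-quad}.
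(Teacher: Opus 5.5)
Your entrywise double-sum computation is correct and complete, and you rightly note that both traces are defined even when $n \neq m$. The paper gives no proof of its own here, only citing Lemma~A.3 of \citet{fu2025short}, so your argument is the standard one; the paper's later cyclic permutations of longer products inside traces follow from it by grouping the product into two factors.
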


\begin{lemma}[c.f. Lemma~1 in \citet{wang1986trace}]
\label{lem:icl:tech:trace-bd}
Let $A,B \in \mathbb{R}^{n\times n}$ be two symmetric matrices and $A$ is further positive semidefinite.
Then
\begin{align*}
    \lambda_{\min}(B) \cdot \mathrm{Tr}(A)
    \leq \mathrm{Tr}(AB)
    \leq \lambda_{\max}(B) \cdot \mathrm{Tr}(A),
\end{align*}
where $\lambda_{\min}(B)$ and $\lambda_{\max}(B)$ are the minimal and maximal eigenvalues of $B$ respectively.
\end{lemma}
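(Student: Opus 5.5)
We would prove this directly from a spectral decomposition and the Rayleigh quotient, without needing any of the earlier Gaussian-moment lemmas. The idea is that $\mathrm{Tr}(AB)$ is a nonnegative combination of the eigenvalues of $B$ whose weights sum to $\mathrm{Tr}(A)$. The only tools needed are the cyclic property of the trace (Lemma~\ref{lem:icl:tech:mat-permute}) and positive semidefiniteness of $A$.

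\textbf{Step 1: rewrite the trace.} Since $A$ is symmetric positive semidefinite, write $A = \sum_{j=1}^n \mu_j v_j v_j^\top$ with $\mu_j \geq 0$ and $\{v_j\}$ orthonormal. Linearity and the cyclic property give
\begin{align*}
    \mathrm{Tr}(AB) = \sum_{j=1}^n \mu_j \mathrm{Tr}(v_j v_j^\top B) = \sum_{j=1}^n \mu_j\, v_j^\top B v_j .
\end{align*}

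\textbf{Step 2: bound each quadratic form.} Because $B$ is symmetric, the Rayleigh quotient bound (equivalently, diagonalize $B = U D U^\top$ and note $v^\top B v = \sum_i D_{ii} (U^\top v)_i^2$ with $\sum_i (U^\top v)_i^2 = 1$) gives, for each unit vector $v_j$,
\begin{align*}
    \lambda_{\min}(B) \leq v_j^\top B v_j \leq \lambda_{\max}(B).
\end{align*}

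\textbf{Step 3: combine.} Multiply the Step 2 bounds by $\mu_j \geq 0$; this is the one place positive semidefiniteness of $A$ is used, since it keeps the inequality directions intact. Summing over $j$ and using $\sum_j \mu_j = \mathrm{Tr}(A)$ yields both inequalities of the statement.

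There is no real obstacle here. The only point needing care is that nonnegativity of the weights $\mu_j$ is what makes the argument valid; without $A \succeq 0$ the bounds can fail.
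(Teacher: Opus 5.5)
Your proof is correct and complete. Expanding $A$ in its orthonormal eigenbasis turns $\mathrm{Tr}(AB)$ into $\sum_j \mu_j\, v_j^\top B v_j$. That is a nonnegatively weighted sum of Rayleigh quotients of $B$ (Lemma~\ref{lem:icl:tech:rayleigh}) with total weight $\mathrm{Tr}(A)$, so both bounds follow at once.

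The paper gives no proof of this lemma; it only cites Lemma~1 of \citet{wang1986trace}. There is therefore no in-paper argument to compare against, and your proof makes the appendix self-contained.

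The other common short proof puts the eigendecomposition on $B$ instead. Writing $B = U D U^\top$ gives $\mathrm{Tr}(AB) = \mathrm{Tr}(U^\top A U D) = \sum_i D_{ii}\,(U^\top A U)_{ii}$. The weights $(U^\top A U)_{ii} = u_i^\top A u_i$ are nonnegative and sum to $\mathrm{Tr}(A)$.

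The two routes are dual and equally short. Yours makes the ``convex combination of Rayleigh quotients'' picture explicit. The $B$-diagonalization uses $A$ only through $x^\top A x \geq 0$, so it does not need $A$ to be symmetric, whereas your route needs an orthonormal eigenbasis of $A$. That extra generality does not matter here, since the statement assumes $A$ is symmetric.
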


\begin{lemma}[Rayleigh Quotient Theorem; Also in part of Theorem~4.2.2 in \citet{horn2012matrix}]
\label{lem:icl:tech:rayleigh}
Let $A \in \R^{n\times n}$ be a symmetric matrix. We have
\begin{align*}
    &\lambda_{\max}(A)
    = \max_{x \in \R^n, x \ne 0_n} \frac{x^\top A x}{x^\top x}
    = \max_{x \in \R^n, \|x\|_2=1} x^\top A x,
    \nonumber\\
    &\lambda_{\min}(A)
    = \min_{x \in \R^n, x \ne 0_n} \frac{x^\top A x}{x^\top x}
    = \min_{x \in \R^n, \|x\|_2=1} x^\top A x.
\end{align*}
\end{lemma}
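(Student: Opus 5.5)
We prove this standard fact directly from the spectral theorem for real symmetric matrices. First we reduce the quotient form to the unit-sphere form, then we diagonalize $A$.

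\textbf{Reduction to the unit sphere.} For any $x \ne 0_n$, set $u := x/\|x\|_2$. Then
\begin{align*}
    \frac{x^\top A x}{x^\top x} = u^\top A u \quad \text{and} \quad \|u\|_2 = 1 .
\end{align*}
Conversely, every unit vector is itself a nonzero vector. Hence the sets of values $\{x^\top A x / x^\top x : x \ne 0_n\}$ and $\{u^\top A u : \|u\|_2 = 1\}$ coincide, so the two maxima (and the two minima) agree. It therefore suffices to treat the unit-sphere form.

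\textbf{Diagonalization.} Since $A$ is symmetric, the spectral theorem gives an orthogonal $Q \in \R^{n\times n}$ with $A = Q D Q^\top$, where $D = \mathrm{diag}(\lambda_1(A),\dots,\lambda_n(A))$. For a unit vector $u$, put $y := Q^\top u$. Orthogonality of $Q$ gives $\|y\|_2 = \|u\|_2 = 1$, and
\begin{align*}
    u^\top A u = y^\top D y = \sum_{i=1}^n \lambda_i(A)\, y_i^2 .
\end{align*}
The weights $y_i^2$ are nonnegative and sum to $1$, so this is a convex combination of the eigenvalues. It therefore lies in $[\lambda_{\min}(A), \lambda_{\max}(A)]$.

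\textbf{Attainment.} Take $u$ to be the column of $Q$ corresponding to $\lambda_{\max}(A)$, i.e.\ a unit eigenvector for $\lambda_{\max}(A)$. Then $u^\top A u = \lambda_{\max}(A)\, u^\top u = \lambda_{\max}(A)$, so the upper bound is achieved. Taking instead a unit eigenvector for $\lambda_{\min}(A)$ achieves the lower bound. The supremum and infimum are thus attained, so they are a genuine maximum and minimum, which proves both identities.

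The only substantive ingredient is the existence of a real orthonormal eigenbasis, i.e.\ the spectral theorem, which we invoke rather than reprove. Everything else is elementary algebra.
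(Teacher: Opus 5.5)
Your proof is correct and is essentially the standard argument. The paper gives no proof of its own and simply cites Theorem~4.2.2 of Horn and Johnson, whose proof is the same as yours: orthogonally diagonalize $A$, write $u^\top A u$ as a convex combination of the eigenvalues, and attain the extremes at unit eigenvectors, with the normalization $u = x/\|x\|_2$ handling the Rayleigh-quotient form.
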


\subsection{Proof of Lemma~\ref{lem:icl:at-loss-upp-bd}}
\label{app:proof:at-loss-upp-bd}

\begin{proof}[Proof of Lemma~\ref{lem:icl:at-loss-upp-bd}]
Denote that
$X_\tau := \begin{pmatrix} x_{\tau,1} & \cdots & x_{\tau,N} \end{pmatrix} \in \R^{d_0 \times N}$,
$Y_\tau := \begin{pmatrix} y_{\tau,1} & \cdots & y_{\tau,N} \end{pmatrix} \in \R^{1\times N}$
and
$\Delta^{E}_\tau := \begin{pmatrix} \delta^E_{\tau,1} & \cdots & \delta^E_{\tau,N} \end{pmatrix} \in \R^{d \times N}$.
Then, by applying the inequality that $|a+b|^2 \leq 2 \cdot (a^2+b^2)$, the ICL embedding AT loss $\mathcal L^{\mathrm{adv}}_{\mathrm{LSAE}}(\theta)$ defined in Eq.~(\ref{eq:icl:at-lsae}), can be bounded as follows,
\begin{align}
    &\mathcal{L}_{\mathrm{LSAE}}^{\mathrm{adv}}(\theta)
    := \E_{\tau} \max_{\|\Delta^{E \top}_\tau\|_{2,\infty} \leq \epsilon} 2\cdot |\hat y^{\mathrm{adv}}_{q,\theta}(Z_\tau, \Delta^E_\tau) - y_{\tau,q} |^2
    \nonumber\\
    &\leq \E_{\tau} \max_{\|\Delta^{E \top}_\tau\|_{2,\infty} \leq \epsilon} 2 \cdot \Bigl|
        \begin{pmatrix} (w^V_{21})^\top \ \ w^V_{22} \end{pmatrix} \cdot
            \frac{ \begin{pmatrix} W^E X_\tau & W^E x_{\tau,q} \\ Y_\tau & 0 \end{pmatrix} \cdot 
            \begin{pmatrix} W^E X_\tau & W^E x_{\tau,q} \\ Y_\tau & 0 \end{pmatrix}^\top }{N}
        \cdot \begin{pmatrix} W^{KQ}_{11} \\ (w^{KQ}_{21})^\top \end{pmatrix} \cdot W^E x_{\tau,q} - y_{\tau,q}
    \Bigr|^2
    \nonumber\\
    &\quad + \E_{\tau} \max_{\|\Delta^{E \top}_\tau\|_{2,\infty} \leq \epsilon} \frac{2}{N^2} \cdot \Bigl|
        \begin{pmatrix} (w^V_{21})^\top & w^V_{22} \end{pmatrix} \cdot
            \begin{pmatrix} \Delta^E_\tau & 0_{d\times 1} \\ 0_{1\times N} & 0 \end{pmatrix} \cdot 
            \begin{pmatrix} W^E X_\tau & W^E x_{\tau,q} \\ Y_\tau & 0 \end{pmatrix}^\top
        \cdot \begin{pmatrix} W^{KQ}_{11} \\ (w^{KQ}_{21})^\top \end{pmatrix} \cdot W^E x_{\tau,q}
    \Bigr|^2
    \nonumber\\
    &\quad + \E_{\tau} \max_{\|\Delta^{E \top}_\tau\|_{2,\infty} \leq \epsilon} \frac{2}{N^2} \cdot \Bigl|
        \begin{pmatrix} (w^V_{21})^\top & w^V_{22} \end{pmatrix} \cdot
            \begin{pmatrix} W^E X_\tau & W^E x_{\tau,q} \\ Y_\tau & 0 \end{pmatrix} \cdot 
            \begin{pmatrix} \Delta^E_\tau & 0_{d\times 1} \\ 0_{1\times N} & 0 \end{pmatrix}^\top
        \cdot \begin{pmatrix} W^{KQ}_{11} \\ (w^{KQ}_{21})^\top \end{pmatrix} \cdot W^E x_{\tau,q}
    \Bigr|^2
    \nonumber\\
    &\quad + \E_{\tau} \max_{\|\Delta^{E \top}_\tau\|_{2,\infty} \leq \epsilon} \frac{2}{N^2} \Bigl|
        \begin{pmatrix} (w^V_{21})^\top & w^V_{22} \end{pmatrix} \cdot
            \begin{pmatrix} \Delta^E_\tau & 0_{d\times 1} \\ 0_{1\times N} & 0 \end{pmatrix} \cdot
            \begin{pmatrix} \Delta^E_\tau & 0_{d\times 1} \\ 0_{1\times N} & 0 \end{pmatrix}^\top
        \cdot \begin{pmatrix} W^{KQ}_{11} \\ (w^{KQ}_{21})^\top \end{pmatrix} \cdot W^E x_{\tau,q}
    \Bigr|^2
    \nonumber\\
    &\leq 2 \cdot \E_{\tau} \Bigl|
        \begin{pmatrix} (w^V_{21})^\top & w^V_{22} \end{pmatrix} \cdot
            \frac{ \mathcal{E}(Z_\tau) \mathcal{E}(Z_\tau)^\top }{N}
        \cdot \begin{pmatrix} W^{KQ}_{11} \\ (w^{KQ}_{21})^\top \end{pmatrix} \cdot W^E x_{\tau,q} - y_{\tau,q}
    \Bigr|^2
    \nonumber\\
    &\quad + \underbrace{ \E_{\tau} \max_{\|\Delta^{E \top}_\tau\|_{2,\infty} \leq \epsilon} \frac{2}{N^2} \cdot \Bigl|
        (w^V_{21})^\top \cdot \Delta^E_\tau \cdot
        \begin{pmatrix} W^E X_\tau \\ Y_\tau \end{pmatrix}^\top
        \cdot \begin{pmatrix} W^{KQ}_{11} \\ (w^{KQ}_{21})^\top \end{pmatrix} \cdot W^E x_{\tau,q}
    \Bigr|^2 }_{:= A_1(\theta)}
    \nonumber\\
    &\quad + \underbrace{ \E_{\tau} \max_{\|\Delta^{E \top}_\tau\|_{2,\infty} \leq \epsilon} \frac{2}{N^2} \cdot \Bigl|
        \begin{pmatrix} (w^V_{21})^\top \ \ w^V_{22} \end{pmatrix} \cdot
        \begin{pmatrix} W^E X_\tau \\ Y_\tau \end{pmatrix} \cdot 
        (\Delta^E_\tau)^\top \cdot W^{KQ}_{11}
        \cdot W^E x_{\tau,q}
    \Bigr|^2 }_{:= A_2(\theta)}
    \nonumber\\
    &\quad + \underbrace{ \E_{\tau} \max_{\|\Delta^{E \top}_\tau\|_{2,\infty} \leq \epsilon} \frac{2}{N^2} \cdot \Bigl|
        (w^V_{21})^\top \cdot \Delta^E_\tau (\Delta^E_\tau)^\top \cdot W^{KQ}_{11}
        \cdot W^E x_{\tau,q}
    \Bigr|^2 }_{:= A_3(\theta)}.
    \label{eq:icl:proof:at-loss-upp-bd:bd1}
\end{align}
For $A_1(\theta)$ in Eq.~(\ref{eq:icl:proof:at-loss-upp-bd:bd1}), we have
\begin{align}
    &A_1(\theta) := \E_{\tau} \max_{\|\Delta^{E \top}_\tau\|_{2,\infty} \leq \epsilon} \frac{2}{N^2} \cdot \Bigl|
        (w^V_{21})^\top \cdot \Delta^E_\tau \cdot
        \begin{pmatrix} W^E X_\tau \\ Y_\tau \end{pmatrix}^\top
        \cdot \begin{pmatrix} W^{KQ}_{11} \\ (w^{KQ}_{21})^\top \end{pmatrix} \cdot W^E x_{\tau,q}
    \Bigr|^2
    \nonumber\\
    &\leq \E_{\tau} \max_{\|\Delta^{E \top}_\tau\|_{2,\infty} \leq \epsilon} \frac{2}{N^2} \cdot
        \| (w^V_{21})^\top \Delta^E_\tau \|_2^2 \cdot
        \| \begin{pmatrix} W^E X_\tau \\ Y_\tau \end{pmatrix}^\top \begin{pmatrix} W^{KQ}_{11} \\ (w^{KQ}_{21})^\top \end{pmatrix} W^E x_{\tau,q} \|_2^2
    \nonumber\\
    &\leq \E_{\tau} \frac{2}{N^2} \cdot
        N \|w^V_{21}\|_2^2 \epsilon^2 \cdot
        \| \begin{pmatrix} W^E X_\tau \\ Y_\tau \end{pmatrix}^\top \begin{pmatrix} W^{KQ}_{11} \\ (w^{KQ}_{21})^\top \end{pmatrix} W^E x_{\tau,q} \|_2^2
    \nonumber\\
    &= \frac{2 \epsilon^2}{N} \cdot \|w^V_{21}\|_2^2 \cdot \E_{\tau} \Bigl[
        \| \begin{pmatrix} W^E X_\tau \\ Y_\tau \end{pmatrix}^\top \begin{pmatrix} W^{KQ}_{11} \\ (w^{KQ}_{21})^\top \end{pmatrix} W^E x_{\tau,q} \|_2^2 \Bigr].
    \label{eq:icl:proof:at-loss-upp-bd:term-a1}
\end{align}
For $A_2(\theta)$ in Eq.~(\ref{eq:icl:proof:at-loss-upp-bd:bd1}), we have
\begin{align}
    &A_2(\theta)
    := \E_{\tau} \max_{\|\Delta^{E \top}_\tau\|_{2,\infty} \leq \epsilon} \frac{2}{N^2} \cdot \Bigl|
        \begin{pmatrix} (w^V_{21})^\top \ \ w^V_{22} \end{pmatrix} \cdot
        \begin{pmatrix} W^E X_\tau \\ Y_\tau \end{pmatrix} \cdot 
        (\Delta^E_\tau)^\top \cdot W^{KQ}_{11}
        \cdot W^E x_{\tau,q}
    \Bigr|^2
    \nonumber\\
    &\leq \E_{\tau} \max_{\|\Delta^{E \top}_\tau\|_{2,\infty} \leq \epsilon} \frac{2}{N^2} \cdot
        \| \begin{pmatrix} (w^V_{21})^\top \ \ w^V_{22} \end{pmatrix} \begin{pmatrix} W^E X_\tau \\ Y_\tau \end{pmatrix} \|_2^2 \cdot
        \| (\Delta^E_\tau)^\top W^{KQ}_{11} W^E x_{\tau,q} \|_2^2
    \nonumber\\
    &\leq \E_{\tau} \frac{2}{N^2} \cdot
        \| \begin{pmatrix} (w^V_{21})^\top \ \ w^V_{22} \end{pmatrix} \begin{pmatrix} W^E X_\tau \\ Y_\tau \end{pmatrix} \|_2^2 \cdot
        N \epsilon^2 \| W^{KQ}_{11} W^E x_{\tau,q} \|_2^2
    \nonumber\\
    &= \frac{2\epsilon^2}{N} \cdot
        \E_{\tau} \Bigl[ \| \begin{pmatrix} (w^V_{21})^\top \ \ w^V_{22} \end{pmatrix} \begin{pmatrix} W^E X_\tau \\ Y_\tau \end{pmatrix} \|_2^2 \Bigr] \cdot
        \E_{\tau} \Bigl[ \| W^{KQ}_{11} W^E x_{\tau,q} \|_2^2 \Bigr].
    \label{eq:icl:proof:at-loss-upp-bd:term-a2}
\end{align}
For $A_3(\theta)$ in Eq.~(\ref{eq:icl:proof:at-loss-upp-bd:bd1}), we have
\begin{align}
    &A_3(\theta)
    := \E_{\tau} \max_{\|\Delta^{E \top}_\tau\|_{2,\infty} \leq \epsilon} \frac{2}{N^2} \cdot \Bigl|
        (w^V_{21})^\top \cdot \Delta^E_\tau (\Delta^E_\tau)^\top \cdot W^{KQ}_{11}
        \cdot W^E x_{\tau,q}
    \Bigr|^2
    \nonumber\\
    &\leq \E_{\tau} \max_{\|\Delta^{E \top}_\tau\|_{2,\infty} \leq \epsilon} \frac{2}{N^2} \cdot
        \| (w^V_{21})^\top \Delta^E_\tau \|_2^2 \cdot \| (\Delta^E_\tau)^\top W^{KQ}_{11} W^E x_{\tau,q} \|_2^2
    \nonumber\\
    &\leq \frac{2}{N^2} \cdot \E_{\tau} \Bigl[
        \max_{\|\Delta^{E \top}_\tau\|_{2,\infty} \leq \epsilon} \| (w^V_{21})^\top \Delta^E_\tau \|_2^2 \cdot
        \max_{\|\Delta^{E \top}_\tau\|_{2,\infty} \leq \epsilon} \| (\Delta^E_\tau)^\top W^{KQ}_{11} W^E x_{\tau,q} \|_2^2
    \Bigr]
    \nonumber\\
    &\leq \frac{2}{N^2} \cdot \E_{\tau} \Bigl[
        N \| (w^V_{21}) \|_2^2 \epsilon^2 \cdot
        N \epsilon^2 \| W^{KQ}_{11} W^E x_{\tau,q} \|_2^2
    \Bigr]
    \nonumber\\
    &= 2\epsilon^4 \cdot
        \E_{\tau} \| (w^V_{21}) \|_2^2 \cdot
        \E_\tau \Bigl[ \| W^{KQ}_{11} W^E x_{\tau,q} \|_2^2 \Bigr]
    \label{eq:icl:proof:at-loss-upp-bd:term-a3}
\end{align}
Inserting Eqs.(\ref{eq:icl:proof:at-loss-upp-bd:term-a1}),~(\ref{eq:icl:proof:at-loss-upp-bd:term-a2})~and~(\ref{eq:icl:proof:at-loss-upp-bd:term-a3}) into Eq.(\ref{eq:icl:proof:at-loss-upp-bd:bd1}, we eventually have that
\begin{align*}
    &\mathcal{L}_{\mathrm{LSAE}}^{\mathrm{adv}}(\theta)
    \nonumber\\
    &\leq 2 \cdot \E_{\tau} \Bigl|
        \begin{pmatrix} (w^V_{21})^\top & w^V_{22} \end{pmatrix} \cdot
            \frac{ \mathcal{E}(Z_\tau) \mathcal{E}(Z_\tau)^\top }{N}
        \cdot \begin{pmatrix} W^{KQ}_{11} \\ (w^{KQ}_{21})^\top \end{pmatrix} \cdot W^E x_{\tau,q} - y_{\tau,q}
    \Bigr|^2
    \nonumber\\
    &\quad + \frac{2 \epsilon^2}{N} \cdot \|w^V_{21}\|_2^2 \cdot \E_{\tau} \Bigl[
        \| \begin{pmatrix} W^E X_\tau \\ Y_\tau \end{pmatrix}^\top \begin{pmatrix} W^{KQ}_{11} \\ (w^{KQ}_{21})^\top \end{pmatrix} W^E x_{\tau,q} \|_2^2 \Bigr]
    \nonumber\\
    &\quad + \frac{2\epsilon^2}{N} \cdot
        \E_{\tau} \Bigl[ \| \begin{pmatrix} (w^V_{21})^\top \ \ w^V_{22} \end{pmatrix} \begin{pmatrix} W^E X_\tau \\ Y_\tau \end{pmatrix} \|_2^2 \Bigr] \cdot
        \E_{\tau} \Bigl[ \| W^{KQ}_{11} W^E x_{\tau,q} \|_2^2 \Bigr]
    \nonumber\\
    &\quad + 2\epsilon^4 \cdot
        \E_{\tau} \| (w^V_{21}) \|_2^2 \cdot
        \E_\tau \Bigl[ \| W^{KQ}_{11} W^E x_{\tau,q} \|_2^2 \Bigr]
    = \tilde{\mathcal L}^{\mathrm{adv}}_{\mathrm{LSAE}}(\theta),
\end{align*}
which completes the proof.
\end{proof}

\subsection{Proof of Theorem~\ref{thm:icl:closed-form-at}}
\label{app:proof:closed-form-at}

The proof idea of Theorem~\ref{thm:icl:closed-form-at} is similar to that in \citet{zhang2024trained} and \citet{fu2025short}.
Specifically, we first show that when training the LSA-E model $f^{\mathrm{adv}}_{\mathrm{LSAE}}(\theta)$ via continuous gradient flow, $w^{KQ}_{21}$ and $w^V_{21}$ stay zero during the surrogate ICL embedding AT (Lemma~\ref{lem:icl:pms-stay-zero}), which can help us further simplify the surrogate objective function $\tilde{\mathcal L}^{\mathrm{adv}}_{\mathrm{LSAE}}(\theta)$ (Lemma~\ref{lem:icl:simplify-surrogate-at-loss}).
The global minimizer for the surrogate ICL embedding AT problem is then derived based on the simplified surrogate objective function (Lemma~\ref{lem:icl:surrogate-at-minimizer}).

We start by stating and proving Lemma~\ref{lem:icl:pms-stay-zero}.
\begin{lemma}
\label{lem:icl:pms-stay-zero}
Suppose Assumption~\ref{ass:icl:init} holds and the LSAE model $f^{\mathrm{adv}}_{\mathrm{LSAE}}(\theta)$ is trained via minimizing the surrogate objective function $\tilde{\mathcal L}^{\mathrm{adv}}_{\mathrm{LSAE}}(\theta)$ in Eq.~(\ref{eq:icl:at-lsae-surrogate}) with continuous gradient flow.
Then, for any continuous training time $t \geq 0$, we uniformly have that $w^{KQ}_{21}(t) = w^{V}_{21}(t) = 0_{d\times 1}$.
\end{lemma}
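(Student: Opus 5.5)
Under gradient flow, $w^{KQ}_{21}$ and $w^V_{21}$ start at zero by Assumption~\ref{ass:icl:init}. The plan is a standard invariance argument. It suffices to show that both partial derivatives $\partial \tilde{\mathcal L}^{\mathrm{adv}}_{\mathrm{LSAE}}/\partial w^{KQ}_{21}$ and $\partial \tilde{\mathcal L}^{\mathrm{adv}}_{\mathrm{LSAE}}/\partial w^{V}_{21}$ vanish whenever $w^{KQ}_{21}=w^V_{21}=0$, whatever the values of $W^E$, $W^{KQ}_{11}$ and $w^V_{22}$. Then the reduced flow on the remaining parameters, with these two blocks frozen at zero, is a solution of the full gradient flow, and uniqueness of the ODE solution (the objective is a polynomial in $\theta$, hence locally Lipschitz gradient) gives $w^{KQ}_{21}(t)=w^V_{21}(t)=0$ for all $t\ge 0$. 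The parameters $w^{KQ}_{12},w^{KQ}_{22},W^V_{11},w^V_{12}$ do not enter $\tilde{\mathcal L}^{\mathrm{adv}}_{\mathrm{LSAE}}$ at all, so they play no role.

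For the gradient computation I treat each $\ell_i$ separately.

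For $\ell_2$ and $\ell_4$, each carries the prefactor $\|w^V_{21}\|_2^2$, which is quadratic in $w^V_{21}$. Their gradients in both $w^V_{21}$ and $w^{KQ}_{21}$ are therefore proportional to $w^V_{21}$ or $\|w^V_{21}\|_2^2$, and vanish at $w^V_{21}=0$.

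For $\ell_3$, the factor $\E_\tau\|W^{KQ}_{11}W^Ex_{\tau,q}\|^2$ does not involve $w^{KQ}_{21}$. The $w^V_{21}$-gradient is proportional to
\[
\E_\tau\bigl[W^EX_\tau\bigl((w^V_{21})^\top W^EX_\tau+w^V_{22}Y_\tau\bigr)^\top\bigr].
\]
At $w^V_{21}=0$ this becomes $w^V_{22}\E[W^EX_\tau X_\tau^\top w_\tau]$, which is zero because $w_\tau$ is mean zero and independent of $X_\tau$.

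For $\ell_1$, at $w^{KQ}_{21}=w^V_{21}=0$ the residual is
\[
r_\tau = w^V_{22}\tfrac1N Y_\tau X_\tau^\top (W^E)^\top W^{KQ}_{11}W^Ex_{\tau,q} - w_\tau^\top x_{\tau,q},
\]
which is odd in $w_\tau$. The $w^V_{21}$-gradient is $\E[r_\tau\cdot \tfrac1N W^EX_\tau X_\tau^\top(W^E)^\top W^{KQ}_{11}W^Ex_{\tau,q}]$, and the second factor there is even in $w_\tau$. The $w^{KQ}_{21}$-gradient is $\E[r_\tau\cdot w^V_{22}\tfrac1N Y_\tau Y_\tau^\top\, x_{\tau,q}^\top (W^E)^\top]$, whose second factor is even in $w_\tau$ (quadratic in $Y_\tau$). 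Both integrands are odd in $w_\tau$ for fixed $x$'s. Since $w_\tau\sim\mathcal N(0,I_{d_0})$ is symmetric and independent of the $x$'s, both expectations vanish.

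The main obstacle is bookkeeping the parity in $w_\tau$ correctly in $\ell_1$, particularly for the $w^{KQ}_{21}$ direction, where the derivative picks up the $Y_\tau Y_\tau^\top$ block of $\mathcal E(Z_\tau)\mathcal E(Z_\tau)^\top$. I would verify this by expanding $\mathcal E(Z_\tau)\mathcal E(Z_\tau)^\top$ into its four blocks. Its off-diagonal blocks $W^EX_\tau Y_\tau^\top$ are odd in $w_\tau$, while the diagonal blocks are even, and each gradient term must be checked to be a product of an odd and an even factor. A symmetric parity argument, also using $x_{\tau,q}\mapsto -x_{\tau,q}$ if needed, would serve as a backup.
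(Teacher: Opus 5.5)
Your proposal is correct and follows essentially the same route as the paper: you show term by term that the $w^{KQ}_{21}$- and $w^V_{21}$-gradients vanish on the set $w^{KQ}_{21}=w^V_{21}=0$, using the $\|w^V_{21}\|_2^2$ prefactor for $\ell_2,\ell_4$, $\E[w_\tau]=0$ for $\ell_3$, and vanishing odd moments of $w_\tau$ for $\ell_1$ (your odd-function phrasing is equivalent to the paper's count of one factor of $w_\tau$ times two, or zero, factors), and your ODE-uniqueness step makes rigorous the invariance that the paper only asserts. One small slip: the $w^V_{21}$-gradient of $\ell_1$ also contains the query-column term $\frac{1}{N}W^E x_{\tau,q}x_{\tau,q}^\top (W^E)^\top W^{KQ}_{11}W^E x_{\tau,q}$ from the top-left block of $\mathcal{E}(Z_\tau)\mathcal{E}(Z_\tau)^\top$, but it is free of $w_\tau$, so your parity argument is unaffected.
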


\begin{proof}
Under Assumption~\ref{ass:icl:init}, we already have that $w^{KQ}_{21}(0) = w^{V}_{21}(0) = 0_{d\times 1}$ at the initial training time $t=0$.
As a result, to prove Lemma~\ref{lem:icl:pms-stay-zero}, we only need to further show that gradients for parameters $w^{KQ}_{21}(t)$ and $w^V_{21}(t)$, which are given by continuous gradient flows, stay zero during the overall surrogate ICL embedding AT.
Formally, we need to prove that for any $t \geq 0$,
\begin{align*}
    &\partial_t w^{KQ}_{21}(t) := -\partial_{w^{KQ}_{21}} \tilde{\mathcal L}^{\mathrm{adv}}_{\mathrm{LSAE}}(\theta) = 0_{1\times d},
    \\
    &\partial_t w^{V}_{21}(t) := -\partial_{w^{V}_{21}} \tilde{\mathcal L}^{\mathrm{adv}}_{\mathrm{LSAE}}(\theta) = 0_{1\times d}.
\end{align*}
To this end, we adopt the decomposition of $\tilde{\mathcal L}^{\mathrm{adv}}_{\mathrm{LSAE}}(\theta)$ in Eq.~(\ref{eq:icl:at-lsae-surrogate}) here as follows,
\begin{align*}
    \tilde{\mathcal L}^{\mathrm{adv}}_{\mathrm{LSAE}}(\theta)
    := \ell_1(\theta) + \ell_2(\theta) + \ell_3(\theta) + \ell_4(\theta),
\end{align*}
where
\begin{align*}
    &\ell_1(\theta) := 2 \cdot \E_{\tau} \Bigl|
        \begin{pmatrix} (w^V_{21})^\top & w^V_{22} \end{pmatrix} \cdot
            \frac{ \mathcal{E}(Z_\tau) \mathcal{E}(Z_\tau)^\top }{N}
        \cdot \begin{pmatrix} W^{KQ}_{11} \\ (w^{KQ}_{21})^\top \end{pmatrix} \cdot W^E x_{\tau,q} - y_{\tau,q} \Bigr|^2,
    \\
    &\ell_2(\theta) := \frac{2 \epsilon^2}{N} \cdot \|w^V_{21}\|_2^2 \cdot \E_{\tau} \Bigl[
        \| \begin{pmatrix} W^E X_\tau \\ Y_\tau \end{pmatrix}^\top \begin{pmatrix} W^{KQ}_{11} \\ (w^{KQ}_{21})^\top \end{pmatrix} W^E x_{\tau,q} \|_2^2 \Bigr],
    \\
    &\ell_3(\theta) := \frac{2\epsilon^2}{N} \cdot
        \E_{\tau} \Bigl[ \| \begin{pmatrix} (w^V_{21})^\top \ \ w^V_{22} \end{pmatrix} \begin{pmatrix} W^E X_\tau \\ Y_\tau \end{pmatrix} \|_2^2 \Bigr] \cdot
        \E_{\tau} \Bigl[ \| W^{KQ}_{11} W^E x_{\tau,q} \|_2^2 \Bigr],
    \\
    &\ell_4(\theta) := 2\epsilon^4 \cdot
        \E_{\tau} \| (w^V_{21}) \|_2^2 \cdot
        \E_\tau \Bigl[ \| W^{KQ}_{11} W^E x_{\tau,q} \|_2^2 \Bigr],
\end{align*}
and $X_\tau := \begin{pmatrix} x_{\tau,1} & \cdots & x_{\tau,N} \end{pmatrix} \in \R^{d\times N}$, $Y_\tau := \begin{pmatrix} y_{\tau,1} & \cdots & y_{\tau,N} \end{pmatrix} \in \R^{1\times N}$.
Then, we will show that when $w^{KQ}_{21} = w^{V}_{21} = 0_{d\times 1}$, one always has
$\partial_{w^{KQ}_{21}} \ell_i(\theta) = \partial_{w^{V}_{21}} \ell_i(\theta) = 0_{1\times d}$
for every $i \in [4]$, which thus automatically demonstrates that
$\partial_{w^{KQ}_{21}} \tilde{\mathcal L}^{\mathrm{adv}}_{\mathrm{LSAE}}(\theta) = \partial_{w^{V}_{21}} \tilde{\mathcal L}^{\mathrm{adv}}_{\mathrm{LSAE}}(\theta) = 0_{1\times d}$
for any continuous training time $t \geq 0$ under Assumption~\ref{ass:icl:init}.

\textbf{Step 1: Show that $w^{KQ}_{21} = w^{V}_{21} = 0_{d\times 1}$ indicates $\partial_{w^{KQ}_{21}} \ell_1(\theta) = \partial_{w^{V}_{21}} \ell_1(\theta) = 0_{1\times d}$.}
For the $i$-th entry $w^{KQ}_{21,i}$ of $w^{KQ}_{21}$, we have that
\begin{align}
    &\partial_{w^{KQ}_{21,i}} \ell_1(\theta) \Bigr|_{w^{KQ}_{21} = w^{V}_{21} = 0_{d\times 1}}
    \nonumber\\
    &= 
    4 \cdot \E_{\tau} \Bigl[
        \Bigl( \begin{pmatrix} (w^V_{21})^\top & w^V_{22} \end{pmatrix} \cdot \frac{ \mathcal{E}(Z_\tau) \mathcal{E}(Z_\tau)^\top }{N} \cdot \begin{pmatrix} W^{KQ}_{11} \\ (w^{KQ}_{21})^\top \end{pmatrix} \cdot W^E x_{\tau,q} - y_{\tau,q} \Bigr)
    \nonumber\\
    &\qquad\qquad
        \cdot 
        \begin{pmatrix} (w^V_{21})^\top & w^V_{22} \end{pmatrix} \cdot \frac{ \mathcal{E}(Z_\tau) \mathcal{E}(Z_\tau)^\top }{N} \cdot \begin{pmatrix} 0_{d\times d} \\ e_i^\top \end{pmatrix} \cdot W^E x_{\tau,q}
    \Bigr]_{w^{KQ}_{21} = w^{V}_{21} = 0_{d\times 1}}
    \nonumber\\
    &= 
    4 \cdot \E_{\tau} \Bigl[
        \Bigl( \begin{pmatrix} 0_{1\times d} & w^V_{22} \end{pmatrix} \cdot \frac{ \mathcal{E}(Z_\tau) \mathcal{E}(Z_\tau)^\top }{N} \cdot \begin{pmatrix} W^{KQ}_{11} \\ 0_{1\times d} \end{pmatrix} \cdot W^E x_{\tau,q} - y_{\tau,q} \Bigr)
    \nonumber\\
    &\qquad\qquad
        \cdot 
        \begin{pmatrix} 0_{1\times d} & w^V_{22} \end{pmatrix} \cdot \frac{ \mathcal{E}(Z_\tau) \mathcal{E}(Z_\tau)^\top }{N} \cdot \begin{pmatrix} 0_{d\times d} \\ e_i^\top \end{pmatrix} \cdot W^E x_{\tau,q}
    \Bigr]
    \nonumber\\
    &= 
    4 \cdot \E_{\tau} \Bigl[
        \Bigl( \frac{1}{N} \cdot w^V_{22} \cdot \begin{pmatrix} Y_{\tau} & 0 \end{pmatrix} \cdot \begin{pmatrix} W^E X_{\tau} & W^E x_{\tau,q} \end{pmatrix}^\top \cdot W^{KQ}_{11} \cdot W^E x_{\tau,q} - y_{\tau,q} \Bigr)
    \nonumber\\
    &\qquad\qquad
        \cdot 
        \Bigl( \frac{1}{N} \cdot w^V_{22} \cdot \begin{pmatrix} Y_{\tau} & 0 \end{pmatrix} \cdot \begin{pmatrix} Y_{\tau} & 0 \end{pmatrix}^\top \cdot e_i^\top \cdot W^E x_{\tau,q} \Bigr)
    \Bigr]
    \nonumber\\
    &= 
    4 \cdot \E_{\tau} \Bigl[
        \Bigl( \frac{1}{N} \cdot w^V_{22} \cdot w_\tau^\top \cdot X_{\tau} \cdot ( W^E X_{\tau} )^\top \cdot W^{KQ}_{11} \cdot W^E x_{\tau,q} - w_\tau^\top \cdot x_{\tau,q} \Bigr)
    \nonumber\\
    &\qquad\qquad
        \cdot 
        \Bigl( \frac{1}{N} \cdot w^V_{22} \cdot w_{\tau}^\top \cdot X_{\tau} X_{\tau}^\top \cdot w_\tau \cdot e_i^\top \cdot W^E x_{\tau,q} \Bigr)
    \Bigr],
    \label{eq:icl:proof:pms-stay-zero:step-1:decompose-1}
\end{align}
where $e_i \in \R^{d\times 1}$ denotes an elementary vector that its $i$-th entry is $1$ and all remaining entries are $0$.
We then have two observations for Eq.~(\ref{eq:icl:proof:pms-stay-zero:step-1:decompose-1}):
(1)~for the first multiplication term in Eq.~(\ref{eq:icl:proof:pms-stay-zero:step-1:decompose-1}), each of its summarization term contains exactly one element from the task parameter $w_\tau$;
and (2)~for the second multiplication term in Eq.~(\ref{eq:icl:proof:pms-stay-zero:step-1:decompose-1}), each of its summarization term contains exactly two elements from $w_\tau$.
Based on these observations and the independency of $w_\tau$ with respect to $X_\tau$ and $x_{\tau,q}$, Eq.~(\ref{eq:icl:proof:pms-stay-zero:step-1:decompose-1}) can further be re-organized as follows,
\begin{align}
    &\partial_{w^{KQ}_{21,i}} \ell_1(\theta) \Bigr|_{w^{KQ}_{21} = w^{V}_{21} = 0_{d\times 1}}
    = \sum_{k=1}^d \sum_{j=1}^d \sum_{l=1}^d \E_\tau \Bigl[ B_{j,k,l}(X_\tau,x_{\tau,q},\theta) \Bigr] \cdot \E_{\tau} \Bigl[ w_{\tau,k} \cdot w_{\tau,j} \cdot w_{\tau,l} \Bigr],
    \label{eq:icl:proof:pms-stay-zero:step-1:decompose-2}
\end{align}
where $\E_\tau [ B_{j,k,l}(X_\tau,x_{\tau,q},\theta) ]$ is the coefficient for the term $\E_{\tau} [ w_{\tau,k} \cdot w_{\tau,j} \cdot w_{\tau,l} ]$ and depends only on $X_\tau$, $x_{\tau,q}$ and $\theta$.
Recall that $w_\tau \sim \mathcal N(0,I_{d_0})$, which means $\E_\tau [w_{\tau,k} \cdot w_{\tau,j} \cdot w_{\tau,l}] = 0$ holds for any $k,j,l \in [d]$.
Combine this result with Eq.~(\ref{eq:icl:proof:pms-stay-zero:step-1:decompose-2}), we thus have
\begin{align*}
    \partial_{w^{KQ}_{21,i}} \ell_1(\theta) \Bigr|_{w^{KQ}_{21} = w^{V}_{21} = 0_{d\times 1}} = 0,
    \quad \forall i \in [d],
\end{align*}
which indicates 
$\partial_{w^{KQ}_{21}} \ell_1(\theta) \Bigr|_{w^{KQ}_{21} = w^{V}_{21} = 0_{d\times 1}} = 0_{1\times d}$.

Besides, for the $i$-th element $w^{V}_{21,i}$ of $w^V_{21}$, we similarly have that
\begin{align}
    &\partial_{w^{V}_{21,i}} \ell_1(\theta) \Bigr|_{w^{KQ}_{21} = w^{V}_{21} = 0_{d\times 1}}
    \nonumber\\
    &= 
    4 \cdot \E_{\tau} \Bigl[
        \Bigl( \begin{pmatrix} (w^V_{21})^\top & w^V_{22} \end{pmatrix} \cdot \frac{ \mathcal{E}(Z_\tau) \mathcal{E}(Z_\tau)^\top }{N} \cdot \begin{pmatrix} W^{KQ}_{11} \\ (w^{KQ}_{21})^\top \end{pmatrix} \cdot W^E x_{\tau,q} - y_{\tau,q} \Bigr)
    \nonumber\\
    &\qquad\qquad
        \cdot 
        \Bigl( \begin{pmatrix} e_i^\top & 0 \end{pmatrix} \cdot \frac{ \mathcal{E}(Z_\tau) \mathcal{E}(Z_\tau)^\top }{N} \cdot \begin{pmatrix} W^{KQ}_{11} \\ (w^{KQ}_{21})^\top \end{pmatrix} \cdot W^E x_{\tau,q} \Bigr)
    \Bigr]_{w^{KQ}_{21} = w^{V}_{21} = 0_{d\times 1}}
    \nonumber\\
    &= 
    4 \cdot \E_{\tau} \Bigl[
        \Bigl( \frac{1}{N} \cdot w^V_{22} \cdot w_\tau^\top \cdot X_{\tau} \cdot (W^E X_{\tau})^\top \cdot W^{KQ}_{11} \cdot W^E x_{\tau,q} - w_\tau^\top \cdot x_{\tau,q} \Bigr)
    \nonumber\\
    &\qquad\qquad
        \cdot 
        \Bigl( \frac{1}{N} \cdot e_i^\top \cdot \begin{pmatrix} W^E X_\tau & W^E x_{\tau,q} \end{pmatrix} \cdot \begin{pmatrix} W^E X_\tau & W^E x_{\tau,q} \end{pmatrix}^\top \cdot W^{KQ}_{11} \cdot W^E x_{\tau,q} \Bigr) \Bigr].
    \label{eq:icl:proof:pms-stay-zero:step-1:decompose-3}
\end{align}
From Eq.~(\ref{eq:icl:proof:pms-stay-zero:step-1:decompose-3}) we notice that:
(1)~each summarization term in the first multiplication term of Eq.~(\ref{eq:icl:proof:pms-stay-zero:step-1:decompose-3}) contains exactly one element from $w_\tau$,
and (2)~the second multiplication term of Eq.~(\ref{eq:icl:proof:pms-stay-zero:step-1:decompose-3}) does not contain any element from $w_\tau$.
Thus, Eq.~(\ref{eq:icl:proof:pms-stay-zero:step-1:decompose-3}) can be re-organized as below,
\begin{align}
    &\partial_{w^{V}_{21,i}} \ell_1(\theta) \Bigr|_{w^{KQ}_{21} = w^{V}_{21} = 0_{d\times 1}}
    = \sum_{k=1}^d \E_\tau \Bigl[ B_{j}'(X_\tau,x_{\tau,q},\theta) \Bigr] \cdot \E_{\tau} [ w_{\tau,k} ],
    \label{eq:icl:proof:pms-stay-zero:step-1:decompose-4}
\end{align}
where $\E_\tau [ B_{j}'(X_\tau,x_{\tau,q},\theta) ]$ is the coefficient for the term $\E_{\tau} [ w_{\tau,k} ]$ and depends only on $X_\tau$, $x_{\tau,q}$ and $\theta$.
As a result, by again applying $w_\tau \sim \mathcal N(0,I_{d_0})$, we have $\E_{\tau} [w_{\tau,k}] = 0$ for any $k \in [d]$,
which means
\begin{align*}
    \partial_{w^{V}_{21,i}} \ell_1(\theta) \Bigr|_{w^{KQ}_{21} = w^{V}_{21} = 0_{d\times 1}} = 0
    \quad \forall i \in [d],
\end{align*}
and thus indicates
$\partial_{w^{V}_{21}} \ell_1(\theta) \Bigr|_{w^{KQ}_{21} = w^{V}_{21} = 0_{d\times 1}} = 0_{1\times d}$

\textbf{Step 2: Show that $w^{KQ}_{21} = w^{V}_{21} = 0_{d\times 1}$ indicates $\partial_{w^{KQ}_{21}} \ell_2(\theta) = \partial_{w^{V}_{21}} \ell_2(\theta) = 0_{1\times d}$.}
For $\partial_{w^{KQ}_{21}} \ell_2(\theta)$, we have
\begin{align*}
    \partial_{w^{KQ}_{21}} \ell_2(\theta) \Bigr|_{w^{KQ}_{21}=w^V_{21}=0_{d\times 1}}
    % \nonumber\\
    &= \Bigl[ \frac{2 \epsilon^2}{N} \cdot \|w^V_{21}\|_2^2 \cdot \partial_{w^{KQ}_{21}} \Bigl( \E_{\tau} \| \begin{pmatrix} W^E X_\tau \\ Y_\tau \end{pmatrix}^\top \begin{pmatrix} W^{KQ}_{11} \\ (w^{KQ}_{21})^\top \end{pmatrix} W^E x_{\tau,q} \|_2^2 \Bigr) \Bigr]_{w^{KQ}_{21}=w^V_{21}=0_{d\times 1}}
    \nonumber\\
    &= \frac{2 \epsilon^2}{N} \cdot \|0_{d\times 1}\|_2^2 \cdot \partial_{w^{KQ}_{21}} \Bigl( \E_{\tau} \| \begin{pmatrix} W^E X_\tau \\ Y_\tau \end{pmatrix}^\top \begin{pmatrix} W^{KQ}_{11} \\ (w^{KQ}_{21})^\top \end{pmatrix} W^E x_{\tau,q} \|_2^2 \Bigr)_{w^{KQ}_{21}=0_{d\times 1}}
    \nonumber\\
    &= 0_{1\times d}.
\end{align*}
For $\partial_{w^{V}_{21}} \ell_2(\theta)$, we have
\begin{align*}
    \partial_{w^{V}_{21}} \ell_2(\theta) \Bigr|_{w^{KQ}_{21}=w^V_{21}=0_{d\times 1}}
    % \nonumber\\
    &= \Bigl[ \frac{2 \epsilon^2}{N} \cdot 2 \cdot (w^V_{21})^\top \cdot \E_{\tau} \| \begin{pmatrix} W^E X_\tau \\ Y_\tau \end{pmatrix}^\top \begin{pmatrix} W^{KQ}_{11} \\ (w^{KQ}_{21})^\top \end{pmatrix} W^E x_{\tau,q} \|_2^2 \Bigr]_{w^{KQ}_{21}=w^V_{21}=0_{d\times 1}}
    \nonumber\\
    &= \Bigl[ \frac{2 \epsilon^2}{N} \cdot 2 \cdot (0_{d\times 1})^\top \cdot \E_{\tau} \| \begin{pmatrix} W^E X_\tau \\ Y_\tau \end{pmatrix}^\top \begin{pmatrix} W^{KQ}_{11} \\ (w^{KQ}_{21})^\top \end{pmatrix} W^E x_{\tau,q} \|_2^2 \Bigr]_{w^{KQ}_{21}=0_{d\times 1}}
    \nonumber\\
    &= 0_{1\times d}.
\end{align*}

\textbf{Step 3: Show that $w^{KQ}_{21} = w^{V}_{21} = 0_{d\times 1}$ indicates $\partial_{w^{KQ}_{21}} \ell_3(\theta) = \partial_{w^{V}_{21}} \ell_3(\theta) = 0_{1\times d}$.}
For $w^{KQ}_{21}$, since it does not exist in $\ell_3(\theta)$, thus we always have
$\partial_{w^{KQ}_{21}} \ell_3(\theta) = 0$.

Besides, for the $i$-th element $w^{V}_{21,i}$ of $w^{V}_{21}$, we have that
\begin{align}
    &\partial_{w^V_{21,i}} \ell_3(\theta) \Bigr|_{w^{KQ}_{21} = w^V_{21} = 0_{d\times 1}}
    \nonumber\\
    &= \frac{2\epsilon^2}{N} \cdot
        \E_{\tau} \Bigl[ 2 \cdot \begin{pmatrix} (w^V_{21})^\top & w^V_{22} \end{pmatrix} \cdot \begin{pmatrix} W^E X_\tau \\ Y_\tau \end{pmatrix} \cdot \begin{pmatrix} W^E X_\tau \\ Y_\tau \end{pmatrix}^\top \cdot \begin{pmatrix} (e_i)^\top & 0 \end{pmatrix}^\top \Bigr]_{w^{KQ}_{21} = w^V_{21} = 0_{d\times 1}}
        \cdot \E_{\tau} \Bigl[ \| W^{KQ}_{11} W^E x_{\tau,q} \|_2^2 \Bigr]
    \nonumber\\
    &= \frac{2\epsilon^2}{N} \cdot
        \E_{\tau} \Bigl[ 2 \cdot \begin{pmatrix} (0_{d\times 1})^\top & w^V_{22} \end{pmatrix} \cdot \begin{pmatrix} W^E X_\tau \\ Y_\tau \end{pmatrix} \cdot \begin{pmatrix} W^E X_\tau \\ Y_\tau \end{pmatrix}^\top \cdot \begin{pmatrix} (e_i)^\top & 0 \end{pmatrix}^\top \Bigr]
        \cdot \E_{\tau} \Bigl[ \| W^{KQ}_{11} W^E x_{\tau,q} \|_2^2 \Bigr]
    \nonumber\\
    &= \frac{2\epsilon^2}{N} \cdot
        \E_{\tau} \Bigl[ 2 \cdot w^V_{22} \cdot Y_\tau \cdot (W^E X_\tau)^\top \cdot e_i \Bigr] \cdot \E_{\tau} \Bigl[ \| W^{KQ}_{11} W^E x_{\tau,q} \|_2^2 \Bigr]
    \nonumber\\
    &= \frac{4\epsilon^2}{N} \cdot
        \E_{\tau} \Bigl[ w^V_{22} \cdot w_\tau^\top \cdot W^E X_\tau \cdot (W^E X_\tau)^\top \cdot e_i \Bigr]
        \cdot \E_{\tau} \Bigl[ \| W^{KQ}_{11} W^E x_{\tau,q} \|_2^2 \Bigr]
    \nonumber\\
    &= \frac{4\epsilon^2}{N} \cdot
        w^V_{22} \cdot \E_{\tau}[ w_\tau^\top ] \cdot \E_{\tau} \Bigl[ W^E X_\tau \cdot (W^E X_\tau)^\top \cdot e_i \Bigr]
        \cdot \E_{\tau} \Bigl[ \| W^{KQ}_{11} W^E x_{\tau,q} \|_2^2 \Bigr]
    \nonumber\\
    &= \frac{4\epsilon^2}{N} \cdot
        w^V_{22} \cdot 0_{1\times d} \cdot \E_{\tau} \Bigl[ W^E X_\tau \cdot (W^E X_\tau)^\top \cdot e_i \Bigr]
        \cdot \E_{\tau} \Bigl[ \| W^{KQ}_{11} W^E x_{\tau,q} \|_2^2 \Bigr]
    % \nonumber\\
    = 0,
    \nonumber
\end{align}
where $e_i \in \R^{d\times 1}$ is an elementary vector that its $i$-th entry is $1$ and all other remaining entries are $0$.
As a result, we thus have
\begin{align*}
    \partial_{w^V_{21}} \ell_3(\theta) \Bigr|_{w^{KQ}_{21} = w^V_{21} = 0_{d\times 1}}
    = 0_{1\times d}.
\end{align*}

\textbf{Step 4: Show that $w^{KQ}_{21} = w^{V}_{21} = 0_{d\times 1}$ indicates $\partial_{w^{KQ}_{21}} \ell_4(\theta) = \partial_{w^{V}_{21}} \ell_4(\theta) = 0_{1\times d}$.}
For $w^{KQ}_{21}$, since it does not exists in $\ell_4(\theta)$, thus we always have
$\partial_{w^{KQ}_{21}} \ell_3(\theta) = 0$.

Besides, for $\partial_{w^{V}_{21}} \ell_4(\theta)$, we have
\begin{align*}
    \partial_{w^{V}_{21}} \ell_4(\theta) \Bigr|_{w^{KQ}_{21} = w^V_{21} = 0_{d\times 1}}
    % \\
    &= 2\epsilon^4 \cdot
        [ 2 \cdot (w^V_{21})^\top ]_{w^V_{21}=0_{d\times 1}} \cdot
        \E_\tau \Bigl[ \| W^{KQ}_{11} W^E x_{\tau,q} \|_2^2 \Bigr]
    \\
    &= 2\epsilon^4 \cdot
        [ 2 \cdot (0_{d\times 1})^\top ] \cdot
        \E_\tau \Bigl[ \| W^{KQ}_{11} W^E x_{\tau,q} \|_2^2 \Bigr]
    \\
    &= 0_{1\times d}.
\end{align*}

The proof is completed.
\end{proof}

Based on Lemma~\ref{lem:icl:pms-stay-zero}, we then simplify the objective function $\tilde{\mathcal L}^{\mathrm{adv}}_{\mathrm{LSAE}}(\theta)$ in the surrogate ICL embedding AT in Eq.~(\ref{eq:icl:at-lsae-surrogate}), as shown in the following Lemma~\ref{lem:icl:simplify-surrogate-at-loss}.
\begin{lemma}
\label{lem:icl:simplify-surrogate-at-loss}
Under Assumption~\ref{ass:icl:init}, the surrogate ICL embedding AT objective function $\tilde{\mathcal L}^{\mathrm{adv}}_{\mathrm{LSAE}}(\theta)$ defined in Eq.~(\ref{eq:icl:at-lsae-surrogate}) can be simplified as
\begin{align*}
    \tilde{\mathcal L}^{\mathrm{adv}}_{\mathrm{LSAE}}(\theta)
    & = 2 (w^V_{22})^2 \cdot \mathrm{Tr}\Bigl[ ( W^E \Gamma_N \Lambda (W^E)^\top + \mathrm{Tr}(\Lambda) \epsilon^2 I_d ) \cdot (W^{KQ}_{11} W^E \Lambda^{\frac{1}{2}}) \cdot (W^{KQ}_{11} W^E \Lambda^{\frac{1}{2}})^\top \Bigr]
    \nonumber\\
    &\quad - 4 w^V_{22} \cdot \mathrm{Tr}\Bigl[ (W^{KQ}_{11} W^E \Lambda^{\frac{1}{2}}) \cdot \Lambda^{\frac{3}{2}} (W^E)^\top \Bigr] + 2 \mathrm{Tr}(\Lambda),
\end{align*}
where $\Gamma_N := (\frac{N+1}{N} \Lambda + \frac{1}{N} \mathrm{Tr}(\Lambda) I_{d_0}) \in \R^{d_0\times d_0}$.
\end{lemma}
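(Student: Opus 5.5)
The plan is to evaluate the four terms $\ell_1,\dots,\ell_4$ of Eq.~(\ref{eq:icl:at-lsae-surrogate}) in closed form on the parameter set where $w^{KQ}_{21}=w^V_{21}=0_{d\times 1}$. By Lemma~\ref{lem:icl:pms-stay-zero}, this set contains every point of the gradient-flow trajectory under Assumption~\ref{ass:icl:init}.

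\textbf{Step 1: the terms that vanish.} Setting $w^V_{21}=0$ immediately gives $\ell_2(\theta)=\ell_4(\theta)=0$, since both carry the factor $\|w^V_{21}\|_2^2$.

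\textbf{Step 2: the term $\ell_3$.} Here the row vector reduces to $w^V_{22}Y_\tau = w^V_{22} w_\tau^\top X_\tau$. Taking the expectation over $w_\tau$ first and then using Lemma~\ref{lem:icl:tech-x-quad} gives
\[
\E_\tau\|w^V_{22}Y_\tau\|_2^2=(w^V_{22})^2\,\E\,\mathrm{Tr}(X_\tau^\top X_\tau)=N\,\mathrm{Tr}(\Lambda)\,(w^V_{22})^2 .
\]
Lemma~\ref{lem:icl:tech-x-quad} also gives
\[
\E\|W^{KQ}_{11}W^E x_{\tau,q}\|_2^2=\mathrm{Tr}\bigl[(W^{KQ}_{11}W^E\Lambda^{1/2})(W^{KQ}_{11}W^E\Lambda^{1/2})^\top\bigr].
\]
Multiplying, $\ell_3 = 2\epsilon^2\mathrm{Tr}(\Lambda)(w^V_{22})^2\,\mathrm{Tr}[\cdots]$. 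This is exactly the $\mathrm{Tr}(\Lambda)\epsilon^2 I_d$ part of the claimed quadratic term.

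\textbf{Step 3: the term $\ell_1$.} Write $S_\tau := X_\tau X_\tau^\top/N$ and $A := (W^E)^\top W^{KQ}_{11}W^E$. As in the proof of Lemma~\ref{lem:icl:pms-stay-zero}, the prediction becomes $w^V_{22}\,w_\tau^\top S_\tau A x_{\tau,q}$, so
\[
\ell_1 = 2\,\E\bigl[w_\tau^\top (w^V_{22}S_\tau A - I)x_{\tau,q}\bigr]^2 .
\]
Averaging over $w_\tau\sim\mathcal N(0,I)$ and then over $x_{\tau,q}$ gives
\[
\ell_1 = 2\,\E\,\mathrm{Tr}\bigl[(w^V_{22}S_\tau A-I)\Lambda(w^V_{22}S_\tau A-I)^\top\bigr].
\]
Expanding this yields three pieces.
\begin{itemize}
\item The constant is $2\mathrm{Tr}(\Lambda)$.
\item The cross term is $-4w^V_{22}\mathrm{Tr}(\E[S_\tau]A\Lambda)=-4w^V_{22}\mathrm{Tr}(\Lambda A\Lambda)$. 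Cyclic permutation (Lemma~\ref{lem:icl:tech:mat-permute}) rewrites this as $-4w^V_{22}\mathrm{Tr}[(W^{KQ}_{11}W^E\Lambda^{1/2})\Lambda^{3/2}(W^E)^\top]$.
\item The quadratic term is $2(w^V_{22})^2\mathrm{Tr}(A\Lambda A^\top\E[S_\tau^2])$.
\end{itemize}

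\textbf{Step 4: the second moment $\E[S_\tau^2]$.} Split $S_\tau^2$ into $N$ diagonal pairs and $N(N-1)$ off-diagonal pairs of columns. Lemma~\ref{lem:icl:tech-x-pow4} with $A=I$ gives
\[
\E[S_\tau^2]=\tfrac{1}{N}\bigl(2\Lambda^2+\mathrm{Tr}(\Lambda)\Lambda\bigr)+\tfrac{N-1}{N}\Lambda^2=\Gamma_N\Lambda .
\]
This matrix is symmetric because $\Gamma_N$ commutes with $\Lambda$. Substituting $A$ and permuting cyclically turns the quadratic term into
\[
2(w^V_{22})^2\mathrm{Tr}\bigl[W^E\Gamma_N\Lambda(W^E)^\top (W^{KQ}_{11}W^E\Lambda^{1/2})(W^{KQ}_{11}W^E\Lambda^{1/2})^\top\bigr].
\]
Adding the result of Step 2 then gives the stated formula.

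The main obstacle is the fourth-moment bookkeeping in Step 4: the diagonal versus off-diagonal split must be counted correctly, and the transposes of $A$ (which is not symmetric in general) must be handled carefully so that the $\Gamma_N\Lambda$ factor lands next to $W^E$ in the claimed order.
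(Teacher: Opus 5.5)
Your proposal is correct and follows the paper's proof. Both use Lemma~\ref{lem:icl:pms-stay-zero} to set $w^V_{21}=w^{KQ}_{21}=0$, which kills $\ell_2$ and $\ell_4$; both evaluate $\ell_3$ via $\E\|Y_\tau\|_2^2=N\,\mathrm{Tr}(\Lambda)$, and both reduce $\ell_1$ using $\E[X_\tau X_\tau^\top X_\tau X_\tau^\top]/N^2=\Gamma_N\Lambda$ from the Gaussian fourth-moment lemma before recombining with cyclic trace permutations. The only difference is cosmetic: you average over $w_\tau$ and $x_{\tau,q}$ first to write $\ell_1$ as a single trace, whereas the paper expands the square term by term.
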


\begin{proof}
When Assumption~\ref{ass:icl:init} holds, by applying Lemma~\ref{lem:icl:pms-stay-zero}, $w^{KQ}_{21}$ and $w^{V}_{21}$ become zero vectors during the surrogate AT.
Then, $\ell_2(\theta)$ and $\ell_4(\theta)$ in the surrogate AT loss $\tilde{\mathcal L}^{\mathrm{adv}}_{\mathrm{LSAE}}(\theta)$ will stay zero.

Besides, for $\ell_1(\theta)$ in $\tilde{\mathcal L}^{\mathrm{adv}}_{\mathrm{LSAE}}(\theta)$ in Eq.~(\ref{eq:icl:at-lsae-surrogate}), it becomes
\begin{align}
    &\ell_1(\theta)
    = 2 \cdot \E_{\tau} \Bigl|
        \begin{pmatrix} (0_{1\times d} & w^V_{22} \end{pmatrix} \cdot
            \frac{ \mathcal{E}(Z_\tau) \mathcal{E}(Z_\tau)^\top }{N}
        \cdot \begin{pmatrix} W^{KQ}_{11} \\ 0_{1\times d} \end{pmatrix} \cdot W^E x_{\tau,q} - y_{\tau,q} \Bigr|^2
    \nonumber\\
    &= 2 \cdot \E_{\tau} | \frac{1}{N} \cdot  w^V_{22} \cdot \begin{pmatrix} Y_\tau & 0 \end{pmatrix} \cdot \begin{pmatrix} W^E X_\tau & W^E x_{\tau,q} \end{pmatrix}^\top \cdot W^{KQ}_{11} \cdot W^E x_{\tau,q} - y_{\tau,q} |^2
    \nonumber\\
    &= \frac{ 2 (w^V_{22})^2 }{N^2} \E_{\tau} \cdot \Bigl[
        x_{\tau,q}^\top ((W^E)^\top W^{KQ}_{11} W^E)^\top X_\tau Y_\tau^\top
        \cdot
        Y_\tau X_\tau^\top ((W^E)^\top W^{KQ}_{11} W^E) x_{\tau,q}
    \Bigr]
    \nonumber\\
    &\quad
        - \frac{4 w^V_{22}}{N} \cdot \E_\tau \Bigl[ Y_\tau X_\tau^\top ((W^E)^\top W^{KQ}_{11} W^E) x_{\tau,q} \cdot y_{\tau,q} \Bigr]
        + 2 \cdot \E_\tau [ y_{\tau,q}^2 ]
    \nonumber\\
    &= \frac{ 2 (w^V_{22})^2 }{N^2} \E_{\tau} \Bigl[
        x_{\tau,q}^\top ((W^E)^\top W^{KQ}_{11} W^E)^\top X_\tau X_\tau^\top
        \cdot \E_\tau[ w_\tau w_\tau^\top] \cdot
        X_\tau X_\tau^\top ((W^E)^\top W^{KQ}_{11} W^E) x_{\tau,q}
    \Bigr]
    \nonumber\\
    &\quad
        - \frac{4 w^V_{22}}{N} \cdot \E_\tau \Bigl[ w_\tau^\top \cdot X_\tau X_\tau^\top ((W^E)^\top W^{KQ}_{11} W^E) x_{\tau,q} x_{\tau,q}^\top \cdot w_\tau \Bigr]
        + 2 \cdot \E_\tau [ w_\tau^\top \cdot x_{\tau,q} x_{\tau,q}^\top \cdot w_\tau ]
    \nonumber\\
    &= 2 (w^V_{22})^2 \cdot \underbrace{ \mathrm{Tr} \Bigl[
        ((W^E)^\top W^{KQ}_{11} W^E)^\top
        \cdot \E_\tau [ \frac{1}{N^2} X_\tau X_\tau^\top X_\tau X_\tau^\top ]
        \cdot ((W^E)^\top W^{KQ}_{11} W^E) \cdot \Lambda
    \Bigr] }_{\text{Lemma~\ref{lem:icl:tech-x-quad}}}
    \nonumber\\
    &\quad
        - 4 w^V_{22} \cdot \underbrace{ \mathrm{Tr}\Bigl[ \frac{1}{N} \E_\tau [X_\tau X_\tau^\top] \cdot ((W^E)^\top W^{KQ}_{11} W^E) \cdot \E_{\tau}[x_{\tau,q} x_{\tau,q}^\top ] \Bigr] }_{\text{Lemma~\ref{lem:icl:tech-x-quad}}}
        + 2 \mathrm{Tr}(\Lambda)
    \nonumber\\
    & = 2 (w^V_{22})^2 \mathrm{Tr} \Bigl[
        ((W^E)^\top W^{KQ}_{11} W^E)^\top
        \cdot \frac{1}{N^2} \Bigl( \sum_{i=1}^N \E_\tau [ x_{\tau,i} x_{\tau,i}^\top x_{\tau,i} x_{\tau,i}^\top ] + \sum_{i \neq j} \E_\tau[ x_{\tau,i} x_{\tau,i}^\top x_{\tau,j} x_{\tau,j}^\top ] \Bigr)
        \cdot ((W^E)^\top W^{KQ}_{11} W^E) \cdot \Lambda
    \Bigr]
    \nonumber\\
    &\quad
        - 4 w^V_{22} \cdot \mathrm{Tr}\Bigl[ \Lambda \cdot ((W^E)^\top W^{KQ}_{11} W^E) \cdot \Lambda \Bigr]
        + 2 \mathrm{Tr}(\Lambda)
    \nonumber\\
    & = 2 (w^V_{22})^2 \mathrm{Tr} \Bigl[
        ((W^E)^\top W^{KQ}_{11} W^E)^\top
        \cdot \frac{1}{N^2} \Bigl( N \underbrace{ (2 \Lambda^2 + \mathrm{Tr}(\Lambda) \Lambda) }_{\text{Lemma~\ref{lem:icl:tech-x-pow4}}} + (N^2-N) \Lambda^2 \Bigr)
        \cdot ((W^E)^\top W^{KQ}_{11} W^E) \cdot \Lambda
    \Bigr]
    \nonumber\\
    &\quad
        - 4 w^V_{22} \cdot \mathrm{Tr}\Bigl[ \Lambda \cdot ((W^E)^\top W^{KQ}_{11} W^E) \cdot \Lambda \Bigr]
        + 2 \mathrm{Tr}(\Lambda)
    \nonumber\\
    & = 2 (w^V_{22})^2 \mathrm{Tr} \Bigl[ ((W^E)^\top W^{KQ}_{11} W^E)^\top \cdot \Gamma_N \Lambda \cdot ((W^E)^\top W^{KQ}_{11} W^E) \cdot \Lambda \Bigr]
    \nonumber\\
    &\quad - 4 w^V_{22} \cdot \mathrm{Tr}\Bigl[ \Lambda \cdot ((W^E)^\top W^{KQ}_{11} W^E) \cdot \Lambda \Bigr] + 2 \mathrm{Tr}(\Lambda),
    \label{eq:icl:proof:simplify-surrogate-at-loss:new-l-1}
\end{align}
where $\Gamma_N := (\frac{N+1}{N} \Lambda + \frac{1}{N} \mathrm{Tr}(\Lambda) I_{d_0}) \in \R^{d_0\times d_0}$.

For $\ell_3(\theta)$ in $\tilde{\mathcal L}^{\mathrm{adv}}_{\mathrm{LSAE}}(\theta)$ in Eq.~(\ref{eq:icl:at-lsae-surrogate}), we have
\begin{align}
    &\ell_3(\theta)
    = \frac{2\epsilon^2}{N} \cdot
        \E_{\tau} \Bigl[ \| \begin{pmatrix} 0_{1\times d} & w^V_{22} \end{pmatrix} \begin{pmatrix} W^E X_{\tau} \\ Y_{\tau} \end{pmatrix} \|_2^2 \Bigr] \cdot
        \E_{\tau} \Bigl[ \| W^{KQ}_{11} W^E x_{\tau,q} \|_2^2 \Bigr]
    \nonumber \\
    &= \frac{2\epsilon^2 (w^V_{22})^2 }{N} \cdot
        \E_{\tau} \| Y_{\tau} \|_2^2 \cdot
        \E_{\tau} \Bigl[ x_{\tau,q}^\top (W^{KQ}_{11} W^E)^\top W^{KQ}_{11} W^E x_{\tau,q} \Bigr]
    \nonumber \\
    &= 2\epsilon^2 (w^V_{22})^2 \cdot
        \frac{1}{N} \sum_{i=1}^N \E_{\tau} [ x_{\tau,i}^\top w_\tau w_\tau^\top x_{\tau,i} ] \cdot
        \underbrace{ \mathrm{Tr}\Bigl[ (W^{KQ}_{11} W^E)^\top W^{KQ}_{11} W^E \Lambda \Bigr] }_{\text{Lemma~\ref{lem:icl:tech-x-quad}}}
    \nonumber \\
    &= 2\epsilon^2 (w^V_{22})^2 \cdot \mathrm{Tr}(\Lambda) \cdot \mathrm{Tr}\Bigl[ (W^{KQ}_{11} W^E)^\top W^{KQ}_{11} W^E \Lambda \Bigr].
    \label{eq:icl:proof:simplify-surrogate-at-loss:new-l-2}
\end{align}

Finally, by inserting Eqs.~(\ref{eq:icl:proof:simplify-surrogate-at-loss:new-l-1}) and~(\ref{eq:icl:proof:simplify-surrogate-at-loss:new-l-2}) back into the surrogate AT loss $\tilde{\mathcal L}^{\mathrm{adv}}_{\mathrm{LSAE}}(\theta)$ in Eq.~(\ref{eq:icl:at-lsae-surrogate}) and repeatedly applying Lemma~\ref{lem:icl:tech:mat-permute} and the commutativity between $\Lambda$ and $\Gamma_N$, we thus have
\begin{align}
    \tilde{\mathcal L}^{\mathrm{adv}}_{\mathrm{LSAE}}(\theta)
    % \nonumber\\
    & = 2 (w^V_{22})^2 \cdot \mathrm{Tr} \Bigl[ ((W^E)^\top W^{KQ}_{11} W^E)^\top \cdot \Gamma_N \Lambda \cdot ((W^E)^\top W^{KQ}_{11} W^E) \cdot \Lambda \Bigr]
    \nonumber\\
    &\quad - 4 w^V_{22} \cdot \mathrm{Tr}\Bigl[ \Lambda \cdot ((W^E)^\top W^{KQ}_{11} W^E) \cdot \Lambda \Bigr] + 2 \mathrm{Tr}(\Lambda)
    \nonumber\\
    &\quad + 2\epsilon^2 (w^V_{22})^2 \cdot \mathrm{Tr}(\Lambda) \cdot \mathrm{Tr}\Bigl[ (W^{KQ}_{11} W^E)^\top W^{KQ}_{11} W^E \Lambda \Bigr]
    \nonumber\\
    & = 2 (w^V_{22})^2 \cdot \mathrm{Tr} \Bigl[ W^E \Gamma_N \Lambda (W^E)^\top \cdot (W^{KQ}_{11} W^E \Lambda^{\frac{1}{2}}) \cdot (W^{KQ}_{11} W^E \Lambda^{\frac{1}{2}})^\top \Bigr]
    \nonumber\\
    &\quad - 4 w^V_{22} \cdot \mathrm{Tr}\Bigl[ (W^{KQ}_{11} W^E \Lambda^{\frac{1}{2}}) \cdot \Lambda^{\frac{3}{2}} (W^E)^\top \Bigr] + 2 \mathrm{Tr}(\Lambda)
    \nonumber\\
    &\quad + 2\epsilon^2 (w^V_{22})^2 \mathrm{Tr}(\Lambda) \cdot \mathrm{Tr}\Bigl[ (W^{KQ}_{11} W^E \Lambda^{\frac{1}{2}}) \cdot ( W^{KQ}_{11} W^E \Lambda^{\frac{1}{2}} )^\top \Bigr]
    \nonumber\\
    & = 2 (w^V_{22})^2 \cdot \mathrm{Tr}\Bigl[ ( W^E \Gamma_N \Lambda (W^E)^\top + \mathrm{Tr}(\Lambda) \epsilon^2 I_d ) \cdot (W^{KQ}_{11} W^E \Lambda^{\frac{1}{2}}) \cdot (W^{KQ}_{11} W^E \Lambda^{\frac{1}{2}})^\top \Bigr]
    \nonumber\\
    &\quad - 4 w^V_{22} \cdot \mathrm{Tr}\Bigl[ (W^{KQ}_{11} W^E \Lambda^{\frac{1}{2}}) \cdot \Lambda^{\frac{3}{2}} (W^E)^\top \Bigr] + 2 \mathrm{Tr}(\Lambda).
\end{align}

The proof is completed.
\end{proof}

Based on the simplified surrogate AT loss, we now calculate the global solution for the surrogate embedding ICL AT problem in the following Lemma~\ref{lem:icl:surrogate-at-minimizer}.
\begin{lemma}
\label{lem:icl:surrogate-at-minimizer}
Suppose Assumption~\ref{ass:icl:init} holds.
Then, $\theta_* := (W^E_*, W^{KQ}_*, W^{V}_*)$ is a global minimizer for the surrogate embedding ICL AT problem defined in Eq.~(\ref{eq:icl:at-lsae-surrogate}) if and only if
\begin{align*}
    w^V_{*,22} (W^E_*)^\top W^{KQ}_{*,11} W^E_*
    = (W^E_*)^\top ( W^E_* \Gamma_N \Lambda (W^E_*)^\top + \mathrm{Tr}(\Lambda) \epsilon^2 I_d )^{-1} W^E_* \Lambda.
\end{align*}
\end{lemma}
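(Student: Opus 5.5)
We work with the simplified objective from Lemma~\ref{lem:icl:simplify-surrogate-at-loss}, which applies because Assumption~\ref{ass:icl:init} and Lemma~\ref{lem:icl:pms-stay-zero} keep $w^{KQ}_{21}=w^V_{21}=0$ along the flow. Abbreviate $P := W^E\Gamma_N\Lambda(W^E)^\top + \mathrm{Tr}(\Lambda)\epsilon^2 I_d$. This matrix is symmetric positive definite: $\Gamma_N\Lambda = \frac{N+1}{N}\Lambda^2+\frac{1}{N}\mathrm{Tr}(\Lambda)\Lambda$ is PSD, and $\epsilon>0$. Introduce the combined variable $C := w^V_{22}\,W^{KQ}_{11}W^E\Lambda^{1/2}\in\R^{d\times d_0}$. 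The loss then becomes
\begin{align*}
\tilde{\mathcal L}^{\mathrm{adv}}_{\mathrm{LSAE}}
= 2\,\mathrm{Tr}(P C C^\top) - 4\,\mathrm{Tr}(C\Lambda^{3/2}(W^E)^\top) + 2\mathrm{Tr}(\Lambda).
\end{align*}
For fixed $W^E$, this is a strictly convex quadratic in $C$.

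\textbf{Step 1: complete the square.} We rewrite the loss as
\begin{align*}
2\,\mathrm{Tr}\bigl[(C-P^{-1}W^E\Lambda^{3/2})^\top P\,(C-P^{-1}W^E\Lambda^{3/2})\bigr] - 2\,\mathrm{Tr}\bigl(\Lambda^{3/2}(W^E)^\top P^{-1}W^E\Lambda^{3/2}\bigr) + 2\mathrm{Tr}(\Lambda).
\end{align*}
Expanding this back only uses Lemma~\ref{lem:icl:tech:mat-permute} and the symmetry of $P$. Since $P\succ 0$, the first term is nonnegative. It vanishes if and only if $C = P^{-1}W^E\Lambda^{3/2}$.

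\textbf{Step 2: translate back.} The condition $C = P^{-1}W^E\Lambda^{3/2}$ reads $w^V_{22}W^{KQ}_{11}W^E\Lambda^{1/2} = P^{-1}W^E\Lambda^{3/2}$. Assuming $\Lambda\succ 0$ (as implicit in the model), we multiply on the right by $\Lambda^{-1/2}$ and on the left by $(W^E)^\top$. This gives exactly the displayed identity
\begin{align*}
w^V_{22}(W^E)^\top W^{KQ}_{11}W^E = (W^E)^\top P^{-1}W^E\Lambda.
\end{align*}
Conversely, this identity is all that enters the loss, because the loss depends on $C$ only through $(W^E)^\top$-projected quantities. Indeed $\mathrm{Tr}(PCC^\top)$ and the linear term can be rewritten in terms of $(W^E)^\top W^{KQ}_{11}W^E$, as in the first expression of Lemma~\ref{lem:icl:simplify-surrogate-at-loss}'s proof, together with the $\epsilon^2$ term. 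I would handle this equivalence carefully, by expressing the loss via $M:=w^V_{22}(W^E)^\top W^{KQ}_{11}W^E$ where possible and otherwise arguing directly with $C$.

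\textbf{Step 3: minimize over the embedding.} It remains to show that the residual value
\begin{align*}
g(W^E) := 2\mathrm{Tr}(\Lambda) - 2\,\mathrm{Tr}\bigl(\Lambda^{3/2}(W^E)^\top P^{-1}W^E\Lambda^{3/2}\bigr)
\end{align*}
is the global minimum, jointly over all parameters. Then a triple is a global minimizer if and only if the square term vanishes at the optimal $W^E_*$. The ``if and only if'' in the lemma should be read as: at the optimum $W^E_*$ (whatever it is), the other parameters must satisfy the stated relation. So I would present the argument as a partial minimization over $(W^{KQ},W^V)$ for given $W^E$. Under that reading, global optimality forces the square term to vanish, which yields the stated equation at $W^E_*$.

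\textbf{Main obstacle.} The converse direction in Step 2 is the hard part. The $\epsilon^2$ term involves $\mathrm{Tr}(W^{KQ}_{11}W^E\Lambda (W^E)^\top (W^{KQ}_{11})^\top)$, which is not a function of $(W^E)^\top W^{KQ}_{11}W^E$ alone. I would first try to exploit $d\le d_0$ and the range of $W^E$ to show that components of $W^{KQ}_{11}$ acting outside the range of $W^E$ only increase the loss. I would also address attainability of the minimum over $W^E$.
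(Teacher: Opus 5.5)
Your Steps 1--2 are the paper's own argument: complete the square in the combined variable, read off $C=P^{-1}W^E\Lambda^{3/2}$, then multiply by $(W^E)^\top$ and $\Lambda^{-1/2}$. Your remainder $-2\,\mathrm{Tr}(\Lambda^{3/2}(W^E)^\top P^{-1}W^E\Lambda^{3/2})$ is the correct one; the paper's display has $P^{-2}$ in place of $2P^{-1}$.

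\textbf{The missing idea: $C$ is not a free variable.} The genuine gap is the inference ``global optimality forces the square term to vanish,'' which the paper also makes (``can achieve zero via setting''). $C$ ranges only over $\{KW^E\Lambda^{1/2}:K\in\R^{d\times d}\}$ with $K=w^V_{22}W^{KQ}_{11}$. Strict convexity on $\R^{d\times d_0}$ therefore says nothing about whether the unconstrained minimizer $P^{-1}W^E\Lambda^{3/2}$ is reachable. It is reachable iff $KW^E=P^{-1}W^E\Lambda$ is solvable, i.e.\ iff $\ker W^E$ is $\Lambda$-invariant. For $d<d_0$ and anisotropic $\Lambda$ this generically fails, and that is exactly the compressed regime of Theorem~\ref{thm:icl:robust-gen-upp-bd}. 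For example, take $d=1$, $d_0=2$, $W^E=(1\ \ 1)$, $\Lambda=\mathrm{diag}(1,2)$, with scalars $P=p$ and $K=k$. The identity reads $k\begin{pmatrix}1&1\\1&1\end{pmatrix}=p^{-1}\begin{pmatrix}1&2\\1&2\end{pmatrix}$, which has no solution. Yet the minimizer over $K$ exists, since the loss is strictly convex in $K$ ($P\succ0$ and $W^E\Lambda(W^E)^\top\succ0$). It is characterized instead by $PKW^E\Lambda(W^E)^\top=W^E\Lambda^2(W^E)^\top$. So even under your partial-minimization reading, the ``only if'' direction fails without an extra hypothesis.

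\textbf{The $W^E$-dependence rules out the global reading.} Write the simplified loss as $F(W^E,K)$. One checks that $F(cW^E,K/c^2)=F(W^E,K)-2\mathrm{Tr}(\Lambda)\epsilon^2(1-c^{-2})\|C\|_F^2$. Any point with $C\neq 0$ is therefore strictly improved by taking $c>1$. Points with $C=0$ have value $2\mathrm{Tr}(\Lambda)$, and $K=tI_d$ with any $W^E\neq0$ and small $t>0$ beats that. Hence the simplified surrogate has no global minimizer, so there is no ``optimal $W^E_*$,'' and Step~3 cannot be completed. The ``if'' direction of the lemma as literally stated is false whenever the identity is satisfiable (e.g.\ $d=d_0$ with $W^E$ invertible).

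\textbf{The converse.} No range argument is needed when $W^E$ has full row rank. Then $X\mapsto(W^E)^\top X$ is injective, so the identity is equivalent to $C=P^{-1}W^E\Lambda^{3/2}$, and the $\epsilon^2$ term \emph{is} a function of $(W^E)^\top W^{KQ}_{11}W^E$. When $W^E$ is rank-deficient, $P$ acts as $\mathrm{Tr}(\Lambda)\epsilon^2 I$ on $\mathrm{range}(W^E)^\perp$. The component $C_\perp$ of $C$ lying there is invisible to the identity but adds $2\mathrm{Tr}(\Lambda)\epsilon^2\|C_\perp\|_F^2$ to the loss. Your planned argument would thus show that the identity does \emph{not} imply optimality in that case.

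\textbf{What you have actually proved.} Take a fixed $W^E_*$ of full row rank whose kernel is $\Lambda$-invariant; the kernel condition is automatic if $d=d_0$ or $\Lambda\propto I_{d_0}$. Then $(W^{KQ}_*,W^V_*)$ minimizes $\tilde{\mathcal L}^{\mathrm{adv}}_{\mathrm{LSAE}}(W^E_*,\cdot,\cdot)$ over the class of Lemma~\ref{lem:icl:simplify-surrogate-at-loss} iff the displayed identity holds. That is the statement you should state and prove.
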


\begin{proof}
Under Assumption~\ref{ass:icl:init}, by applying Lemma~\ref{lem:icl:tech:mat-permute} and Lemma~\ref{lem:icl:simplify-surrogate-at-loss}, we can re-organize the surrogate AT loss $\tilde{\mathcal L}^{\mathrm{adv}}_{\mathrm{LSAE}}(\theta)$ in Eq.~(\ref{eq:icl:at-lsae-surrogate}) as follows,
\begin{align}
    &\tilde{\mathcal L}^{\mathrm{adv}}_{\mathrm{LSAE}}(\theta)
    \nonumber\\
    & = 2 (w^V_{22})^2 \cdot \mathrm{Tr}\Bigl[ ( W^E \Gamma_N \Lambda (W^E)^\top + \mathrm{Tr}(\Lambda) \epsilon^2 I_d ) \cdot (W^{KQ}_{11} W^E \Lambda^{\frac{1}{2}}) \cdot (W^{KQ}_{11} W^E \Lambda^{\frac{1}{2}})^\top \Bigr]
    \nonumber\\
    &\quad - 4 w^V_{22} \cdot \mathrm{Tr}\Bigl[ (W^{KQ}_{11} W^E \Lambda^{\frac{1}{2}}) \cdot \Lambda^{\frac{3}{2}} (W^E)^\top \Bigr] + 2 \mathrm{Tr}(\Lambda)
    \nonumber\\
    & = 2 \cdot \mathrm{Tr}\Bigl[ \Bigl( W^E \Gamma_N \Lambda (W^E)^\top + \mathrm{Tr}(\Lambda) \epsilon^2 I_d \Bigr) \cdot (w^V_{22} W^{KQ}_{11} W^E \Lambda^{\frac{1}{2}}) \cdot (w^V_{22} W^{KQ}_{11} W^E \Lambda^{\frac{1}{2}})^\top \Bigr]
    + 2 \mathrm{Tr}(\Lambda)
    \nonumber\\
    &\quad - 4 \cdot \mathrm{Tr}\Bigl[
        \Bigl( W^E \Gamma_N \Lambda (W^E)^\top + \mathrm{Tr}(\Lambda) \epsilon^2 I_d \Bigr)
        \cdot (w^V_{22} W^{KQ}_{11} W^E \Lambda^{\frac{1}{2}})
        \cdot \Lambda^{\frac{3}{2}} (W^E)^\top \Bigl( W^E \Gamma_N \Lambda (W^E)^\top + \mathrm{Tr}(\Lambda) \epsilon^2 I_d \Bigr)^{-1}
    \Bigr]
    \nonumber\\
    & = 2 \cdot \mathrm{Tr}\Bigl[ \Bigl( W^E \Gamma_N \Lambda (W^E)^\top + \mathrm{Tr}(\Lambda) \epsilon^2 I_d \Bigr)
    \nonumber\\
    &\quad\quad\quad\quad
        \cdot \Bigl( w^V_{22} W^{KQ}_{11} W^E \Lambda^{\frac{1}{2}} - \Bigl( W^E \Gamma_N \Lambda (W^E)^\top + \mathrm{Tr}(\Lambda) \epsilon^2 I_d \Bigr)^{-1} W^E \Lambda^{\frac{3}{2}} \Bigr)
    \nonumber\\
    &\quad\quad\quad\quad
        \cdot \Bigl( w^V_{22} W^{KQ}_{11} W^E \Lambda^{\frac{1}{2}} - \Bigl( W^E \Gamma_N \Lambda (W^E)^\top + \mathrm{Tr}(\Lambda) \epsilon^2 I_d \Bigr)^{-1} W^E \Lambda^{\frac{3}{2}} \Bigr)^\top \Bigr]
    \nonumber\\
    &\quad
        - \mathrm{Tr}\Bigl[ \Bigl( W^E \Gamma_N \Lambda (W^E)^\top + \mathrm{Tr}(\Lambda) \epsilon^2 I_d \Bigr)^{-2} W^E \Lambda^3 (W^E)^\top \Bigr]
        + 2\mathrm{Tr}(\Lambda).
    \label{eq:icl:proof:surrogate-at-minimizer:eq-1}
\end{align}
Note that the second and third summation terms in Eq.~(\ref{eq:icl:proof:surrogate-at-minimizer:eq-1}) are constants.
Besides, the first term in Eq.~(\ref{eq:icl:proof:surrogate-at-minimizer:eq-1}) is non-negative and can achieve zero via setting
\begin{align*}
    w^V_{*,22} W^{KQ}_{*,11} W^E_* \Lambda^{\frac{1}{2}} - ( W^E_* \Gamma_N \Lambda (W^E_*)^\top + \mathrm{Tr}(\Lambda) \epsilon^2 I_d )^{-1} W^E_* \Lambda^{\frac{3}{2}} = 0,
\end{align*}
which is
\begin{align*}
    w^V_{*,22} (W^E_*)^\top W^{KQ}_{*,11} W^E_*
    = (W^E_*)^\top ( W^E_* \Gamma_N \Lambda (W^E_*)^\top + \mathrm{Tr}(\Lambda) \epsilon^2 I_d )^{-1} W^E_* \Lambda.
\end{align*}

The proof is completed.
\end{proof}

\subsection{Proof of Theorem~\ref{thm:icl:robust-gen-upp-bd}}
\label{app:proof:robust-gen-upp-bd}

We first prove a useful Lemma~\ref{lem:icl:sol-inv-term-bd} that will be frequently used in this section.
\begin{lemma}
\label{lem:icl:sol-inv-term-bd}
For the inverse matrix
$(W^E_{*} \Gamma_N \Lambda (W^E_*)^\top + \mathrm{Tr}(\Lambda) \epsilon^2 I_d)^{-1} \in \mathbb{R}^{d\times d}$
in the optimal surrogate ICL embedding AT solution in Theorem~\ref{thm:icl:closed-form-at}, we have
\begin{align*}
    \Bigl\| \Bigl(W^E_{*} \Gamma_N \Lambda (W^E_*)^\top + \mathrm{Tr}(\Lambda) \epsilon^2 I_d \Bigr)^{-1} \Bigr\|_2
    \leq \frac{1}{ \sigma_{\min}(\Gamma_N \Lambda) \cdot \sigma_{\min}(W^E_*)^2 + \mathrm{Tr}(\Lambda) \epsilon^2 }.
\end{align*}
\end{lemma}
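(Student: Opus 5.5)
The plan is to reduce the operator norm of the inverse to the reciprocal of the smallest eigenvalue of a symmetric positive definite matrix, and then lower-bound that eigenvalue with the Rayleigh quotient (Lemma~\ref{lem:icl:tech:rayleigh}). Write $A := W^E_* \Gamma_N \Lambda (W^E_*)^\top + \mathrm{Tr}(\Lambda)\epsilon^2 I_d$. Since $\Gamma_N = \frac{N+1}{N}\Lambda + \frac{1}{N}\mathrm{Tr}(\Lambda) I_{d_0}$ is a polynomial in $\Lambda$, the two matrices commute. Hence $\Gamma_N\Lambda$ is symmetric positive semidefinite, with eigenvalues $\frac{N+1}{N}\lambda_i(\Lambda)^2 + \frac{1}{N}\mathrm{Tr}(\Lambda)\lambda_i(\Lambda) \geq 0$. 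Its singular values therefore coincide with its eigenvalues, and in particular $\sigma_{\min}(\Gamma_N\Lambda) = \lambda_{\min}(\Gamma_N\Lambda)$. It follows that $A$ is symmetric with $A \succeq \mathrm{Tr}(\Lambda)\epsilon^2 I_d \succ 0$, so $A$ is invertible and $\|A^{-1}\|_2 = 1/\lambda_{\min}(A)$.

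Next, I will take any unit vector $v\in\R^d$ and write
\begin{align*}
v^\top A v = (W^{E\top}_* v)^\top \Gamma_N\Lambda (W^{E\top}_* v) + \mathrm{Tr}(\Lambda)\epsilon^2 .
\end{align*}
Applying Lemma~\ref{lem:icl:tech:rayleigh} to $\Gamma_N\Lambda$ bounds the first term below by $\lambda_{\min}(\Gamma_N\Lambda)\,\|W^{E\top}_* v\|_2^2$. Applying Lemma~\ref{lem:icl:tech:rayleigh} again, now to $W^E_* W^{E\top}_*\in\R^{d\times d}$, gives $\|W^{E\top}_* v\|_2^2 = v^\top W^E_* W^{E\top}_* v \geq \lambda_{\min}(W^E_* W^{E\top}_*) = \sigma_{\min}(W^E_*)^2$. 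Here I use $d\le d_0$, so that the $d$ singular values of $W^E_*$ are exactly the square roots of the $d$ eigenvalues of $W^E_* W^{E\top}_*$. Minimizing over unit $v$ then yields $\lambda_{\min}(A) \geq \sigma_{\min}(\Gamma_N\Lambda)\sigma_{\min}(W^E_*)^2 + \mathrm{Tr}(\Lambda)\epsilon^2$, and inverting gives the claim.

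The only delicate point is the convention for $\sigma_{\min}(W^E_*)$ when $d>d_0$: in that case $W^E_* W^{E\top}_*$ is singular, and the bound only holds if $\sigma_{\min}$ is read as $0$ or if one assumes $d\le d_0$, as Theorem~\ref{thm:icl:robust-gen-upp-bd} does. I will state the argument under $d\le d_0$. I will also note that the bound remains trivially valid whenever $\sigma_{\min}(W^E_*)=0$, because then it reduces to $\|A^{-1}\|_2\le 1/(\mathrm{Tr}(\Lambda)\epsilon^2)$, which already follows from $A\succeq \mathrm{Tr}(\Lambda)\epsilon^2 I_d$.
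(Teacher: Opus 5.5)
Your proof is correct and follows the paper's argument. You reduce to $1/\lambda_{\min}$ of the symmetric positive definite matrix, substitute $u=(W^E_*)^\top v$, and apply the Rayleigh quotient (Lemma~\ref{lem:icl:tech:rayleigh}) first to $\Gamma_N\Lambda$ and then to $W^E_*(W^E_*)^\top$; your form $u^\top\Gamma_N\Lambda u\ge\lambda_{\min}(\Gamma_N\Lambda)\|u\|_2^2$ is a slightly cleaner version of the paper's quotient-times-norm step, which is undefined at $u=0$.

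Your caveat is also well taken. The paper's last step, $\sigma_{\min}(W^E_*(W^E_*)^\top)=\sigma_{\min}(W^E_*)^2$, silently needs $d\le d_0$. For $d>d_0$ the norm equals $1/(\mathrm{Tr}(\Lambda)\epsilon^2)$ exactly, so the stated bound can fail, and the lemma should be read under the hypothesis of Theorem~\ref{thm:icl:robust-gen-upp-bd}.
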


\begin{proof}
According to the definition of matrix operator norm (\textit{i.e.}, $\|\cdot\|_2$), we have
\begin{align}
    &\Bigl\| \Bigl(W^E_{*} \Gamma_N \Lambda (W^E_*)^\top + \mathrm{Tr}(\Lambda) \epsilon^2 I_d \Bigr)^{-1} \Bigr\|_2
    % \nonumber\\
    = \sigma_{\max}\Bigl[ \Bigl(W^E_{*} \Gamma_N \Lambda (W^E_*)^\top + \mathrm{Tr}(\Lambda) \epsilon^2 I_d \Bigr)^{-1} \Bigr]
    \nonumber\\
    &= \frac{1}{ \sigma_{\min}\Bigl[ W^E_{*} \Gamma_N \Lambda (W^E_*)^\top + \mathrm{Tr}(\Lambda) \epsilon^2 I_d \Bigr]}
    % \nonumber\\
    = \frac{1}{ \sigma_{\min}\Bigl[ W^E_{*} \Gamma_N \Lambda (W^E_*)^\top \Bigr] + \mathrm{Tr}(\Lambda) \epsilon^2 }.
    \label{eq:icl:proof:sol-inv-term-bd:eigen-sum}
\end{align}
Notice that $W^E_{*} \Gamma_N \Lambda (W^E_*)^\top$ is a positive semidefinite matrix, we thus have
\begin{align}
    \sigma_{\min}\Bigl[ W^E_{*} \Gamma_N \Lambda (W^E_*)^\top \Bigr]
    = \lambda_{\min}\Bigl[ W^E_{*} \Gamma_N \Lambda (W^E_*)^\top \Bigr].
    \label{eq:icl:proof:sol-inv-term-bd:ray-eq0}
\end{align}
By further applying Lemma~\ref{lem:icl:tech:rayleigh},
\begin{align}
    \lambda_{\min}\Bigl[ W^E_{*} \Gamma_N \Lambda (W^E_*)^\top \Bigr]
    = \min_{v\in\R^d, \|v\|_2 = 1} v^\top W^E_{*} \Gamma_N \Lambda (W^E_*)^\top v.
    \label{eq:icl:proof:sol-inv-term-bd:ray-eq1}
\end{align}
Denote that $u = (W^E_*)^\top v \in \R^d$, then Eq.~(\ref{eq:icl:proof:sol-inv-term-bd:ray-eq1}) can be re-written as
\begin{align}
    &\lambda_{\min}\Bigl[ W^E_{*} \Gamma_N \Lambda (W^E_*)^\top \Bigr]
    = \min_{v\in\R^d, \|v\|_2 = 1} v^\top W^E_{*} \Gamma_N \Lambda (W^E_*)^\top v
    \nonumber\\
    &= \min_{v\in\R^d, \|v\|_2 = 1} u^\top (\Gamma_N \Lambda) u
    \nonumber\\
    &= \min_{v\in\R^d, \|v\|_2 = 1} \Bigl\{ \frac{ u^\top (\Gamma_N \Lambda) u }{u^\top u} \cdot u^\top u \Bigr\}
    \nonumber\\
    &\geq \min_{u' \in \R^d, \|u'\| \ne 0_d} \frac{ u'^{\top} (\Gamma_N \Lambda) u' }{u'^{\top} u'} \cdot \min_{v\in\R^d, \|v\|_2 = 1} u^\top u
    \nonumber\\
    &\overset{(*_1)}{=} \lambda_{\min}(\Gamma_N \Lambda) \cdot \min_{v\in\R^d, \|v\|_2 = 1} v^T W^E_* (W^E_*)^\top v,
    \nonumber\\
    &\overset{(*_2)}{=} \lambda_{\min}(\Gamma_N \Lambda) \cdot \lambda_{\min}(W^E_* (W^E_*)^\top)
    \nonumber\\
    &\overset{(*_3)}{=} \sigma_{\min}(\Gamma_N \Lambda) \cdot \sigma_{\min}(W^E_* (W^E_*)^\top)
    \nonumber\\
    &= \sigma_{\min}(\Gamma_N \Lambda) \cdot \sigma_{\min}(W^E_* )^2,
    \label{eq:icl:proof:sol-inv-term-bd:ray-eq2}
\end{align}
where both $(*_1)$ and $(*_2)$ are obtained via again applying Lemma~\ref{lem:icl:tech:rayleigh}, and~$(*_3)$ is due to the fact that both $\Gamma_N \Lambda$ and $W^E_* (W^E_*)^\top$ are positive semidefinite matrices.
Combining Eqs.~(\ref{eq:icl:proof:sol-inv-term-bd:eigen-sum}), (\ref{eq:icl:proof:sol-inv-term-bd:ray-eq0}), and (\ref{eq:icl:proof:sol-inv-term-bd:ray-eq2}), we finally have that
\begin{align*}
    \mathrm{Tr}\Bigl[ \Bigl(W^E_{*} \Gamma_N \Lambda (W^E_*)^\top + \mathrm{Tr}(\Lambda) \epsilon^2 I_d \Bigr)^{-1} \Bigr]
    \leq \frac{1}{ \sigma_{\min}(\Gamma_N \Lambda) \cdot \sigma_{\min}(W^E_*)^2 + \mathrm{Tr}(\Lambda) \epsilon^2 },
\end{align*}
which completes the proof.
\end{proof}

With the help of Lemma~\ref{lem:icl:sol-inv-term-bd}, we can now start to prove Theorem~\ref{thm:icl:robust-gen-upp-bd}.
\begin{proof}[Proof of Theorem~\ref{thm:icl:robust-gen-upp-bd}]
For the converged LSA-E model $f_{\mathrm{LSAE},\theta*}(\cdot)$ trained from the surrogate ICL embedding AT, by inserting its prediction function $\hat y_{q,\theta*}(\cdot)$ given in Eq.~(\ref{eq:icl:predict-optimal}) into the robust risk $\mathcal R^{\mathrm{adv}}_{\rho,M}(\cdot)$ defined in Eq.~(\ref{eq:icl:robust-risk}) and using the inequality $|a+b|^2 \leq 2(a^2 + b^2)$, we have that
\begin{align}
    &\mathcal R^{\text{adv}}_{\rho,M}(\theta_*)
    \nonumber\\
    &= \E_{\tau} \max_{\|\Delta_\tau^{O \top}\|_{2,\infty} \leq \rho} \frac{1}{2} \Bigl|
        \frac{1}{N} Y_\tau (X_\tau + ( 0_{d_0 \times (N-M)} \ \ \ \Delta^O_\tau ) )^\top \cdot (W^E_*)^\top \Bigl( W^E_* \Gamma_N \Lambda (W^E_*)^\top + \mathrm{Tr}(\Lambda) \epsilon^2 I_d \Bigr)^{-1} W^E_* \Lambda \cdot x_{\tau,q} - y_{\tau,q}
    \Bigr|^2
    \nonumber\\
    &\leq \underbrace{ \frac{1}{N^2} \E_{\tau} \Bigl|
        Y_\tau (X_\tau)^\top \cdot (W^E_*)^\top \Bigl( W^E_* \Gamma_N \Lambda (W^E_*)^\top + \mathrm{Tr}(\Lambda) \epsilon^2 I_d \Bigr)^{-1} W^E_* \Lambda \cdot x_{\tau,q}
        \Bigr|^2 }_{:= C_1}
    \nonumber\\
    &\quad - \underbrace{ \frac{2}{N} \E_{\tau} \Bigl[
        Y_\tau (X_\tau)^\top \cdot (W^E_*)^\top \Bigl( W^E_* \Gamma_N \Lambda (W^E_*)^\top + \mathrm{Tr}(\Lambda) \epsilon^2 I_d \Bigr)^{-1} W^E_* \Lambda \cdot x_{\tau,q} \cdot y_{\tau,q}
        \Bigr] }_{:= C_2}
        + \underbrace{ \E_\tau[ y_{\tau,q}^2 ] }_{:= C_3}
    \nonumber\\
    &\quad + \underbrace{ \E_{\tau} \max_{\|\Delta_\tau^{O \top}\|_{2,\infty} \leq \rho} \Bigl|
        \frac{1}{N} Y_{\tau,(N-M+1):N} (\Delta^O_\tau)^\top \cdot (W^E_*)^\top \Bigl( W^E_* \Gamma_N \Lambda (W^E_*)^\top + \mathrm{Tr}(\Lambda) \epsilon^2 I_d \Bigr)^{-1} W^E_* \Lambda \cdot x_{\tau,q} \Bigr|^2 }_{:= C_4},
    \label{eq:icl:proof:robust-gen-upp-bd:bd-1}
\end{align}
where $Y_{\tau,(N-M+1):N} := \begin{pmatrix} y_{\tau,N-M+1} & \cdots & y_{\tau,N} \end{pmatrix} \in \R^{1\times M}$.

\textbf{For $C_1$ in Eq.~(\ref{eq:icl:proof:robust-gen-upp-bd:bd-1})}, we have
\begin{align}
    &C_1
    := \frac{1}{N^2} \E_{\tau} \Bigl|
        Y_\tau (X_\tau)^\top \cdot (W^E_*)^\top \Bigl( W^E_* \Gamma_N \Lambda (W^E_*)^\top + \mathrm{Tr}(\Lambda) \epsilon^2 I_d \Bigr)^{-1} W^E_* \Lambda \cdot x_{\tau,q} \Bigr|^2
    \nonumber\\
    &= \frac{1}{N^2} \E_{\tau}\Bigl[
        x_{\tau,q}^\top \cdot (W^E_* \Lambda)^\top \Bigl( W^E_* \Gamma_N \Lambda (W^E_*)^\top + \mathrm{Tr}(\Lambda) \epsilon^2 I_d \Bigr)^{-1} W^E_*
        \cdot \E_\tau[ X_\tau \cdot X_\tau^\top w_\tau \cdot w_\tau^\top X_\tau \cdot (X_\tau)^\top ]
    \nonumber\\
    &\quad\quad\quad\quad
        \cdot (W^E_*)^\top \Bigl( W^E_* \Gamma_N \Lambda (W^E_*)^\top + \mathrm{Tr}(\Lambda) \epsilon^2 I_d \Bigr)^{-1} W^E_* \Lambda \cdot x_{\tau,q} \Bigr]
    \nonumber\\
    &= \E_{\tau}\Bigl[
        x_{\tau,q}^\top \cdot (W^E_* \Lambda)^\top \Bigl( W^E_* \Gamma_N \Lambda (W^E_*)^\top + \mathrm{Tr}(\Lambda) \epsilon^2 I_d \Bigr)^{-1} W^E_*
    \nonumber\\
    &\quad\quad\quad
        \cdot \frac{1}{N^2} \Bigl( \sum_{i} \E_\tau[x_{\tau,i} x_{\tau,i}^\top x_{\tau,i} x_{\tau,i}^\top] + \sum_{i\neq j} \E_{\tau}[ x_{\tau,i} x_{\tau,i}^\top ] \cdot \E_{\tau}[ x_{\tau,j} x_{\tau,j}^\top ] \Bigr)
    \nonumber\\
    &\quad\quad\quad
        \cdot (W^E_*)^\top \Bigl( W^E_* \Gamma_N \Lambda (W^E_*)^\top + \mathrm{Tr}(\Lambda) \epsilon^2 I_d \Bigr)^{-1} W^E_* \Lambda \cdot x_{\tau,q} \Bigr]
    \nonumber\\
    &= \E_{\tau}\Bigl[
        x_{\tau,q}^\top \cdot (W^E_* \Lambda)^\top \Bigl( W^E_* \Gamma_N \Lambda (W^E_*)^\top + \mathrm{Tr}(\Lambda) \epsilon^2 I_d \Bigr)^{-1} W^E_*
        \cdot \frac{1}{N^2} \Bigl( \underbrace{ N ( 2 \Lambda^2 + \mathrm{Tr}(\Lambda) \Lambda ) }_{\text{Lemma~\ref{lem:icl:tech-x-pow4}}} + (N^2-N) \Lambda^2 \Bigr)
    \nonumber\\
    &\quad\quad\quad
        \cdot (W^E_*)^\top \Bigl( W^E_* \Gamma_N \Lambda (W^E_*)^\top + \mathrm{Tr}(\Lambda) \epsilon^2 I_d \Bigr)^{-1} W^E_* \Lambda \cdot x_{\tau,q} \Bigr]
    \nonumber\\
    &= \E_{\tau}\Bigl[
        x_{\tau,q}^\top \cdot (W^E_* \Lambda)^\top \Bigl( W^E_* \Gamma_N \Lambda (W^E_*)^\top + \mathrm{Tr}(\Lambda) \epsilon^2 I_d \Bigr)^{-1} W^E_*
        \cdot \Gamma_N \Lambda
    \nonumber\\
    &\quad\quad\quad
        \cdot (W^E_*)^\top \Bigl( W^E_* \Gamma_N \Lambda (W^E_*)^\top + \mathrm{Tr}(\Lambda) \epsilon^2 I_d \Bigr)^{-1} W^E_* \Lambda \cdot x_{\tau,q} \Bigr]
    \nonumber\\
    &= \underbrace{ \mathrm{Tr}\Bigl[
        (W^E_* \Lambda)^\top \Bigl( W^E_* \Gamma_N \Lambda (W^E_*)^\top + \mathrm{Tr}(\Lambda) \epsilon^2 I_d \Bigr)^{-1} W^E_*
        \cdot \Gamma_N \Lambda
        \cdot (W^E_*)^\top \Bigl( W^E_* \Gamma_N \Lambda (W^E_*)^\top + \mathrm{Tr}(\Lambda) \epsilon^2 I_d \Bigr)^{-1} W^E_* \Lambda \cdot \Lambda \Bigr] }_{\text{Lemma~\ref{lem:icl:tech-x-quad}}}.
    \nonumber
\end{align}
Then, by repeatedly applying Lemma~\ref{lem:icl:tech:mat-permute} and Lemma~\ref{lem:icl:tech:trace-bd}, we further have
\begin{align}
    &C_1 := \mathrm{Tr}\Bigl[
        (W^E_* \Lambda)^\top \Bigl( W^E_* \Gamma_N \Lambda (W^E_*)^\top + \mathrm{Tr}(\Lambda) \epsilon^2 I_d \Bigr)^{-1} W^E_*
        \cdot \Gamma_N \Lambda
        \cdot (W^E_*)^\top \Bigl( W^E_* \Gamma_N \Lambda (W^E_*)^\top + \mathrm{Tr}(\Lambda) \epsilon^2 I_d \Bigr)^{-1} W^E_* \Lambda^2 \Bigr]
    \nonumber\\
    &\overset{(*_1)}{=} \mathrm{Tr}\Bigl[
        (W^E_*)^\top \Bigl( W^E_* \Gamma_N \Lambda (W^E_*)^\top + \mathrm{Tr}(\Lambda) \epsilon^2 I_d \Bigr)^{-1} W^E_*
        \cdot \Gamma_N \Lambda
        \cdot (W^E_*)^\top \Bigl( W^E_* \Gamma_N \Lambda (W^E_*)^\top + \mathrm{Tr}(\Lambda) \epsilon^2 I_d \Bigr)^{-1} W^E_* \cdot \Lambda^3 \Bigr]
    \nonumber\\
    &\overset{(*_2)}{\leq} \mathrm{Tr}\Bigl[
        (W^E_*)^\top \Bigl( W^E_* \Gamma_N \Lambda (W^E_*)^\top + \mathrm{Tr}(\Lambda) \epsilon^2 I_d \Bigr)^{-1} W^E_*
        \Gamma_N \Lambda
        (W^E_*)^\top \Bigl( W^E_* \Gamma_N \Lambda (W^E_*)^\top + \mathrm{Tr}(\Lambda) \epsilon^2 I_d \Bigr)^{-1} W^E_*\Bigr] \cdot \lambda_{\max}(\Lambda^3)
    \nonumber\\
    &\overset{(*_3)}{=} \mathrm{Tr}\Bigl[
        (W^E_*)^\top \Bigl( W^E_* \Gamma_N \Lambda (W^E_*)^\top + \mathrm{Tr}(\Lambda) \epsilon^2 I_d \Bigr)^{-1} W^E_*
        (W^E_*)^\top \Bigl( W^E_* \Gamma_N \Lambda (W^E_*)^\top + \mathrm{Tr}(\Lambda) \epsilon^2 I_d \Bigr)^{-1} W^E_*
        \Gamma_N \Lambda
        \Bigr] \cdot \lambda_{\max}(\Lambda)^3
    \nonumber\\
    &\overset{(*_4)}{\leq} \mathrm{Tr}\Bigl[
        (W^E_*)^\top \Bigl( W^E_* \Gamma_N \Lambda (W^E_*)^\top + \mathrm{Tr}(\Lambda) \epsilon^2 I_d \Bigr)^{-1} W^E_*
        (W^E_*)^\top \Bigl( W^E_* \Gamma_N \Lambda (W^E_*)^\top + \mathrm{Tr}(\Lambda) \epsilon^2 I_d \Bigr)^{-1} W^E_* \Bigr]
    \nonumber\\
    &\quad
        \cdot \lambda_{\max}(\Gamma_N \Lambda)
        \cdot \lambda_{\max}(\Lambda)^3
    \nonumber\\
    &\overset{(*_5)}{\leq} \mathrm{Tr}\Bigl[
        W^E_* (W^E_*)^\top \Bigl( W^E_* \Gamma_N \Lambda (W^E_*)^\top + \mathrm{Tr}(\Lambda) \epsilon^2 I_d \Bigr)^{-1} W^E_*
        (W^E_*)^\top \Bigr]
        \cdot \lambda_{\max}\Bigl[ \Bigl( W^E_* \Gamma_N \Lambda (W^E_*)^\top + \mathrm{Tr}(\Lambda) \epsilon^2 I_d \Bigr)^{-1} \Bigr]
    \nonumber\\
    &\quad
        \cdot \lambda_{\max}(\Gamma_N)
        \cdot \lambda_{\max}(\Lambda)^4
    \nonumber\\
    &\overset{(*_6)}{\leq} \mathrm{Tr}\Bigl[
        W^E_* (W^E_*)^\top W^E_* (W^E_*)^\top \Bigl( W^E_* \Gamma_N \Lambda (W^E_*)^\top + \mathrm{Tr}(\Lambda) \epsilon^2 I_d \Bigr)^{-1} \Bigr]
        \cdot \frac{\lambda_{\max}(\Gamma_N) \cdot \lambda_{\max}(\Lambda)^4}{ \sigma_{\min}(\Gamma_N \Lambda) \cdot \sigma_{\min}(W^E_*)^2 + \mathrm{Tr}(\Lambda) \epsilon^2 }
    \nonumber\\
    &\overset{(*_7)}{\leq} \mathrm{Tr}\Bigl[ W^E_* (W^E_*)^\top W^E_* (W^E_*)^\top \Bigr]
        \cdot \frac{\lambda_{\max}(\Gamma_N) \cdot \lambda_{\max}(\Lambda)^4}{ [ \sigma_{\min}(\Gamma_N \Lambda) \cdot \sigma_{\min}(W^E_*)^2 + \mathrm{Tr}(\Lambda) \epsilon^2 ]^2 }
    \nonumber\\
    &= \frac{\sigma_{\max}(\Gamma_N \Lambda^4) \cdot \sum_{i=1}^d \sigma_i(W^E_*)^4 }{ [ \sigma_{\min}(\Gamma_N \Lambda) \cdot \sigma_{\min}(W^E_*)^2 + \mathrm{Tr}(\Lambda) \epsilon^2 ]^2 },
    \label{eq:icl:proof:robust-gen-upp-bd:bd-1:term-1}
\end{align}
where $(*_1)$ and $(*_3)$ is due to Lemma~\ref{lem:icl:tech:mat-permute}, 
$(*_2)$, $(*_4)$ is due to Lemma~\ref{lem:icl:tech:trace-bd},
$(*_5)$ is by Lemma~\ref{lem:icl:tech:mat-permute} and Lemma~\ref{lem:icl:tech:trace-bd},
$(*_6)$ is by Lemma~\ref{lem:icl:tech:mat-permute} and Lemma~\ref{lem:icl:sol-inv-term-bd},
and $(*_7)$ is by Lemma~\ref{lem:icl:tech:trace-bd} and Lemma~\ref{lem:icl:sol-inv-term-bd}.

\textbf{For $C_2$ in Eq.~(\ref{eq:icl:proof:robust-gen-upp-bd:bd-1})}, we have
\begin{align}
    &C_2
    := \frac{2}{N} \E_{\tau} \Bigl[
        Y_\tau (X_\tau)^\top \cdot (W^E_*)^\top \Bigl( W^E_* \Gamma_N \Lambda (W^E_*)^\top + \mathrm{Tr}(\Lambda) \epsilon^2 I_d \Bigr)^{-1} W^E_* \Lambda \cdot x_{\tau,q} \cdot y_{\tau,q} \Bigr]
    \nonumber\\
    &= \frac{2}{N} \E_{\tau} \Bigl[
        w_\tau^\top \cdot X_\tau (X_\tau)^\top \cdot (W^E_*)^\top \Bigl( W^E_* \Gamma_N \Lambda (W^E_*)^\top + \mathrm{Tr}(\Lambda) \epsilon^2 I_d \Bigr)^{-1} W^E_* \Lambda \cdot x_{\tau,q} \cdot x_{\tau,q}^\tau \cdot w_\tau \Bigr]
    \nonumber\\
    &\overset{(*)}{=} \frac{2}{N} \mathrm{Tr}\Bigl[
        \E_\tau[ X_\tau (X_\tau)^\top ] \cdot (W^E_*)^\top \Bigl( W^E_* \Gamma_N \Lambda (W^E_*)^\top + \mathrm{Tr}(\Lambda) \epsilon^2 I_d \Bigr)^{-1} W^E_* \Lambda \cdot \E_\tau[ x_{\tau,q} x_{\tau,q}^\tau ] \Bigr]
    \nonumber\\
    &= \frac{2}{N} \mathrm{Tr}\Bigl[
        N \Lambda \cdot (W^E_*)^\top \Bigl( W^E_* \Gamma_N \Lambda (W^E_*)^\top + \mathrm{Tr}(\Lambda) \epsilon^2 I_d \Bigr)^{-1} W^E_* \Lambda \cdot \Lambda \Bigr]
    \nonumber\\
    &\overset{(**)}{=} 2 \cdot \mathrm{Tr}\Bigl[
        \Lambda^{\frac{3}{2}} \cdot (W^E_*)^\top \Bigl( W^E_* \Gamma_N \Lambda (W^E_*)^\top + \mathrm{Tr}(\Lambda) \epsilon^2 I_d \Bigr)^{-1} W^E_* \cdot \Lambda^{\frac{3}{2}} \Bigr]
    % \nonumber\\
    \geq 0,
    \label{eq:icl:proof:robust-gen-upp-bd:bd-1:term-2}
\end{align}
where $(*)$ is by Lemma~\ref{lem:icl:tech-x-quad} and $(**)$ is by Lemma~\ref{lem:icl:tech:mat-permute}.

\textbf{For $C_3$ in Eq.~(\ref{eq:icl:proof:robust-gen-upp-bd:bd-1})}, we have
\begin{align}
    C_3 := \E_\tau[ y_{\tau,q}^2 ]
    = \E_\tau[ w_\tau^\top x_{\tau,q} x_{\tau,q}^\top w_\tau ]
    = \mathrm{Tr}(\E_\tau[x_{\tau,q} x_{\tau,q}^\top])
    = \mathrm{Tr}(\Lambda).
    \label{eq:icl:proof:robust-gen-upp-bd:bd-1:term-3}
\end{align}

\textbf{For $C_4$ in Eq.~(\ref{eq:icl:proof:robust-gen-upp-bd:bd-1})}, we have
\begin{align*}
    &C_4
    := \E_{\tau} \max_{\|\Delta_\tau^{O \top}\|_{2,\infty} \leq \rho} \Bigl|
        \frac{1}{N} Y_{\tau,(M-N+1):N} (\Delta^O_\tau)^\top \cdot (W^E_*)^\top \Bigl( W^E_* \Gamma_N \Lambda (W^E_*)^\top + \mathrm{Tr}(\Lambda) \epsilon^2 I_d \Bigr)^{-1} W^E_* \Lambda \cdot x_{\tau,q} \Bigr|^2
    \nonumber\\
    &\leq \frac{1}{N^2} \cdot \E_{\tau} \|
        Y_{\tau,(M-N+1):N} \|_2^2
        \cdot \E_\tau \max_{\|\Delta_\tau^{O \top}\|_{2,\infty} \leq \rho} \Bigl\| (\Delta^O_\tau)^\top \cdot (W^E_*)^\top \Bigl( W^E_* \Gamma_N \Lambda (W^E_*)^\top + \mathrm{Tr}(\Lambda) \epsilon^2 I_d \Bigr)^{-1} W^E_* \Lambda \cdot x_{\tau,q} \Bigr\|_2^2
    \nonumber\\
    &= \frac{M}{N^2} \mathrm{Tr}(\Lambda)
        \cdot \rho^2
        \cdot \E_\tau \Bigl\| (W^E_*)^\top \Bigl( W^E_* \Gamma_N \Lambda (W^E_*)^\top + \mathrm{Tr}(\Lambda) \epsilon^2 I_d \Bigr)^{-1} W^E_* \Lambda \cdot x_{\tau,q} \Bigr\|_2^2
    \nonumber\\
    &\overset{(*)}{=} (M \rho^2 \mathrm{Tr}(\Lambda) / N^2)
    \nonumber\\
    &\quad
        \cdot \mathrm{Tr}\Bigl[ \Lambda (W^E_*)^\top \Bigl( W^E_* \Gamma_N \Lambda (W^E_*)^\top + \mathrm{Tr}(\Lambda) \epsilon^2 I_d \Bigr)^{-1} W^E_* (W^E_*)^\top \Bigl( W^E_* \Gamma_N \Lambda (W^E_*)^\top + \mathrm{Tr}(\Lambda) \epsilon^2 I_d \Bigr)^{-1} W^E_* \Lambda \cdot \Lambda \Bigr],
\end{align*}
where $(*)$ follows Lemma~\ref{lem:icl:tech-x-quad}.
By using a similar derivation as that for Eq.~(\ref{eq:icl:proof:robust-gen-upp-bd:bd-1:term-1}), we thus have
\begin{align}
    &C_4
    \leq \frac{M \rho^2 \mathrm{Tr}(\Lambda)}{N^2}
    \cdot \frac{\sigma_{\max}(\Lambda)^3 \cdot \sum_{i=1}^d \sigma_i(W^E_*)^4 }{ [ \sigma_{\min}(\Gamma_N \Lambda) \cdot \sigma_{\min}(W^E_*)^2 + \mathrm{Tr}(\Lambda) \epsilon^2 ]^2 }.
    \label{eq:icl:proof:robust-gen-upp-bd:bd-1:term-4}
\end{align}

Finally, inserting Eqs.~(\ref{eq:icl:proof:robust-gen-upp-bd:bd-1:term-1}), ~(\ref{eq:icl:proof:robust-gen-upp-bd:bd-1:term-2}), (\ref{eq:icl:proof:robust-gen-upp-bd:bd-1:term-3}), and~(\ref{eq:icl:proof:robust-gen-upp-bd:bd-1:term-4}) into Eq.~(\ref{eq:icl:proof:robust-gen-upp-bd:bd-1}),
we thus have
\begin{align*}
    &\mathcal R^{\text{adv}}_{\rho,M}(\theta_*)
    \nonumber\\
    &\leq \frac{\sigma_{\max}( \Gamma_N \Lambda^4 ) \cdot \sum_{i=1}^d \sigma_i(W^E_*)^4 }{ [ \sigma_{\min}(\Gamma_N \Lambda) \cdot \sigma_{\min}(W^E_*)^2 + \mathrm{Tr}(\Lambda) \epsilon^2 ]^2 }
        - 0 + \mathrm{Tr}(\Lambda)
    % \nonumber\\
    % &\quad
        + \frac{M \rho^2 \mathrm{Tr}(\Lambda)}{N^2}
    \cdot \frac{\sigma_{\max}(\Lambda)^3 \cdot \sum_{i=1}^d \sigma_i(W^E_*)^4 }{ [ \sigma_{\min}(\Gamma_N \Lambda) \cdot \sigma_{\min}(W^E_*)^2 + \mathrm{Tr}(\Lambda) \epsilon^2 ]^2 }.
    \nonumber\\
    &= \cdot \frac{ \Bigl( \sigma_{\max}(\Gamma_N \Lambda) + \frac{M \rho^2 \mathrm{Tr}(\Lambda)}{N^2} \Bigr) \cdot \sigma_{\max}(\Lambda)^3 \cdot \sum_{i=1}^d \sigma_i(W^E_*)^4 }{ [ \sigma_{\min}(\Gamma_N \Lambda) \cdot \sigma_{\min}(W^E_*)^2 + \mathrm{Tr}(\Lambda) \epsilon^2 ]^2 }
        + \mathrm{Tr}(\Lambda)
    \nonumber\\
    &\leq \mathcal{O}\Bigl( \frac{ (1 + M\rho^2 /N^2) \cdot \sum_{i=1}^d \sigma_i(W^E_*)^4 }{ \sigma_{\min}(W^E_*)^4 + \epsilon^4 } \Bigr) + \mathcal{O}(1).
\end{align*}

The proof is completed.
\end{proof}

\section{Additional experimental details}
\label{app:exp}

This section collects experimental details omitted from Section~\ref{sec:exp}.

\subsection{Adversarial Training}
\label{app:exp:training}

\textbf{Searching embedding space adversarial perturbation.}
We leverage the projected gradient descent (PGD)~\citep{madry2018towards} to solve the following embedding space adversarial perturbation searching problem for any harmful input-output pair $(x,y)$,
\begin{align*}
    \delta^* = \Bigl[ \argmax_{\|\delta_1\|_2,\cdots,\|\delta_{|x|}\|_2 \leq \epsilon}\log p_{\theta}(y | \mathcal{E}(x) + \delta)
    \Bigr].
\end{align*}
Specifically, PGD will first initialize a starting perturbation as $\delta^{(0)} := 0_{d \times |x|}$,
and then iteratively update it for $K$ times.
In the $k$-th iteration, the update is as follows,
\begin{align*}
    \delta^{(k)}_i = \prod_{\|\delta_i\|_2 \leq \epsilon} \left[ \delta^{(k-1)}_i + \eta \cdot \frac{ \partial_{\delta_i} \log p_{\theta}(y | \mathcal{E}(x) + \delta^{(k-1)}) }{ \| \partial_{\delta_i} \log p_{\theta}(y | \mathcal{E}(x) + \delta^{(k-1)}) \|_2 }
    \right],
    \quad \forall i \in \{ 1, \cdots, |x| \},
\end{align*}
where $\delta^{(k)}$ is the intermediate adversarial perturbation found in the $k$-th iteration, $\delta^{(k)}_i$ is the perturbation for the $i$-th token embedding, $\eta > 0$ is the step size, and $\prod_{\|\delta_i\|_2\leq\epsilon}$ means that the perturbation $\delta_i$ for the $i$-th input token embedding each token embedding is projected in a ball sphere centered at $0_{d\times 1}$.
The eventual adversarial perturbation is $\delta^{*} := \delta^{(K)}$.

In all experiments, the perturbation radius $\epsilon$ is set as $0.05$, the update number $K$ is set as $10$, and the learning rate is set as $1\times 10^{-2}$.

\begin{table}[t]
\centering
\caption{LC-WinRate on models trained via CAT with different toward/away cut-off thresholds.}
\scriptsize
\begin{tabular}{l c c c c c c}
\toprule
\multirow{2}{14em}{\centering Type\\(Toward / Away Cut-offs)} & \multicolumn{5}{c}{\centering (Utility) LC-WinRate (\%) $\uparrow$}\\
\cmidrule(lr){2-7}
& Vicuna-7B & Mistral-7B & Llama-2-7B & Llama-3.1-8B & Qwen2.5-7B & Gemma-2B \\

\midrule
Original             & 76.86 & 90.96 & 86.70 & 85.99 & 91.14 & 63.96 \\
CAT ($1.0$ / $-3.0$) & 36.66 & 15.76 & 67.51 & 45.71 & 77.07 & 41.75 \\
CAT ($0.5$ / $-5.0$) & 23.60 & 12.12 & 67.65 & 52.46 & 71.11 & 38.91 \\

\bottomrule
\end{tabular}
\label{tab:exp:utility-full}
\end{table}

\textbf{Loss cut-off technique in LLM AT.}
The adversarial loss in CAT and ER-CAT can be decomposed into a \textit{toward} loss and an \textit{away} loss as follows,
\begin{align*}
    \E_{(x,y,\tilde y) \in D^{(h)}} \Bigl[
        \underbrace{ ( -\log p_{\theta}(y| \mathcal{E}(x) + \delta^*)) }_{\text{Toward Loss}}
        + \underbrace{ \log p_{\theta}(\tilde y| \mathcal{E}(x) + \delta^*) }_{\text{Away Loss}}
    \Bigr].
\end{align*}
Then, the original CAT paper~\citep{xhonneux2024efficient} suggest ``cut-off'' each loss function $\mathcal{L}'$ as
\begin{align*}
    \mathcal L = \mathbb{I}[\mathcal{L}'] \cdot 0.999 c + (\mathbb{I}[\mathcal{L}' > c] \cdot 0.001 + \mathbb{I}[\mathcal L' \leq c]) \cdot \mathcal L'
\end{align*}
To prevent over-optimizing both the toward and away losses, we use a cut-off parameter $c$ and the indicator function $\mathbb{I}[\cdot]$.
\citet{xhonneux2024efficient} originally set the cut-off value as $0.5$ for the toward loss and $-5.0$ for the away loss.
However, we have empirically found that such a hyperparameter choice significantly reduces the utility of trained LLMs.
Specifically, we calculate the LC-WinRate of a model trained from CAT with a toward cut-off value of $0.5$ and an away cut-off value of $-5.0$ in Table~\ref{tab:exp:utility-full}.
From the table, we find that, for example, the original cut-off setting from \citet{xhonneux2024efficient} can result in the LC-WinRate of Vicuna being $23.60\%$, which is significantly lower when compared with its original LC-WinRate of $76.86\%$ before finetuning.

As a result, in our experiments, we relax the cut-off values to $1.0$ for the toward loss and $-3.0$ for the away loss in both CAT and ER-CAT to help trained LLMs better preserve their utility.

\textbf{LoRA setting.}
We use the PEFT library~\citep{peft} to apply LoRA~\citep{hu2022lora} to the embedding layer and all query and key projection matrices in attention layers of LLMs.
For the embedding layer, we set its PEFT hyperparmeters as \texttt{r=1024}, \texttt{lora\_alpha=32}, and \texttt{lora\_dropout=0.1}.
Besides, for the remaining layers, we set their PEFT hyperparameters as \texttt{r=64}, \texttt{lora\_alpha=32}, and \texttt{lora\_dropout=0.1}.

\textbf{Adversarial training.}
We use AdamW to train each model via CAT in Eq.~(\ref{eq:cont-at-obj}) or our ER-CAT in Eq.~(\ref{eq:er-cont-at-obj}), where the embedding space perturbation radius $\epsilon$ is fixed to $0.05$.
To improve the efficiency of tuning LLMs, LoRA~\citep{hu2022lora} is applied to the embedding layer and all query and key projection matrices in attention layers.

\textbf{Other AT hyperparameters.}
In every AT experiment, we perform LLM AT with AdamW for $60$ iterations, where the learning rate is fixed to $2\times 10^{-4}$.
The batch size is set as $64$, where $8$ samples are adversarial inputs and the remaining $56$ samples are utility inputs.
We always use ``\texttt{Sorry, I can’t do that.}'' as the safe response for harmful inputs.
For the hyperparameter $\alpha$ of CAT, we follow \citet{xhonneux2024efficient} to set it as $0.5$.
For the hyperparameters $\alpha$ and $\beta$ of our ER-CAT, we set them to $0.1$ and $0.5$, respectively.

\subsection{Jailbreak attacks}
\label{app:exp:jailbreak}

We adopt six jailbreak attacks to assess the jailbreak robustness of LLMs.
Four of them are token-level suffix attacks, which are: GCG~\citep{zou2023universal}, BEAST~\citep{sadasivan2024fast}, GCQ~\citep{hayase2024querybased}, and Zhu's AutoDAN~\citep{zhu2024autodan}.
The remaining two are prompt-level attacks, which are: DeepInception~\citep{li2023deepinception} and PAIR~\citep{chao2023jailbreaking}.
We re-implemented all six attacks by ourselves to enable efficient and fair jailbreak evaluations.
Additionally, for every suffix attack, the length of adversairal suffix token length is set as $20$.
Other hyperparameters of jailbreak attacks are set as follows:
\begin{itemize}
\item
\textbf{GCG:}
According to Algorithm~1 in \citet{zou2023universal}, hyperparameters that we need to set for GCG include the iteration number $T$, the top-k parameter $k$, and the ``batch-size'' $B$. We set $T$ as $500$, $k$ as $256$, and $T$ as $64$.

\item
\textbf{BEAST:}
According to Algorithm~1 in \citet{sadasivan2024fast}, hyperparameters that we need to set for BEAST are two beam-search parameters $k_1$ and $k_2$.
We set $k_1$ as $64$ and $k_2$ as $16$.

\item
\textbf{GCQ:}
According to Algorithm~1 in \citet{hayase2024querybased}, hyperparameters that we need to set for GCQ include the iteration number $T$, the proxy batch size $b_p$, the query batch size $b_q$, and the buffer size $B$.
We set $T = 200$ and $b_p = b_q = B = 128$.

\item
\textbf{Zhu's AutoDAN:}
According to Algorithm~1 and Algorithm~2 in \citet{zhu2024autodan}, hyperparameters that we need to set for Zhu's AutoDAN are the iteration number $T$ in each step, objective weights $w_1$ and $w_2$, the top-$B$ parameter $B$, and the temperature $\tau$.
We set $T$ as $3$, $w_1$ as $10$, $w_2$ as $100$, $B$ as $256$, and $\tau$ as $2$.

\item
\textbf{DeepInception:}
According to \citet{li2023deepinception}, DeepInception attack leverages human-crafted jailbreak prompts to perform attacks.
Therefore, no hyperparameter need to be set for DeepInception.
Additionally, we use the role play-based prompt from \citet{li2023deepinception} to conduct the attck.

\item
\textbf{PAIR:}
According to \citet{chao2023jailbreaking}, PAIR leverages an LLM-based attacker and an LLM-based judger to iteratively synthesize, evaluate, and refine jailbreak prompts.
We use Mistral-8x7B-Instruct-v0.1 as the base model for attacker and Llama-3-70B-Instruct as the base model for judger.
Besides, the number of iteratively refining is fix to $10$.
\end{itemize}

\subsection{Additional Results}

This section collects experimental results that omitted from Section~\ref{sec:exp}.

\begin{figure}[t]
    \centering
    \begin{subfigure}{0.9\linewidth}
        \centering
        \includegraphics[width=\linewidth]{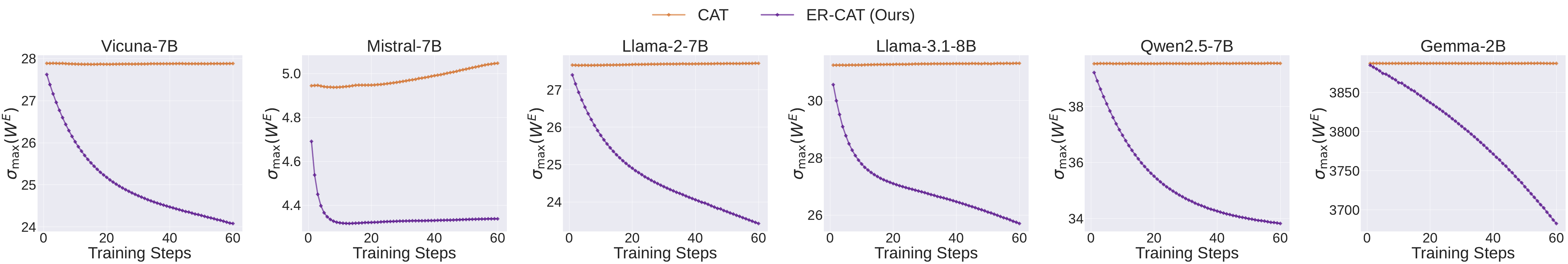}
        \subcaption{
        Maximum singular value of the embedding matrix $W^E$ of LLM along AT.}
    \end{subfigure}
    \begin{subfigure}{0.9\linewidth}
        \centering
        \includegraphics[width=\linewidth]{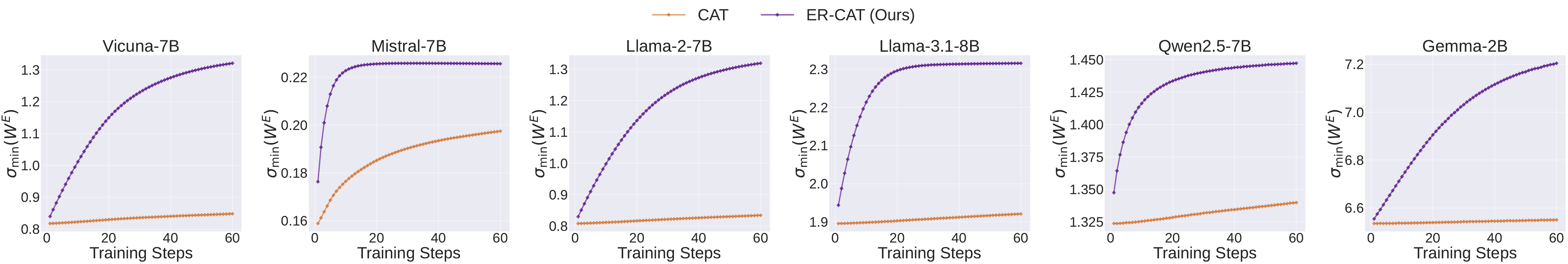}
        \subcaption{
        Minimum singular value of the embedding matrix of LLM along AT.}
    \end{subfigure}
    \begin{subfigure}{0.9\linewidth}
        \centering
        \includegraphics[width=\linewidth]{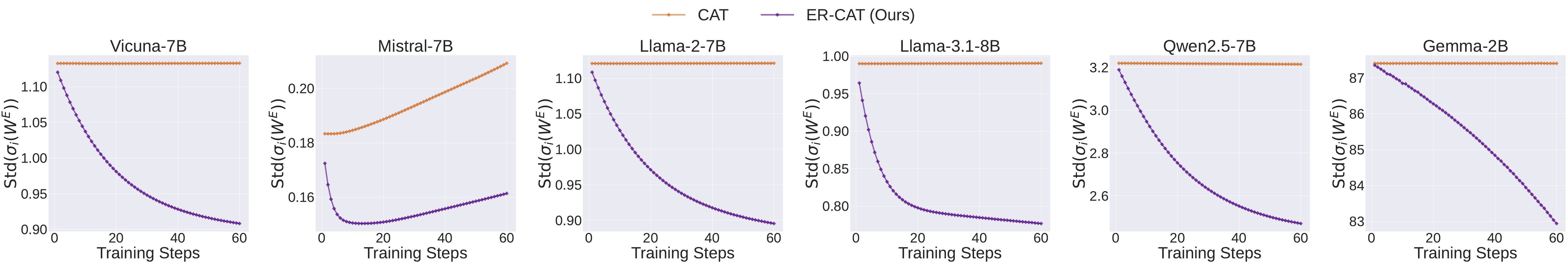}
        \subcaption{
        Standard derivation of singular values of the embedding matrix of LLM along AT.}
    \end{subfigure}
    \begin{subfigure}{0.9\linewidth}
        \centering
        \includegraphics[width=\linewidth]{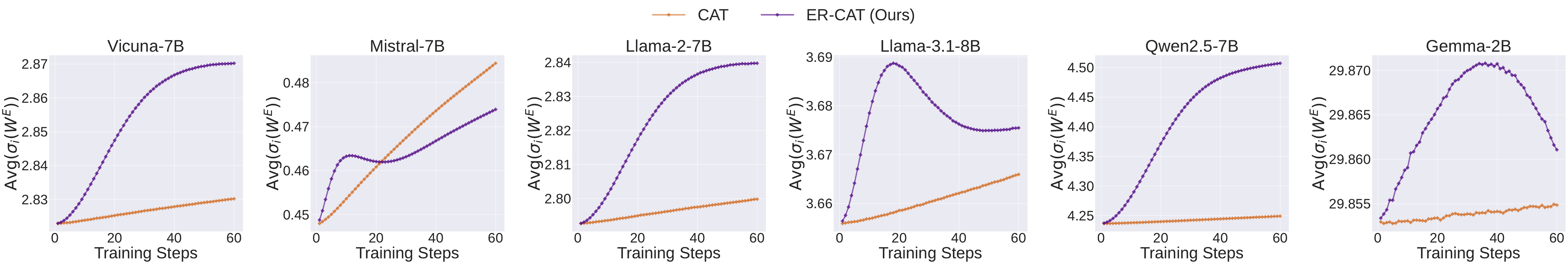}
        \subcaption{
        Mean of singular values of the embedding matrix of LLM along AT.}
    \end{subfigure}
    \caption{
    Evolutions of singular values of the embedding matrix of LLMs along AT.
    }
    \label{fig:embed-curve}
\end{figure}

\textbf{Evolutions of singular values of $W^E$.}
As illustrated in Section~\ref{sec:er-cat-implication}, our proposed ER-CAT aims to regularize the embedding matrix of trained LLMs so that its singular values be neither too large nor too small.
Here we empirically show that the introduced regularization term in ER-CAT can indeed help to achieve such a training goal.
Specifically, we plot the maximum singular value, the minimum singular value, the standard deviation of the singular values, and the mean of the singular values of the embedding matrix over the training progress of CAT and ER-CAT in Figure~\ref{fig:embed-curve}.
From the figure, we find that when compared with the original CAT method, our ER-CAT can optimize the LLM embedding matrix to:
(1) reduce its maximum singular value,
(2) increase its minimum singular value,
(3) reduce the standard deviation of all its singular values,
and (4) do not change the mean of singular values too much (the change of mean is less than $2\%$ on every base model).
In other words, the newly introduced regularization term in ER-CAT can make these singular values more concentrated.
However, we also notice that the reduction in singular values' standard deviation might not be large enough, which means that there is still room for improving the effectiveness of concentrating embedding matrix singular values.

\begin{table}[t]
\centering
\caption{1@5 ASR on different models and attacks. {A low ASR indicates a model robustness.}}
% \vspace{-3mm}
% \tiny
\scriptsize
\begin{tabular}{c l c c c c c c}
\toprule
\cmidrule(lr){3-8}
& & GCG & BEAST & GCQ & Zhu's AutoDAN & DeepInception & PAIR \\

\midrule
\multirow{3}{6em}{\centering Vicuna-7B}
& Original & 95.0 & 92.0 & 94.0 & 26.0 & 77.0 & 82.0 \\
& CAT & \textbf{31.0} & 31.0 & \textbf{14.0} & \textbf{2.0} & \textbf{22.0} & \textbf{14.0} \\
& ER-CAT (Ours) & 34.0 & \textbf{30.0} & 18.0 & 5.0 & 25.0 & 30.0 \\

\midrule
\multirow{3}{6em}{\centering Mistral-7B}
& Original & 94.0 & 89.0 & 92.0 & 64.0 & 84.0 & 73.0 \\
& CAT & 25.0 & \textbf{17.0} & 14.0 & 4.0 & 4.0 & \textbf{33.0} \\
& ER-CAT (Ours) & \textbf{20.0} & 18.0 & \textbf{8.0} & \textbf{2.0} & \textbf{3.0} & 40.0 \\

\midrule
\multirow{3}{6em}{\centering Llama-2-7B}
& Original & 61.0 & 30.0 & 9.0 & 11.0 & 70.0 & 44.0 \\
& CAT & 45.0 & 30.0 & 15.0 & 8.0 & 24.0 & 25.0 \\
& ER-CAT (Ours) & \textbf{32.0} & \textbf{19.0} & \textbf{5.0} & \textbf{1.0} & \textbf{8.0} & \textbf{12.0} \\

\midrule
\multirow{3}{6em}{\centering Llama-3.1-8B}
& Original & 24.0 & 39.0 & 12.0 & 10.0 & 75.0 & 60.0 \\
& CAT & \textbf{9.0} & \textbf{10.0} & \textbf{0.0} & \textbf{0.0} & \textbf{0.0} & 12.0 \\
& ER-CAT (Ours) & \textbf{9.0} & 17.0 & \textbf{0.0} & \textbf{0.0} & \textbf{0.0} & \textbf{9.0} \\

\midrule
\multirow{3}{6em}{\centering Qwen2.5-7B}
& Original & 86.0 & 86.0 & 82.0 & 26.0 & 90.0 & 63.0 \\
& CAT & 41.0 & 43.0 & 40.0 & \textbf{0.0} & \textbf{1.0} & 26.0 \\
& ER-CAT (Ours) & \textbf{30.0} & \textbf{31.0} & \textbf{14.0} & \textbf{0.0} & 4.0 & \textbf{21.0} \\

\midrule
\multirow{3}{6em}{\centering Gemma-2B}
& Original & 79.0 & 68.0 & 21.0 & 8.0 & 45.0 & 34.0 \\
& CAT & \textbf{40.0} & \textbf{27.0} & \textbf{8.0} & \textbf{1.0} & 1.0 & \textbf{10.0} \\
& ER-CAT (Ours) & \textbf{40.0} & \textbf{27.0} & 15.0 & 3.0 & \textbf{0.0} & \textbf{10.0} \\

\bottomrule
\end{tabular}
\label{tab:exp:asr:1-out-5}
\end{table}

\textbf{``Worst-case'' (1@5 ASR) robustness analysis.}
In Section~\ref{sec:exp}, we leverage the metric Avg@5 ASR to evaluate the jailbreak robustness of LLMs.
Here we focus on a more challenging setting to assess LLMs' robustness via the 1@5 ASR metric.
Specifically, under the 1@5 ASR metric, each jailbreak prompt needs to be repeatedly fed into the targeted model 5 times.
If any of the 5 responses is judged as a harmful response, then this jailbreak prompt is considered to have successfully jailbroken the targeted model.
Results of 1@5 ASR are collected and presented in Table~\ref{tab:exp:asr:1-out-5}.
From the table, we have similar observations as those for Avg@5 ASR (see Figure~\ref{tab:exp:asr} in Section~\ref{sec:exp}), \textit{i.e.}, ER-CAT achieves significantly better jailbreak robustness on Llama-2 and Qwen2.5 models, while maintaining similar robustness on Vicuna, Mistral, Llama-3.1 and Gemma models.
It is also worth noting that while ER-CAT and CAT achieve similar robustness on Vicuna and Mistral, according to Table~\ref{tab:exp:utility} in Section~\ref{sec:exp}, the utility achieved by ER-CAT is significantly better than that by CAT.

\begin{figure}[t]
    \centering
    \begin{subfigure}{0.30\linewidth}
        \centering
        \includegraphics[width=\linewidth]{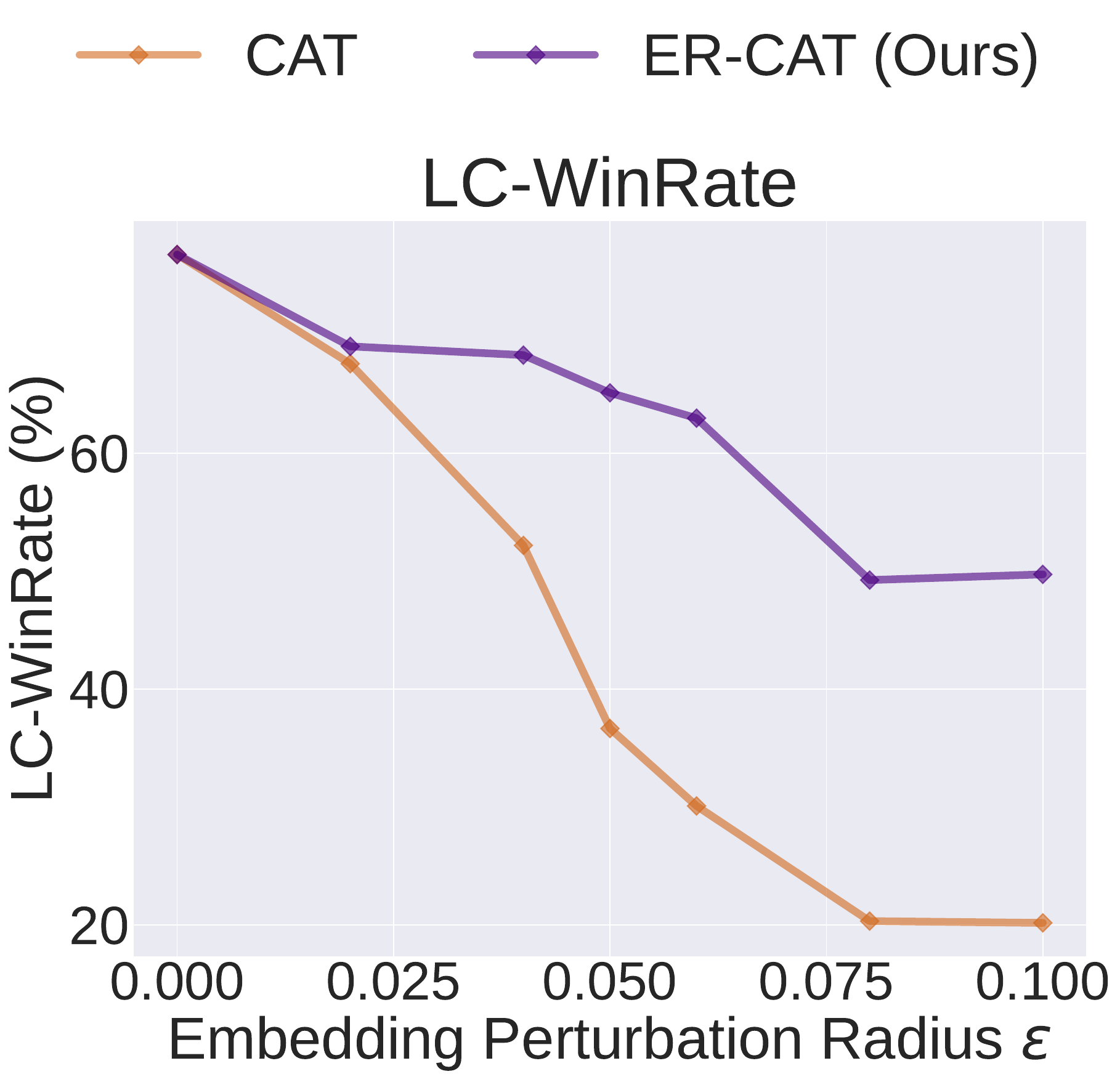}
        \subcaption{
        LC-WinRate on Vicuna-7B.
        }
    \end{subfigure}
    \hspace{2em}
    \begin{subfigure}{0.55\linewidth}
        \centering
        \includegraphics[width=\linewidth]{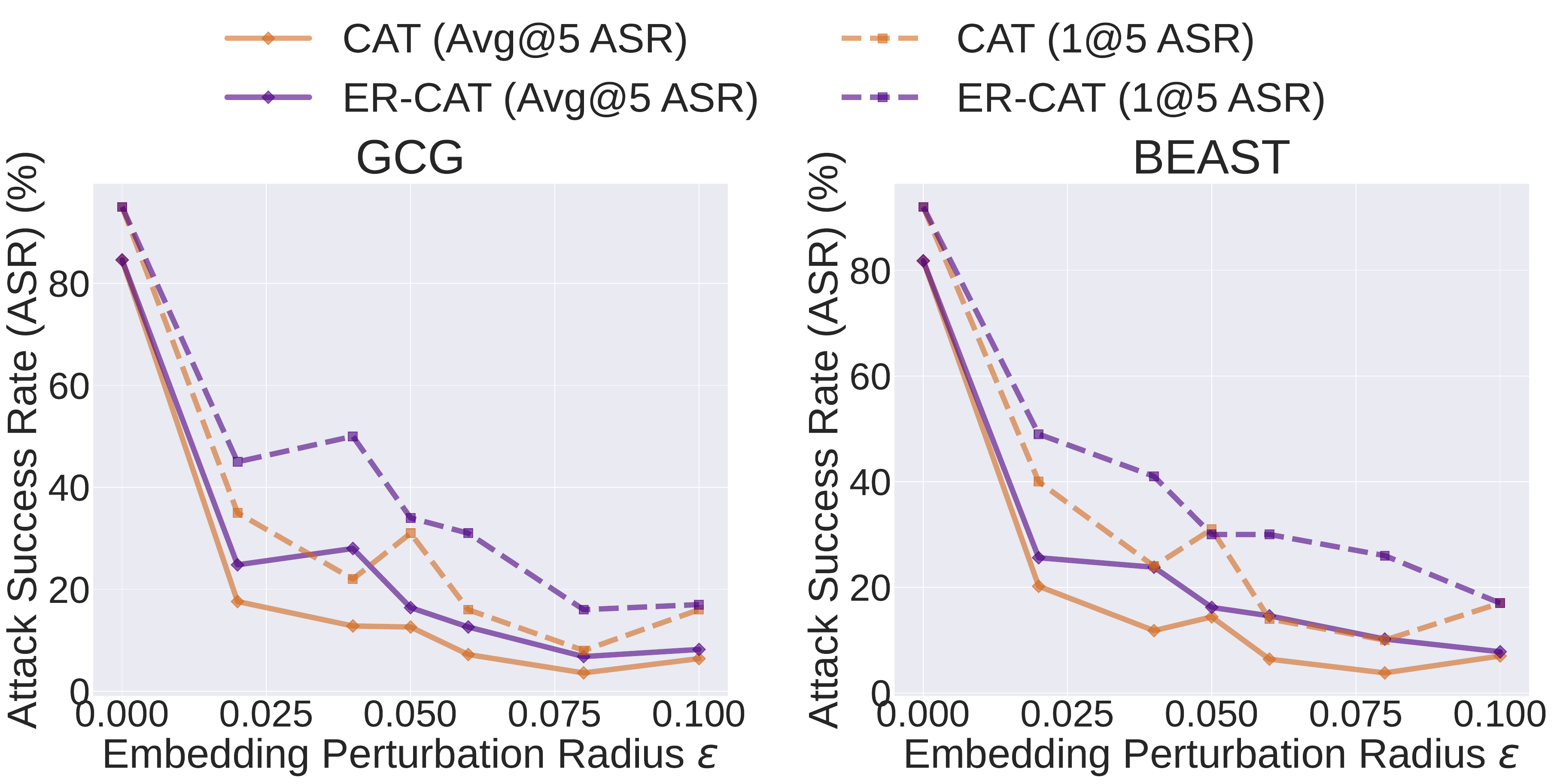}
        \subcaption{
        ASR on Vicuna-7B.
        }
    \end{subfigure}
    \caption{
        Utility (measured by LC-WinRate) and jailbreak robustness (measured by ASR) on Vicunna-7B trained with different embedding perturbation radius $\epsilon$ within the range $[0,0.1]$.
        A high LC-WinRate indicates a better utility, while a low ASR indicates a strong jailbreak robustness.
    }
    \label{fig:radius-curve-vicuna}
\end{figure}

\begin{figure}[t]
    \centering
    \begin{subfigure}{0.30\linewidth}
        \centering
        \includegraphics[width=\linewidth]{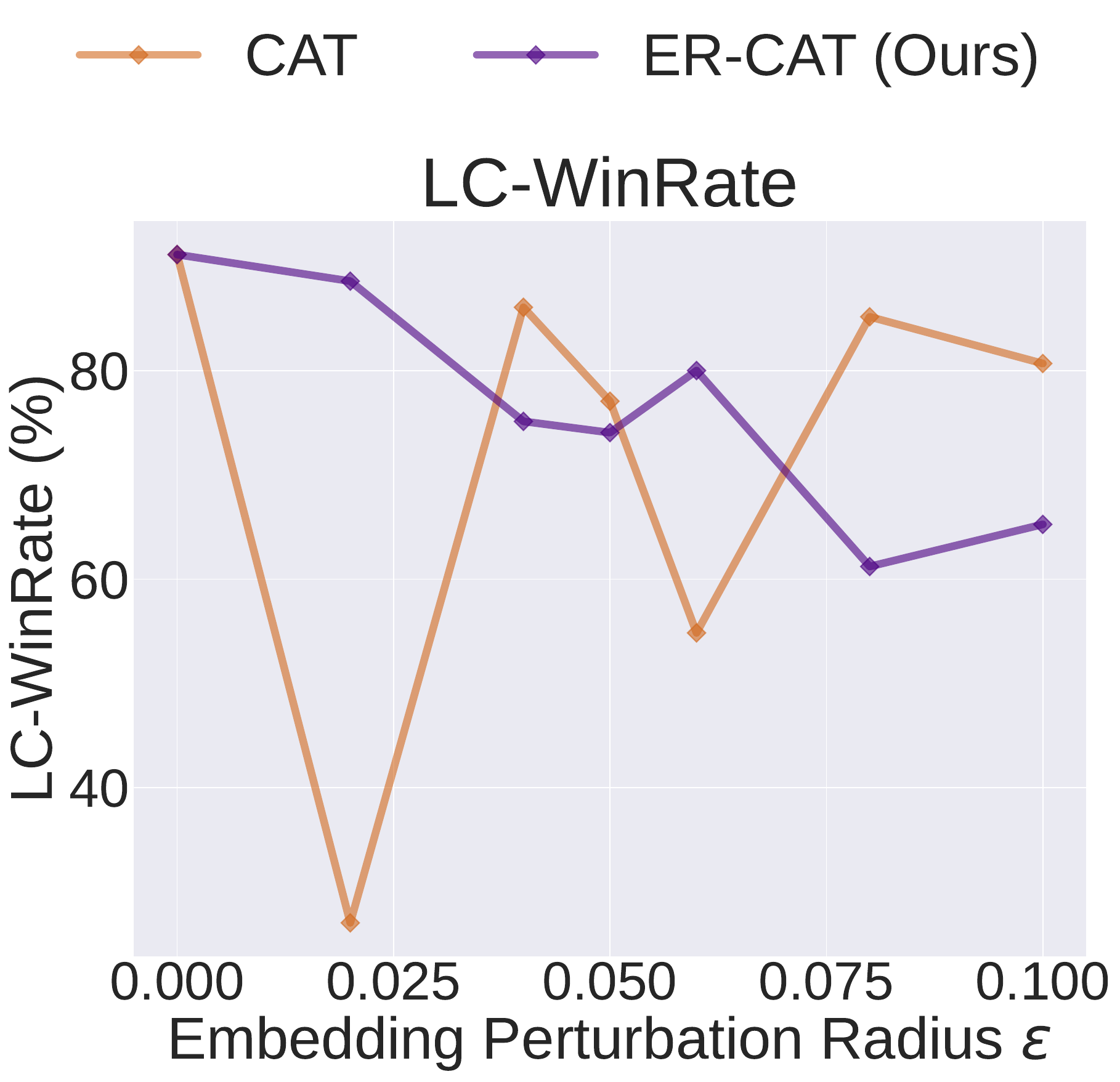}
        \subcaption{
        LC-WinRate on Qwen2.5-7B.
        }
    \end{subfigure}
    \hspace{2em}
    \begin{subfigure}{0.55\linewidth}
        \centering
        \includegraphics[width=\linewidth]{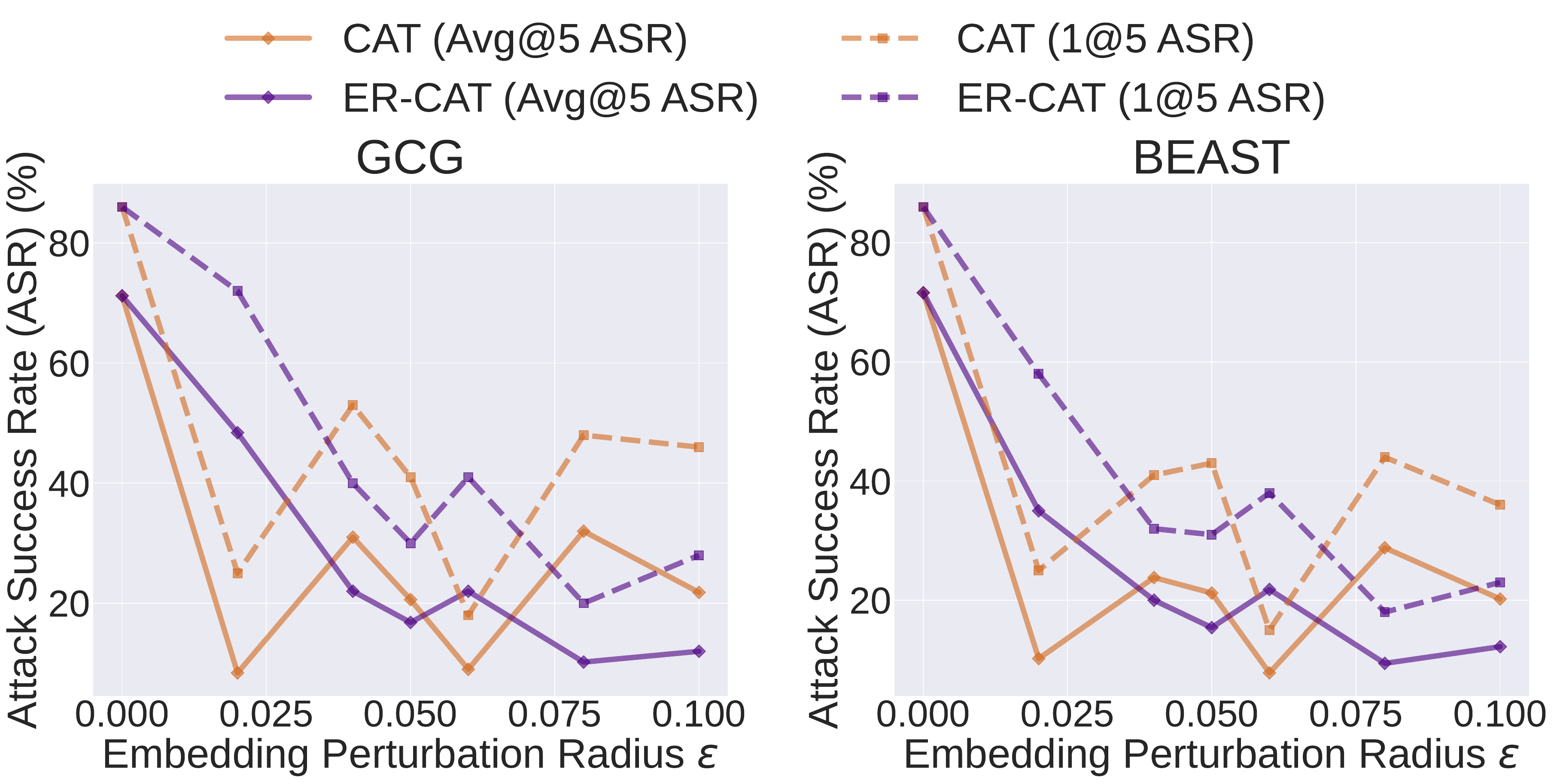}
        \subcaption{
        ASR on Qwen2.5-7B.
        }
    \end{subfigure}
    \caption{
        Utility (measured by LC-WinRate) and jailbreak robustness (measured by ASR) on Qwen2.5-7B trained with different embedding perturbation radius $\epsilon$ within the range $[0,0.1]$.
        A high LC-WinRate indicates a better utility, while a low ASR indicates a strong jailbreak robustness.
    }
    \label{fig:radius-curve-qwen2.5}
\end{figure}

\textbf{Ablation studies on the embedding space perturbation radius $\epsilon$ in AT.}
In our main experiments, the embedding space perturbation $\epsilon$ is fixed to $0.05$ for both CAT and our ER-CAT.
We now analyze how different radii $\epsilon$ affect the performance of adversarially trained models.
Specifically, we train LLMs with different embedding perturbation radii $\epsilon$ from the set $\{0, 0.02, 0.04, 0.05, 0.06, 0.08, 0.1\}$ and then calculate their utility (\textit{i.e.}, LC-WinRate) and robustness (\textit{i.e.}, GCG and BEAST).
Preliminary results are shown in Figure~\ref{fig:radius-curve-vicuna} for Vicuna-7B and Figure~\ref{fig:radius-curve-qwen2.5} for Qwen2.5-7B.
From Figure~\ref{fig:radius-curve-vicuna}, we observe that as the radius $\epsilon$ increases, ER-CAT maintains similar jailbreak robustness to that of CAT, but the utility of CAT models degenerates rapidly.
Besides, from Figure~\ref{fig:radius-curve-qwen2.5} we find that CAT is very sensitive to the change of radius $\epsilon$ while ER-CAT is less sensitive.
We also observe that when the radius is large (\textit{i.e.}, $\epsilon > 0.02$), the jailbreak robustness of ER-CAT is better than that of CAT in most cases.

\end{document}